\documentclass[10pt,twocolumn,letterpaper]{article}

\usepackage[pagenumbers]{cvpr} 
\usepackage{algorithm}
\usepackage{algpseudocode}
\usepackage{amsmath}
\usepackage{float}
\usepackage{amssymb}      

\usepackage{graphicx}
\usepackage{multirow}
\usepackage{subcaption}
\usepackage{array}
\usepackage{ragged2e}
\usepackage{amsthm}

\usepackage{booktabs}     
\usepackage{caption}      

\usepackage{subfiles}

\numberwithin{equation}{section}
\newtheorem{lemma}{Lemma}[section]
\newtheorem{theorem}[lemma]{Theorem}
\newtheorem{proposition}[lemma]{Proposition}
\newtheorem{corollary}[lemma]{Corollary}
\newtheorem{remark}[lemma]{Remark}
\usepackage{longtable} 

\usepackage[dvipsnames]{xcolor} 
\usepackage{pifont}             

\newcommand{\cmark}{\textcolor{Green}{\ding{51}}} 
\newcommand{\xmark}{\textcolor{Red}{\ding{55}}}    

\usepackage[table]{xcolor}  
\definecolor{RankFirst}{RGB}{255,242,204}  
\definecolor{RankSecond}{RGB}{255,204,153} 
\definecolor{RankThird}{RGB}{221,235,247} 
\definecolor{coralPink}{HTML}{ED028C} 

\definecolor{cvprblue}{rgb}{0.21,0.49,0.74}
\usepackage[pagebackref,breaklinks,colorlinks,allcolors=cvprblue]{hyperref}

\def\paperID{*****} 
\def\confName{CVPR}
\def\confYear{2026}

\title{ChordEdit: One-Step Low-Energy Transport for Image Editing}

\author{
\href{https://orcid.org/0009-0006-2839-3901}{\textcolor{black}{Liangsi Lu}}$^{1}$, 
\href{https://orcid.org/0000-0001-6000-3914}{\textcolor{black}{Xuhang Chen}}$^{2}$, 
\href{https://orcid.org/0009-0001-2511-4763}{\textcolor{black}{Minzhe Guo}}$^{1}$, 
\href{https://orcid.org/0009-0000-9387-5233}{\textcolor{black}{Shichu Li}}$^{3}$, 
\href{https://orcid.org/0000-0002-0099-539X}{\textcolor{black}{Jingchao Wang}}$^{4}$, 
\href{https://orcid.org/0009-0009-3928-7495}{\textcolor{black}{Yang Shi}}$^{1\dagger}$\\
$^{1}$Guangdong University of Technology\quad
$^{2}$Huizhou University\\
$^{3}$Shenzhen University\quad
$^{4}$Peking University\\
{\small $^{\dagger}$ Corresponding author: sudo.shiyang@gmail.com}\\
{\small Project page: \textcolor{coralPink}{\url{https://chordedit.github.io}}}
}

\begin{document}

\twocolumn[{%
\maketitle

\vspace{-11mm}

\begin{center}
  \includegraphics[width=\textwidth]{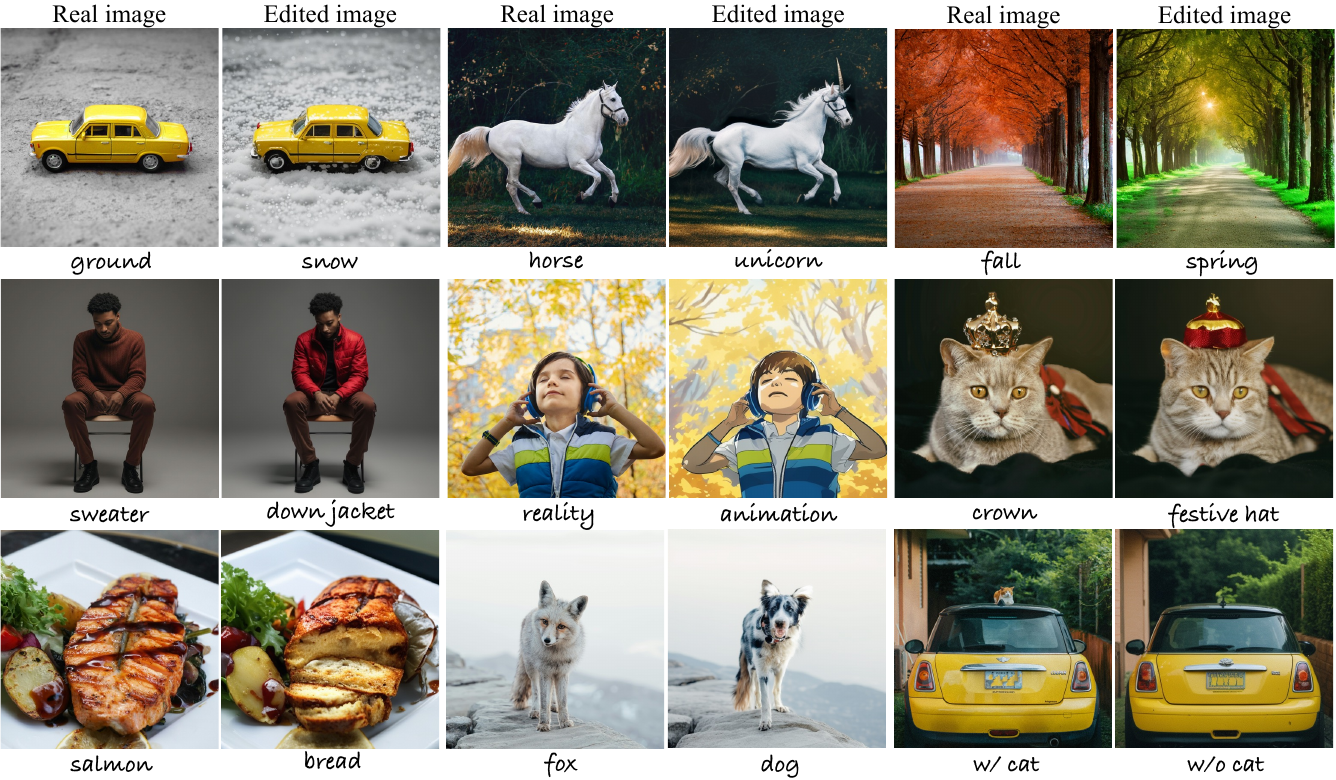}
  \vspace{-7mm}
  \captionof{figure}{\textbf{ChordEdit}. These examples demonstrate our model agnostic, training-free and inversion-free method operating on fast generative models. ChordEdit mitigates the failures of naive single-step editing by deriving a stable, low-energy control field based on optimal transport theory. This field's stability permits a single, large integration step, facilitating precise edits that preserve non-edited regions. Results shown use SD-Turbo (top two rows) and SwiftBrush-v2 (bottom row). Labels indicate the desired semantic change.}
  \label{fig:teaser}
\end{center}
}]

\begin{abstract}
The advent of one-step text-to-image (T2I) models offers unprecedented synthesis speed. However, their application to text-guided image editing remains severely hampered, as forcing existing training-free editors into a single inference step fails. This failure manifests as severe object distortion and a critical loss of consistency in non-edited regions, resulting from the high-energy, erratic trajectories produced by naive vector arithmetic on the models' structured fields. To address this problem, we introduce \textbf{ChordEdit}, a model agnostic, training-free, and inversion-free method that facilitates high-fidelity one-step editing. We recast editing as a transport problem between the source and target distributions defined by the source and target text prompts. Leveraging dynamic optimal transport theory, we derive a principled, low-energy control strategy. This strategy yields a smoothed, variance-reduced editing field that is inherently stable, facilitating the field to be traversed in a single, large integration step. A theoretically grounded and experimentally validated approach allows ChordEdit to deliver fast, lightweight and precise edits, finally achieving true real-time editing on these challenging models.
\end{abstract}

\section{Introduction}

The advent of one-step text-to-image (T2I) models, such as SD-Turbo~\cite{Sauer2024ADD}, SwiftBrush-v2~\cite{Dao2025SwiftBrushV2} and InstaFlow~\cite{liu2023instaflow}, has introduced a new paradigm of real-time image synthesis. By distilling large-scale diffusion models~\cite{song2023consistency,lin2024sdxl} into a compact, single-step inference pathway~\cite{liu2022flow,lipman2022flow,song2023consistency,luo2023latent,xu2023inversion}, these models offer unprecedented speed, promising truly interactive applications. This progress naturally raises the expectation that this real-time capability can be directly leveraged for the nuanced task of text-guided image editing.

However, this promise for flexible real-time text-guided image editing remains unmet. Existing one-step text-guided method~\cite{nguyen2025swiftedit} achieves fast performance by training dedicated networks, sacrificing model-agnostic flexibility and relying on precise inversion. A more flexible alternative, the training-free and inversion-free approach, typically computes an editing field by differencing the drifts conditioned on source and target prompts~\cite{xu2023inversion, kulikov2025flowedit}.
Despite its efficacy in traditional multi-step generators, this simple drifts approach fails when forced into the one-step models. The failure manifests as severe object distortion, where the edited entity is warped beyond recognition, and a critical loss of consistency in non-edited regions, causing the background and surrounding structures to disintegrate. These failure modes are visualized in Figure~\ref{fig:naive_show}. The root cause lies in the editing field computed via naive differencing. While one-step models are distilled to create stable, direct paths from noise to image, this distillation process yields a highly non-linear and sensitive mapping from the text condition to the vector field. Consequently, the naive editing field is inherently unstable, representing the arithmetic difference of two large-magnitude, divergent trajectories, resulting in an erratic, high-energy control field. Applying this volatile field in a single, large integration step tends to accumulate significant error, causing the observed distortions.

\begin{figure}[t]
    \centering
    \includegraphics[width=\linewidth]{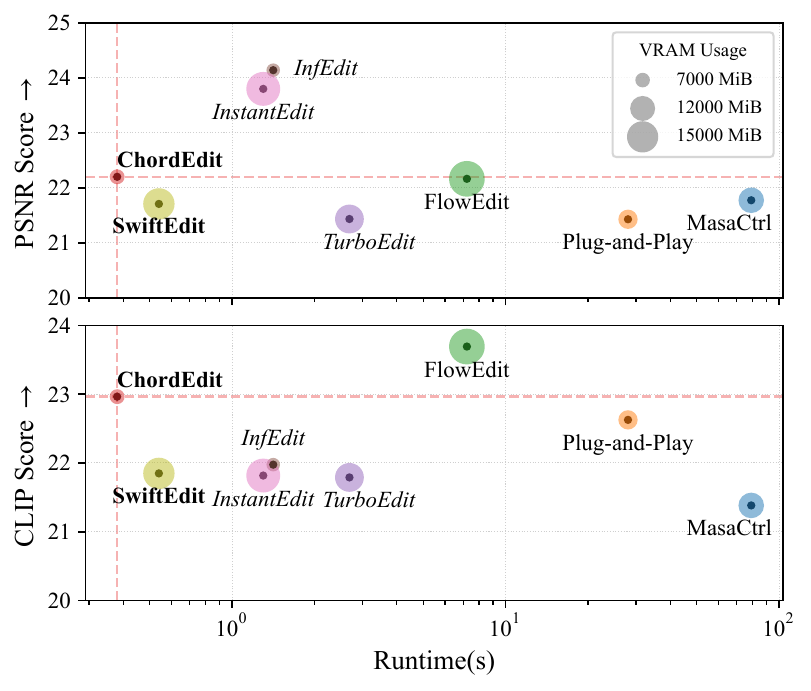}
    \vspace{-20pt}
\caption{Comparing ChordEdit (SD-Turbo) against \textbf{one-step}, \textit{few-step}, and multi-step editing methods on PIE-bench~\cite{ju2023direct}, evaluating performance on background consistency (PSNR), semantic alignment (CLIP, referring to CLIP-Edited)~\cite{radford2021learning}, and Runtime. Our method facilitates real-time text-guided editing while yielding highly competitive results.}
    \label{fig:sota_scatter}
    
\end{figure}

\begin{figure}[t]
\vspace{-8pt}
    \centering
    \includegraphics[width=0.75\linewidth]{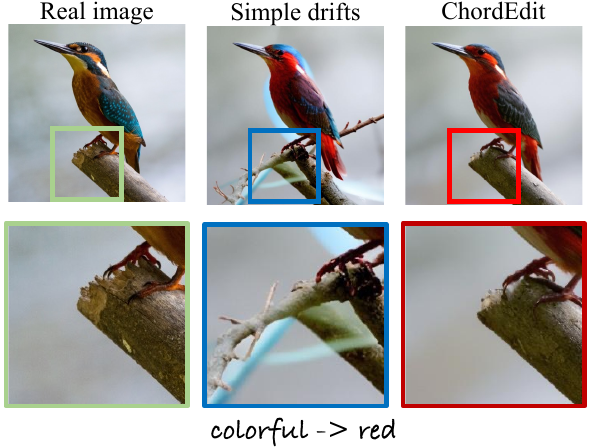}
  \vspace{-4pt}
    \caption{\textbf{One-Step Simple drift editing fails. ChordEdit preserves structure.} Simple drifts, a direct drift-difference from a one-step model, induce a high-energy, non-smooth vector field, yielding two disqualifying failures: (\textit{i}) severe \textbf{object distortion} and (\textit{ii}) \textbf{ background breakup} and spurious structures. Zoomed crops (bottom) highlight the distortions in \emph{Simple drifts} versus the faithful, photorealistic result of \emph{ChordEdit}.}
    \label{fig:naive_show}
\end{figure}

To overcome these challenges, we introduce \textbf{ChordEdit}, a training-free, inversion-free and lightweight method that facilitates high-fidelity on fast T2I models. We adopt a different perspective from simple vector arithmetic and recast the editing problem from the principled perspective of dynamic optimal transport (OT)~\cite{benamou2000computational}, seeking a low-energy chord to transport the source image distribution to the target. Our key contribution is the Chord Control Field, a theoretically-grounded, time-weighted average of the source and target drifts that replaces the instantaneous, erratic drift difference. This formulation acts as a potent temporal smoothing operator, yielding an inherently stable, low-energy field that can be traversed in a single, large integration step. Following this transport, a lightweight proximal refinement can be optionally applied to enhance target semantics. Our framework operates as a black-box by querying the model's velocity or equivalent field to compute our distinct control field, ensuring model agnosticism. Our experimental results on the PIE-bench~\cite{ju2023direct} benchmark demonstrate that our principled low-energy field addresses the core instability of one-step editing, achieving state-of-the-art efficiency while maintaining high background preservation and semantic fidelity.

\section{Related Work}

We provide a full discussion of related work in the Appendix. Prior work includes GAN-based editing~\cite{hertz2022prompt,tumanyan2023plug} and various diffusion/flow-based editors built on fast T2I models~\cite{song2023consistency, lin2024sdxl,Sauer2024ADD,liu2023instaflow, Dao2025SwiftBrushV2,kimconsistency}. These editors often require iterative, multi-step inversion~\cite{hertz2022prompt,tumanyan2023plug, mokady2023null,npi25,ju2023direct,proxedit24, edict23,aidi23,spdinv24,huberman2024edit} or few-step acceleration~\cite{deutch2024turboedit,gong2025instantedit,cora25,xu2025textvdb,si2025pix2pixzerocon,parmar2023zero}, making real-time interaction infeasible.

A central challenge lies in one-step editing. Training-free differential methods~\cite{meng2021sdedit, diffedit22}, such as InfEdit~\cite{xu2023inversion}, FlowEdit~\cite{kulikov2025flowedit}, are stable when averaged over multiple steps but collapse in the single-step limit due to high energy and variance. These failure modes are visualized in Figure~\ref{fig:naive_show}. Conversely, methods such as SwiftEdit~\cite{nguyen2025swiftedit} achieve one-step performance by training a dedicated inversion network, sacrificing model-agnostic flexibility~\cite{icd24, kawar2023imagic, zu2024cotflow}. \textbf{ChordEdit} operates in this challenging training-free, inversion-free, single-step regime. Instead of relying on multi-step averaging or trained inverters, we introduce a \emph{Chord control field} to stabilize the transport, achieving a low-energy, low-variance edit.

\begin{figure*}[t]
    \centering
    \includegraphics[width=0.92\linewidth]{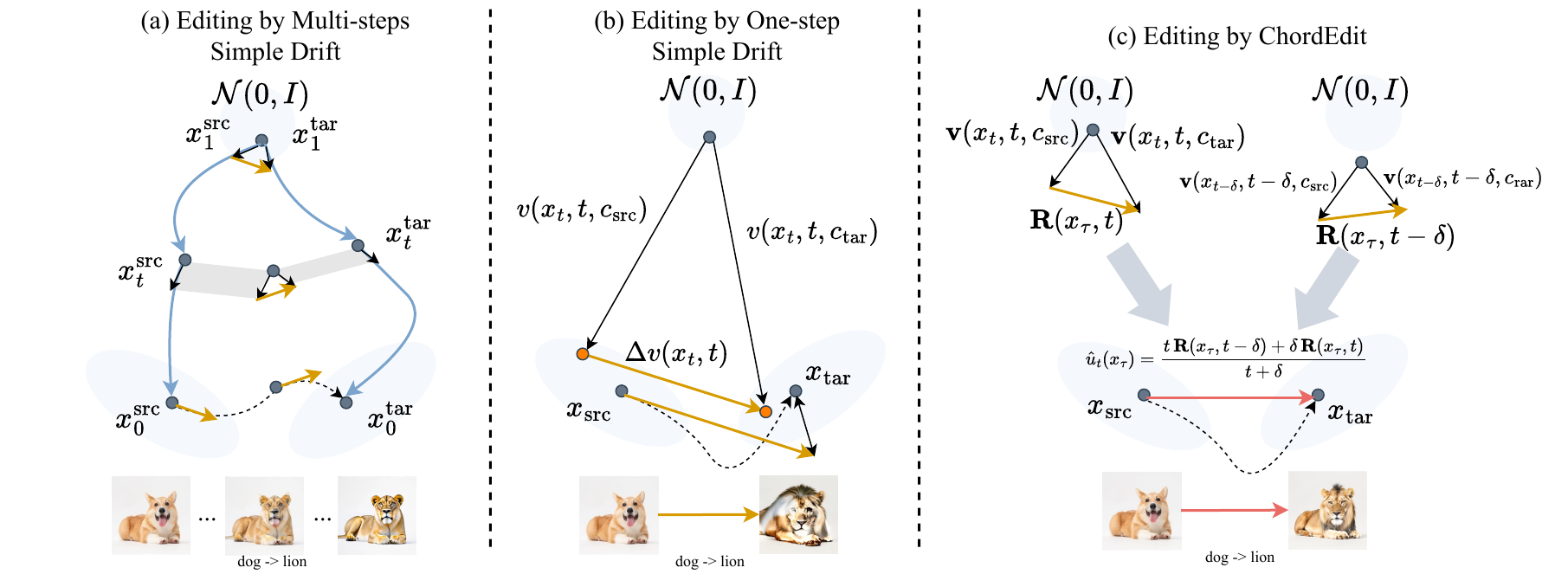}
    \vspace{-9pt}
\caption{\textbf{Comparison of editing field stability.} (a) \textbf{Multi-step Simple Drift}: In conventional multi-step diffusion, the iterative application of the simple drift $\Delta v$ ensures a stable trajectory. (b) \textbf{One-step Simple Drift}: In distilled models, the naive field $\Delta v(x_t, t)$ is high-energy and volatile. A single, large integration step (solid arrow) accumulates significant error and deviates significantly, as the erratic underlying path (dashed) confirms. (c) \textbf{Editing by ChordEdit (Ours)}: We derive a stable, low-energy \textbf{Chord Control Field} by time-averaging the \textit{observable} fields $\mathbf{R}(x_\tau, t)$ and $\mathbf{R}(x_\tau, t-\delta)$. This smoothed field facilitates an accurate, single-step transport (red arrow) that faithfully reaches the target $x_{\rm tar}$.} 
    \label{fig:method}
\end{figure*}

\section{Preliminaries}
\label{sec:preliminaries}
\subsection{Conditional Probability Flow and the Editing Problem}
\label{sec:preliminaries-flow}
Let $t\in[0,1]$ be the time, $x_t$ be the image state, and $c$ be the text condition.
A pre-trained text-to-image model induces a conditional probability flow with drift $v(x_t,t,c)$, defined as
\begin{equation}
\frac{d x_t}{dt}=v(x_t,t,c).
\end{equation}

We denote the distribution of $x_t$ as $p_t(x\mid c)$, where $p_1$ is the data distribution and $p_0$ is the prior distribution.
Given prompts $c_{\rm src}$ and $c_{\rm tar}$, we denote an initial image $x_{\rm src}:=x_1\sim p_{1}(x\mid c_{\rm src})$ and a edited image $x_{\rm tar}:=x_0\sim p_{0}(x\mid c_{\rm tar}).$
Editing amounts to transporting $x_1$ to $x_0$ by modifying the source flow with the instantaneous residual
\begin{equation}
\Delta v(x_t,t)\;=\;v(x_t,t,c_{\rm tar})-v(x_t,t,c_{\rm src}),
\end{equation}
which is the ideal continuous-time control aligning the two conditional dynamics.

\subsection{Observable Model at Noisy States}
Section~\ref{sec:preliminaries-flow} defines the ideal flow state $x_t$. We fix the editing anchor to the clean source state $x_\tau := x_1$. In practice we cannot access $x_t$ at arbitrary $t$, we therefore query the model at a synthetic noisy proxy $z \sim K_t(\cdot\mid x_\tau)$ drawn from a forward noising kernel $K_t(\cdot\mid x_\tau)$ that mimics the noise level at time $t$. Let $Q(z,t,c)$ denote the model’s observable output. For source/target prompts $c_{\rm src},c_{\rm tar}$ define the conditional residual $\Delta Q(z,t)\;=\;Q(z,t,c_{\rm tar})-Q(z,t,c_{\rm src})$.
Let $\mathcal{B}_t$ be a \emph{time-only linear map} from the codomain of $Q(\cdot,t,\cdot)$ to a fixed comparison space $U$ (e.g.\ drift/velocity units). The observable proxy field is
\begin{equation}
\mathbf{R}(x_\tau,t)\;=\;\mathbb{E}_{\,z\sim K_t(\cdot\mid x_\tau)}\!\big[\,\mathcal{B}_t\,\Delta Q(z,t)\,\big].
\label{eq:proxy-field}
\end{equation}

The expectation $\mathbb{E}[\cdot]$ is over the kernel randomness. In practice we use shared-noise Monte Carlo. If the model exposes the drift directly, we take $\mathcal{B}_t\equiv I$ and Eq.\eqref{eq:proxy-field} reduces to $\mathbb{E}[\Delta v(\cdot,t)]$. Although the theory is continuous in $t$, we evaluate $Q$ on a discrete grid $\{t,t-\delta\}$. This discretization does not alter the definitions above.

\subsection{Model Parameterizations and the Observable Output \texorpdfstring{$Q$}{Q}}
We instantiate $Q$ and $\mathcal{B}_t$ for common one-step models. In all cases $\mathcal{B}_t$ is linear and time-dependent only.
For noise-prediction models, such as SD-Turbo~\cite{Sauer2024ADD},
\begin{equation}
Q(z,t,c)=\hat\epsilon_\theta(z,t,c),\qquad
\mathcal{B}_t(\hat\epsilon)=A_t^{(\epsilon)}\,\hat\epsilon,
\end{equation}
where $A_t^{(\epsilon)}$, a function of the schedule $\alpha_t,\sigma_t$, maps to the drift/velocity comparison domain $U$ and $\hat\epsilon$ is predicted noise. Closed-form coefficients for $A_t^{(\epsilon)}$ are listed in the Appendix. For velocity models, such as InstaFlow~\cite{liu2023instaflow},
\begin{equation}
Q(z,t,c)=\mathbf{v}_\theta(z,t,c),\qquad \mathcal{B}_t\equiv I,
\end{equation}
where $\mathbf{v}_\theta$ is velocity prediction. Other parameterizations, such as score-to-drift, $x_0$ heads, or consistency models, fit the same template via a time-only linear $\mathcal{B}_t$. Details are deferred to the Appendix.

\section{ChordEdit}
\label{sec:methodology}

Our goal is a training-free, inversion-free editing scheme that remains stable under one-step inference. A Key challenge is that the ideal editing field is unknown, and its naive proxy is high-energy and irregular. As shown in Figure~\ref{fig:method}, we derive a low-energy estimator by integrating the dynamic optimal transport (OT) view with the observable model of Sec.~\ref{sec:preliminaries}.

\subsection{OT View: Editing as an Estimation Problem}
\label{sec:ot_view_revised}

We define the transport density $\rho_t(x)$ as the density evolving from the source boundary $\rho_1 = p_1(\cdot \mid c_{\rm src})$ to the target boundary $\rho_0 = p_0(\cdot \mid c_{\rm tar})$.
We define $u_t(x)$ as the editing vector field that drives this transport. The ideal $u_t$ is the one that solves the Benamou–Brenier dynamic OT problem:
\begin{equation}
\begin{aligned}
\min_{\rho,\,u}\quad
& \int_0^1\!\!\int \frac{1}{2}\,\|u_t(x)\|^2\,\rho_t(x)\,dx\,dt \\
\text{s.t.}\quad
& \partial_t \rho_t(x)+\nabla_x\!\cdot\!\big(\rho_t(x)\,u_t(x)\big)=0.
\end{aligned}
\label{eq:ot}
\end{equation}

This ideal field $u_t$ is unknown. We can only access it via the $\mathbf{R}(x_\tau, t)$, which was defined in Eq.~\eqref{eq:proxy-field}. We posit a measurement model where the observable $\mathbf{R}$ is the true field $u_t$ corrupted by a zero-mean noise term $\varepsilon_t$:
\begin{equation}
\mathbf{R}(x_\tau,t)\;=\;u_t(x_\tau)\;+\;\varepsilon_t, \quad \mathbb{E}[\varepsilon_t]=0.
\label{eq:measurement_model}
\end{equation}

A naive approach would use $u_{\rm nai} = \mathbf{R}$ as the control, but this noisy measurement's high-energy nature renders it unstable for single-step integration.

\subsection{Chord Control: A Low-Energy Local Estimator}
\label{sec:chord_control_revised}
To resolve the instability of the naive field, we seek a locally smoothed, low-energy estimator $\hat u_t$ for the true field $u_t$.
Fix a short window $[t-\delta,t]$ and an anchor $x_\tau$. We determine a locally constant estimate $u \in \mathbb{R}^d$ by minimizing a strictly convex quadratic surrogate $\Phi_t(u;x_\tau)$:
\begin{equation}
\Phi_t(u;x_\tau)\;=\;t\,\|u-\hat u_{t-\delta}(x_\tau)\|^2+\int_{t-\delta}^{t}\!\|u-\mathbf{R}(x_\tau,\xi)\|^2\,d\xi.
\label{eq:phi}
\end{equation}

This objective which derived in the Appendix balances a recursive energy prior $\hat u_{t-\delta}$ against agreement with the new measurements $\mathbf{R}$.
Setting $\nabla_u\Phi_t=0$ yields the unique minimizer:
\begin{equation}
u_t^\star(x_\tau)\;=\;\frac{t}{t+\delta}\,\hat u_{t-\delta}(x_\tau)\;+\;\frac{1}{t+\delta}\int_{t-\delta}^{t}\mathbf{R}(x_\tau,\xi)\,d\xi.
\label{eq:optimal_integral}
\end{equation}

Leveraging first-order causal approximations, namely $\hat u_{t-\delta} \approx \mathbf{R}(x_\tau, t-\delta)$ and $\int_{t-\delta}^{t}\mathbf{R}(x_\tau,\xi)d\xi \approx \delta \mathbf{R}(x_\tau, t)$, we obtain the practical \emph{Chord Control Field}:
\begin{equation}
\hat u_t(x_\tau)\;=\;\frac{t\,\mathbf{R}(x_\tau,t-\delta)+\delta\,\mathbf{R}(x_\tau,t)}{t+\delta}\;.
\label{eq:chord_main}
\end{equation}

Eq.~\eqref{eq:chord_main} is a \emph{causal} one–sided kernel smoothing ($\hat u = K_\delta * \mathbf R$) of the naive field $\mathbf R$, where the kernel satisfies $K_\delta\!\ge\!0$, $\int K_\delta=1$, and $\operatorname{supp} K_\delta\subset[0,\delta]$. As proven in Appendix, this averaging provides critical numerical stability. By Jensen's inequality, it is an $L^2$–contraction, $\int\!\|\hat u\|^2 \le \int\!\|\mathbf R\|^2$, suppressing high-energy spikes.
Since differentiation commutes with the temporal convolution, the $L^\infty$-norms (supremum norms) of the field, its time derivative, and its spatial gradient are all contracted, i.e., $\|\hat u\|_\infty \le \|\mathbf R\|_\infty$, $\|\partial_t \hat u\|_\infty \le \|\partial_t \mathbf R\|_\infty$, and $\|\nabla_x \hat u\|_\infty \le \|\nabla_x \mathbf R\|_\infty$.
This directly tightens the standard consistency proxy for explicit Euler
\begin{equation}
\mathcal C(u):=\|\partial_t u\|_\infty+\|\nabla_x u\|_\infty\,\|u\|_\infty,
\end{equation}
which, when applied to our chord field $\hat u$ and the naive field $\mathbf R$, yields $\mathcal C_{\rm cho}\le \mathcal C_{\rm nai}$. This reduces the local truncation error $M_f$, which is bounded by $\mathcal C(u)$, and thus tightens the global $O(h)$ error bound for our $h=1$ step (Appendix Thm. C.6). Furthermore, the step-size stability margin, governed by $\|\nabla_x u\|_\infty$, is preserved or improved (Appendix Prop. D.7). Eq.~\eqref{eq:chord_main} thus enforces a smoother, lower-energy path, mitigating the numerical fragility of the naive approach, as shown in Figure~\ref{fig:particle}.

\begin{figure}[t]
    \centering
    \includegraphics[width=0.80\linewidth]{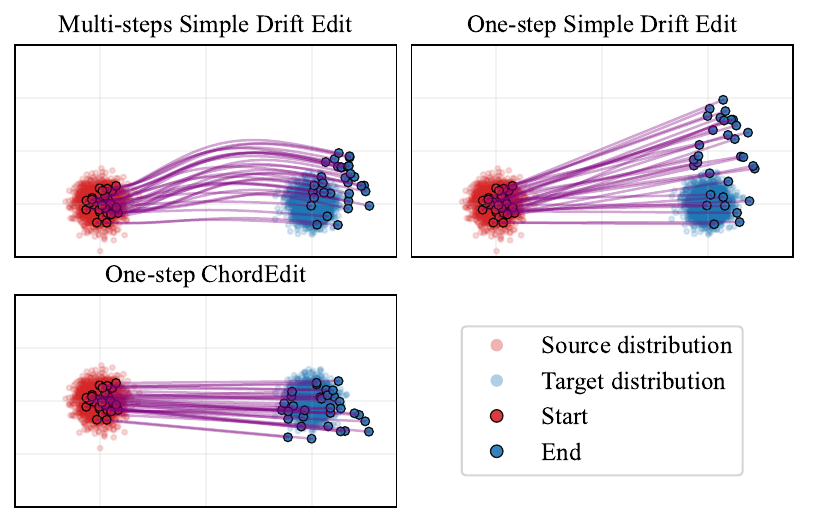}
    \vspace{-10pt}
    \caption{\textbf{2D Toy Example of Distribution transport.} Naive residual fields are high-energy and unstable under coarse discretization. ChordEdit computes a low-energy field (Eq.~\eqref{eq:chord_main}) that drives particles straight to the target with minimal deviation, facilitating reliable one-step transport.}
    \label{fig:particle}
\end{figure}

\begin{figure}
    \centering
    \includegraphics[width=0.9\linewidth]{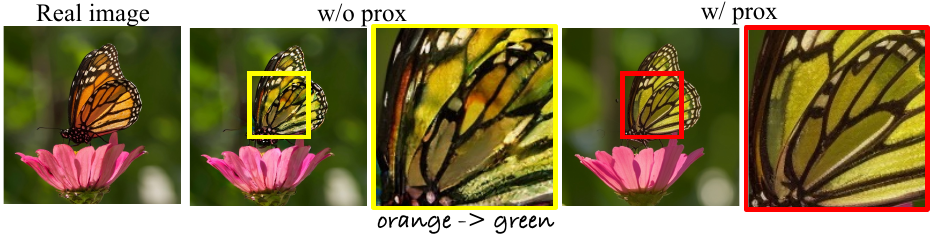}
 \vspace{-8pt}
 \caption{\textbf{Effect of the Proximal Refinement.} The refinement step enhances the target editing semantics.}
    \label{fig:ablation_prox_show}
\end{figure}

\subsection{Proximal Refinement}
Given the prediction $x^{\rm pred}$ from one-step transport, we introduce an \emph{optional} proximal refinement. This step, implemented as a single forward pass using only $c_{\rm tar}$, serves to amplify target semantics for challenging edits:
\begin{equation}
\operatorname{prox}\!\left(x^{\rm pred},t_{\rm c},c_{\rm tar}\right)
\;=\;\mathcal{B}_{t_{\rm c}}\,
Q\!\left(x^{\rm pred},t_{\rm c},c_{\rm tar}\right),
\label{eq:prox}
\end{equation}
implemented as one native “predict-$x_0$” call with a fixed noise draw. This plug-and-play step requires no re-inversion, and is not part of the transport. All energy metrics are computed before it. This approach separates structure-preserving transport from semantic enhancement, similar to strategies in multi-step methods~\cite{kulikov2025flowedit}. We visualize the effect of this step in Figure~\ref{fig:ablation_prox_show}, and provide a detailed ablation study in the Appendix.

\subsection{ChordEdit Algorithm}
Algorithm~\ref{alg:chordedit} presents a single-noise ($n=1$) implementation of ChordEdit in VAE latent space. This highly efficient $n=1$ configuration is the default setting used for all quantitative benchmarks and qualitative comparisons in our experiments. We provide a detailed analysis and ablation study on the effect of using multiple noise samples in Section~\ref{sec:noise-analysis}. The full implementation details for the multi-noise variant are deferred to the Appendix. All intermediate variables for $\hat{u}$ are parallel-computable within one batch, rendering the transport 1-NFE (Number of Function Evaluations).

\begin{algorithm}[t]
\caption{Simplified algorithm for ChordEdit}
\label{alg:chordedit}
\begin{algorithmic}[1]
\State \textbf{Inputs:} source image $x_{\rm src}$; prompts $c_{\rm src}, c_{\rm tar}$; step time $t$; window $\delta$; step scale $\lambda$; Proximal Refinement time $t_{\rm c}$.
\State \textbf{Output:} edited image $x_{\rm tar}$.
\State \textbf{Init:} $x_{\rm in} \leftarrow x_{\rm src}$
\State \(
\hat{u} \leftarrow \frac{t\,\mathbf{R}(x_{\rm in},\,t-\delta) + \delta\,\mathbf{R}(x_{\rm in},\,t)}{t+\delta}
\) 
\State $x^{\rm pred} \leftarrow x_{\rm in} + \lambda\,\hat{u}$
\State $x_{\rm tar} \leftarrow \operatorname{prox}\left(x^{\mathrm{pred}},\, t_{\mathrm c},\, c_{\mathrm{tar}}\right)$ \Comment{Optional}
\State \textbf{Return} $x_{\rm tar}$
\end{algorithmic}
\end{algorithm}

\begin{table*}[t]
\centering
\caption{\textbf{Quantitative comparison on PIE-bench}~\cite{ju2023direct}.
\textbf{T-free}: Training-free. \textbf{I-free}: Inversion-free.
The best/second/third results in each numeric column are highlighted with
\colorbox{RankFirst}{yellow}/\colorbox{RankSecond}{orange}/\colorbox{RankThird}{blue} backgrounds, respectively.
A comprehensive table with extended metrics (e.g., SSIM, Structure Distance) is available in Appendix.}
\label{tab:main_comparison_final}
\resizebox{\textwidth}{!}{%
\begin{tabular}{@{}l l | ccc | cc | cc | cccc@{}}
\toprule
\multirow{2}{*}{\textbf{Type}} & \multirow{2}{*}{\textbf{Method}} &
\multicolumn{3}{c|}{\textbf{Consistency}} &
\multicolumn{2}{c|}{\textbf{CLIP Semantics}} &
\multicolumn{2}{c|}{\textbf{Properties}} &
\multicolumn{4}{c}{\textbf{Efficiency}} \\
\cmidrule(lr){3-5} \cmidrule(lr){6-7} \cmidrule(lr){8-9} \cmidrule(lr){10-13}
& &
\textbf{PSNR}$\uparrow$ &
\textbf{MSE}${}_{\text{10}^3}\downarrow$ &
\textbf{LPIPS}${}_{\text{10}^3}\downarrow$ &
\textbf{Whole}$\uparrow$ &
\textbf{Edited}$\uparrow$ &
\textbf{T-free} &
\textbf{I-free} &
\textbf{Step}$\downarrow$ &
\textbf{NFE}$\downarrow$ &
\textbf{Runtime(s)}$\downarrow$ &
\textbf{VRAM(MiB)}$\downarrow$ \\
\midrule
\multirow{5}{*}{\shortstack[c]{Multi-step \\ ($\ge$ 30 steps)}} &
DDIM + MasaCtrl~\cite{songdenoising,cao2023masactrl} &
21.25 & 8.58 & 106.59 & 24.13 & 21.13 & \cmark & \xmark & 50 & 100 & 55.20 & 12272 \\
& Direct Inversion + MasaCtrl~\cite{ju2023direct,cao2023masactrl} &
21.78 & 7.99 & \cellcolor{RankThird}87.38 & 24.42 & 21.38 & \cmark & \xmark & 50 & 100 & 79.10 & 12272 \\
& DDIM + PnP~\cite{songdenoising,tumanyan2023plug} &
21.26 & 8.42 & 113.58 & 25.45 & 22.54 & \cmark & \xmark & 50 & 100 & 28.01 & \cellcolor{RankThird}9262 \\
& Direct Inversion + PnP~\cite{ju2023direct,tumanyan2023plug} &
21.43 & 8.10 & 106.26 & \cellcolor{RankThird}25.48 & \cellcolor{RankThird}22.63 & \cmark & \xmark & 50 & 100 & 28.03 & \cellcolor{RankThird}9262 \\
& FlowEdit (SD3)~\cite{kulikov2025flowedit} &
22.17 & 7.69 & 104.81 & \cellcolor{RankFirst}26.64 & \cellcolor{RankFirst}23.69 & \cmark & \cmark & \cellcolor{RankThird}33 & 33 & 7.22 & 17140 \\
\midrule
\multirow{3}{*}{\shortstack[c]{Few-step \\ (4 steps)}} &
TurboEdit (SDXL-Turbo)~\cite{deutch2024turboedit} &
21.44 & 9.49 & 108.60 & 24.66 & 21.79 & \cmark & \cmark &
\cellcolor{RankSecond}4 & \cellcolor{RankThird}4 & 2.69 & 13826 \\
& InfEdit (SD1.4)~\cite{xu2023inversion} &
\cellcolor{RankFirst}24.14 & \cellcolor{RankThird}6.82 & \cellcolor{RankFirst}55.69 &
24.89 & 21.88 & \cmark & \cmark &
\cellcolor{RankSecond}4 & \cellcolor{RankThird}4 & 1.41 & \cellcolor{RankFirst}6502 \\
& InstantEdit (PeRFlow-SD1.5)~\cite{gong2025instantedit} &
\cellcolor{RankThird}23.80 & \cellcolor{RankFirst}4.21 & \cellcolor{RankSecond}60.92 &
24.97 & 21.82 & \cmark & \xmark &
\cellcolor{RankSecond}4 & 8 & 1.30 & 16270 \\
\midrule
\multirow{4}{*}{\shortstack[c]{One-step}} &
SwiftEdit (SwiftBrush-v2)~\cite{nguyen2025swiftedit} &
21.71 & 8.22 & 91.22 & 24.93 & 21.85 & \xmark & \xmark &
\cellcolor{RankFirst}1 & \cellcolor{RankSecond}2 & \cellcolor{RankThird}0.54 & 15060 \\
& \textbf{ChordEdit (SwiftBrush-v2)} &
22.04 & 7.13 & 111.22 & 25.12 & 22.58 & \cmark & \cmark &
\cellcolor{RankFirst}1 & \cellcolor{RankSecond}2 & \cellcolor{RankSecond}\textbf{0.38} & \cellcolor{RankSecond}6988 \\
& \textbf{ChordEdit (w/o prox, SD-Turbo)} &
\cellcolor{RankSecond}23.89 & \cellcolor{RankSecond}5.05 & 88.36 & 24.97 & 21.87 & \cmark & \cmark &
\cellcolor{RankFirst}1 & \cellcolor{RankFirst}1 & \cellcolor{RankFirst}\textbf{0.20} & \cellcolor{RankSecond}6988 \\
& \textbf{ChordEdit (SD-Turbo)} &
22.20 & 6.84 & 128.25 & \cellcolor{RankSecond}25.58 & \cellcolor{RankSecond}22.96 & \cmark & \cmark &
\cellcolor{RankFirst}1 & \cellcolor{RankSecond}2 & \cellcolor{RankSecond}\textbf{0.38} & \cellcolor{RankSecond}6988 \\
\bottomrule
\end{tabular}%
}
\end{table*}

\begin{figure*}[t]
    \centering
    \includegraphics[width=0.94\linewidth]{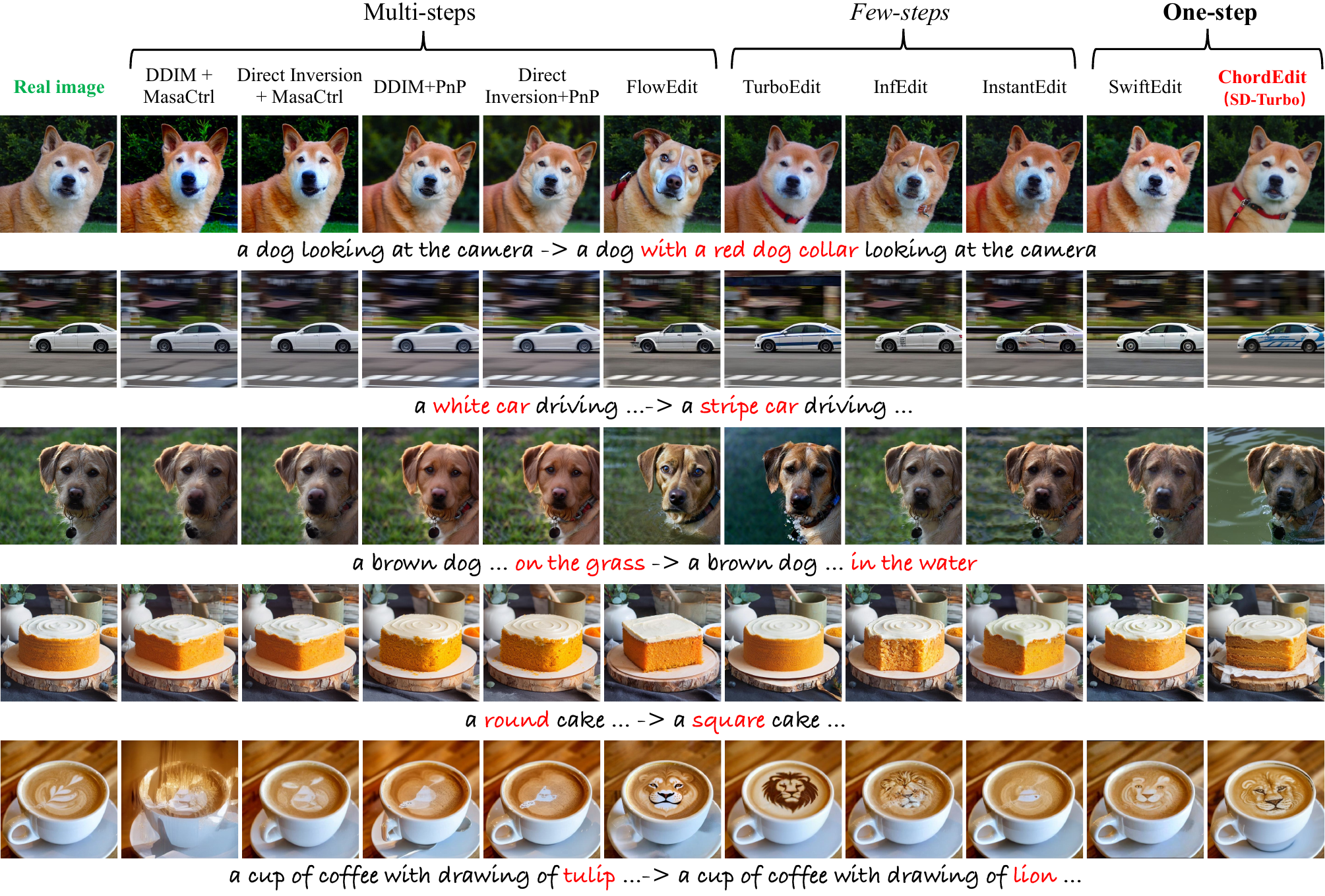}
    \vspace{-7pt}
    \caption{\textbf{Comparison of edited results.} Real images are in the first column. Prompts are noted under each row.}
    \label{fig:sota_grid}
\end{figure*}

\section{Experiments}

\subsection{Experimental Setup}

\paragraph{Dataset and evaluation metrics.} We conduct our empirical evaluation on the PIE-bench~\cite{ju2023direct} benchmark, a standard dataset for instruction-based image 
editing on 512$\times$512 images. This benchmark comprises 700 samples distributed across 10 distinct editing categories, where each instance provides a source image, textual prompts, and a precise ground-truth mask delineating the edit region. We assess performance along two critical axes: background fidelity and semantic alignment. Background fidelity is quantified using Peak Signal-to-Noise Ratio (PSNR) and Mean Squared Error (MSE), computed exclusively on the non-edited regions. Semantic alignment with the target instruction is measured via CLIP-Whole and CLIP-Edited scores~\cite{radford2021learning}, which evaluate the textual-visual consistency of the entire image and the modified region, respectively.

\paragraph{Implementation details.} Code is available in supplementary material. All experiments are conducted on a single NVIDIA Titan 24GB GPU. Our framework consists of the 1-NFE Chord transport step and an optional 1-NFE proximal refinement. To present the best overall performance, our default ChordEdit (NFE=2) includes this refinement (parameters: $n=1$, $t=0.90$, $\delta=0.15$, $\lambda=1.00$, $t_c=0.30$). These defaults reflect a clear trade-off, as $\delta$ balances stability against semantic strength, while $t$ and $\lambda$ jointly control the transport's semantic intensity, and $t_c$ amplifies the final edit. We also report the transport-only ChordEdit (w/o prox) in Table~\ref{tab:main_comparison_final}. For a fair comparison of background fidelity, all methods were similarly evaluated without the use of any internal or external protective masks. We compare against representative multi-/few-/one-step editors. We note that many state-of-the-art editors, especially few-step methods, are architecturally coupled with specific models, such as TurboEdit with SDXL-Turbo, InstantEdit with PeRFlow. Therefore, to fairly evaluate each method's optimal performance, we follow standard practice and report results using their officially specified models. To isolate the gains from our method, a direct comparison on a unified model SwiftBrush-v2 is provided in the One-step category of Table~\ref{tab:main_comparison_final}.

\subsection{Comparison with Prior Methods}

\paragraph{Quantitative Results.}
In Table~\ref{tab:main_comparison_final}, we compare ChordEdit with multi-step, few-step, and one-step editors on PIE-bench. Overall, ChordEdit's training-free, inversion-free design achieves state-of-the-art efficiency, requiring less than half the VRAM of SwiftEdit on the same model while maintaining competitive editing quality. Compared to multi-step methods, ChordEdit shows superior background preservation and highly competitive semantics, yet is significantly faster, such as 19$\times$ faster than FlowEdit and over 208$\times$ faster than Direct Inversion. Against few-step editors, ChordEdit leads in semantic fidelity while being at least 3.4$\times$ faster than the fastest alternative.
In the One-step category, our transport-only ChordEdit (w/o prox) validates our core innovation: it achieves a high PSNR at NFE=1, demonstrating our Chord Control Field's stability. Building on this, our full ChordEdit adds an optional refinement to further enhance semantic alignment, achieving the best overall balance without model-specific training, inversion, or protective masking.

\begin{figure*}[t]
    \centering
    \includegraphics[width=0.98\linewidth]{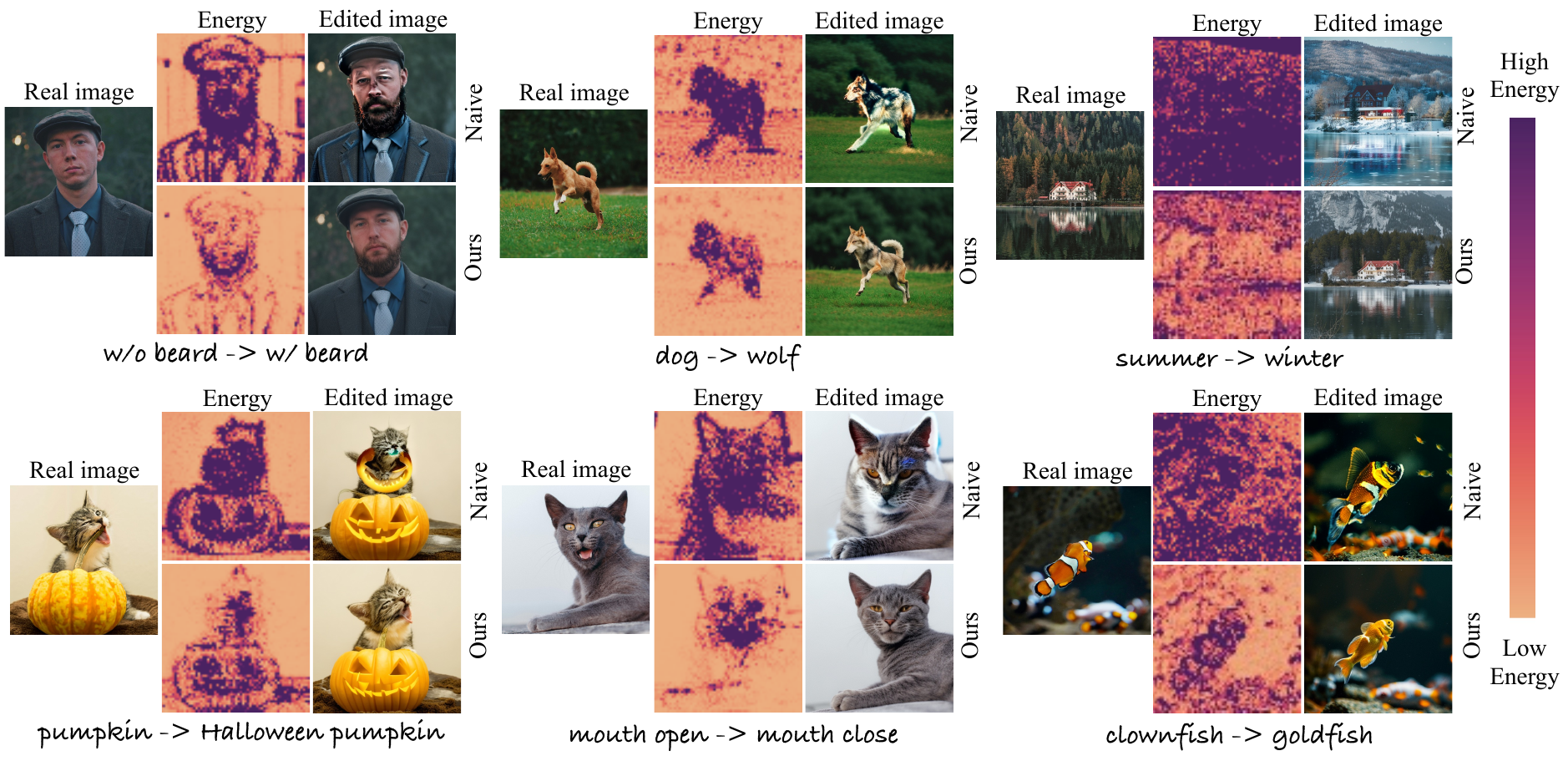}
    \vspace{-12pt}
\caption{\textbf{Qualitative Comparison and Energy Visualization.} 
We compare ChordEdit (Ours) against the naive baseline ($\delta=0$, Naive).
The naive method's high-energy field leads to artifacts and background corruption.
Our ChordEdit derives a stable, low-energy field, resulting in high-fidelity edits that preserve object identity and non-edited regions.
Results shown used SwiftBrush-v2 (first column) and SD-Turbo (second and third columns). Energy plots are computed as $E=\frac{1}{S\,C}\sum_{s=1}^{S}\sum_{\text{channel}}(\hat u_{t_s})^2$.}
\label{fig:energy}
\end{figure*}

\begin{figure}[t]
  \centering
  \begin{minipage}[t]{0.43\linewidth}
    \centering
    \includegraphics[width=\linewidth]{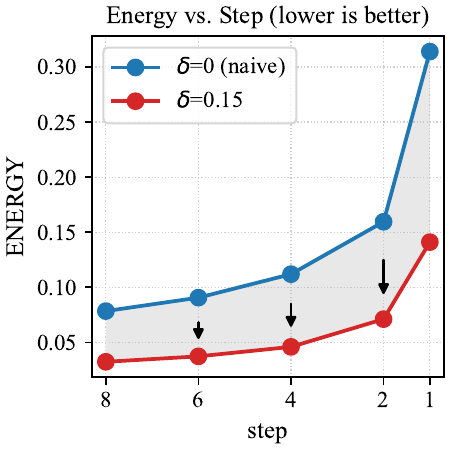}    
  \end{minipage}
  \begin{minipage}[t]{0.43\linewidth}
    \centering
    \includegraphics[width=\linewidth]{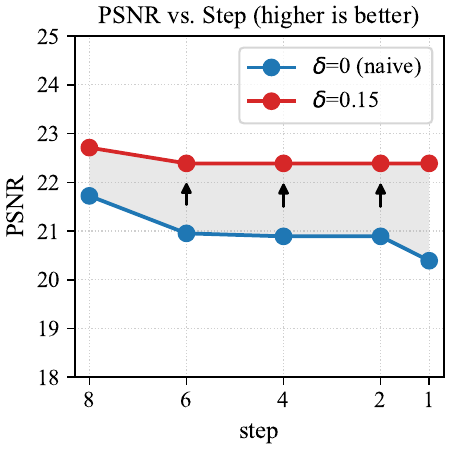}
  \end{minipage}
    \begin{minipage}[t]{0.43\linewidth}
    \centering
    \includegraphics[width=\linewidth]{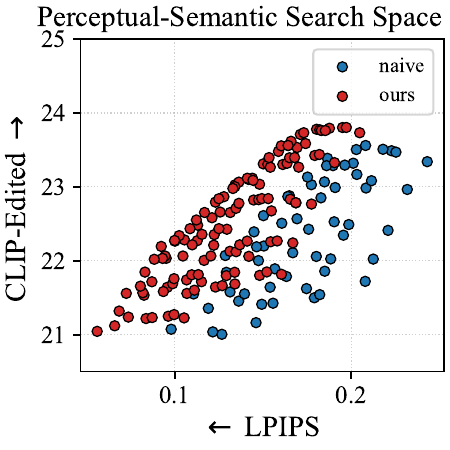}    
  \end{minipage}
  \begin{minipage}[t]{0.43\linewidth}
    \centering
    \includegraphics[width=\linewidth]{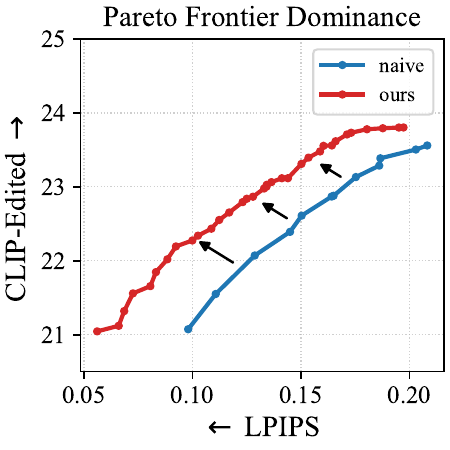}
  \end{minipage}
  \vspace{-7pt}
\caption{\textbf{(top) ChordEdit Stability as a function of Integration Steps.} We compare ChordEdit ($\delta=0.15$, red) against the naive baseline ($\delta=0$, blue). The naive field's energy spikes as step $S \to 1$, confirming its unsuitability for large steps. ChordEdit's energy remains low. Consequently, the naive method's background consistency (PSNR) collapses, while ChordEdit maintains high PSNR. \textbf{(bottom) Analysis of the Perceptual-Semantic Trade-off.} Our method ($\delta \ne 0$, red) strictly Pareto-dominates the naive baseline ($\delta=0$, blue), consistently achieving superior semantic alignment for any given level of perceptual distortion.}
  \label{fig:ccf_effort}
\end{figure}

\paragraph{Qualitative Results.}
In Figure~\ref{fig:sota_grid}, visual comparisons confirm our quantitative findings. ChordEdit consistently adheres to the prompt with exceptional background preservation, avoiding the artifacts or identity failures seen in multi-step methods such as Direct Inversion+PnP. It also demonstrates a strong balance of semantics and consistency compared to other few-step methods. A user study (details in Appendix) further supports this, with participants overwhelmingly preferring our method for both editing semantics (42.5\%) and background preservation (48.3\%).

\section{Ablation Study}

We conduct ablation studies to validate ChordEdit's core components. This section analyzes: our Chord Control Field against the naive $\delta=0$ baseline, the impact of noise samples, the decoupled contributions of our transport and refinement steps, and model-agnostic performance. A detailed hyperparameter analysis for $t$, $\delta$, $\lambda$, and $t_c$ is deferred to the Appendix.

\subsection{Analysis of the Chord Control Field}

We validate our Chord Control Field (CCF) by ablating its temporal smoothing interval $\delta$. Setting $\delta=0$ degenerates our CCF into the naive baseline. We report its unweighted, discrete Benamou–Brenier kinetic energy
$\bar E=\frac{1}{S\,C\,H\,W}\sum_{s=1}^{S}\sum_{\text{dims}}(\hat u_{t_s})^2$, 
where $S$ is the total step count and the inner sum $\sum_{\text{dims}}$ is over the channel $C$, height $H$, and width $W$ dimensions.
The quantitative results in Figure~\ref{fig:ccf_effort} (top) confirm our hypothesis: as the step count $S \to 1$, the naive field's energy spikes and its PSNR collapses, while our CCF ($\delta=0.15$) remains stable. This stability provides a superior perceptual-semantic trade-off, as our method strictly Pareto-dominates the naive baseline (Figure~\ref{fig:ccf_effort}, bottom). Figure~\ref{fig:energy} qualitatively confirms this low-energy path prevents artifacts and keeps non-edited regions.

\begin{figure}
    \centering
    \includegraphics[width=\linewidth]{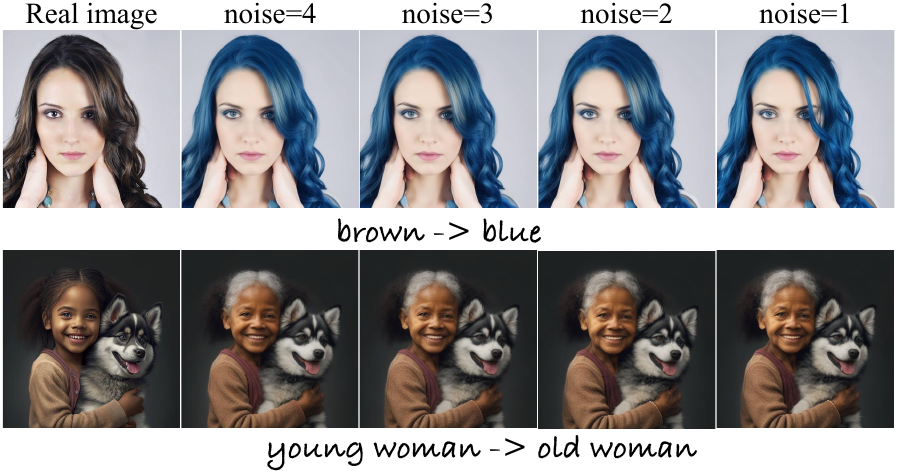}
    \vspace{-23pt}
   \caption{\textbf{Effect of the number of noise samples.} This figure qualitatively confirms that increasing the number of noise samples yields negligible marginal returns.}
   \vspace{-12pt}
    \label{fig:noise_ab_show}
\end{figure}

\begin{figure}[t]
  \centering
  \begin{minipage}[t]{0.42\linewidth}
    \centering
    \includegraphics[width=\linewidth]{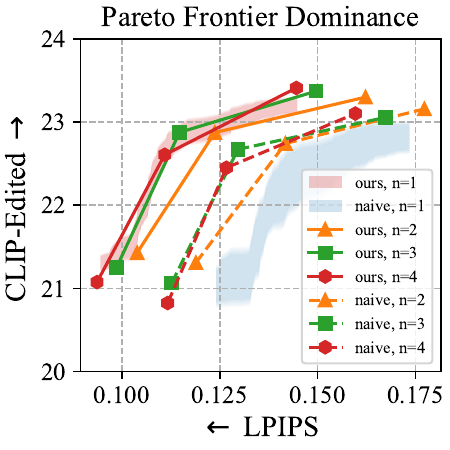}    
  \end{minipage}
  \begin{minipage}[t]{0.56\linewidth}
    \centering
    \includegraphics[width=\linewidth]{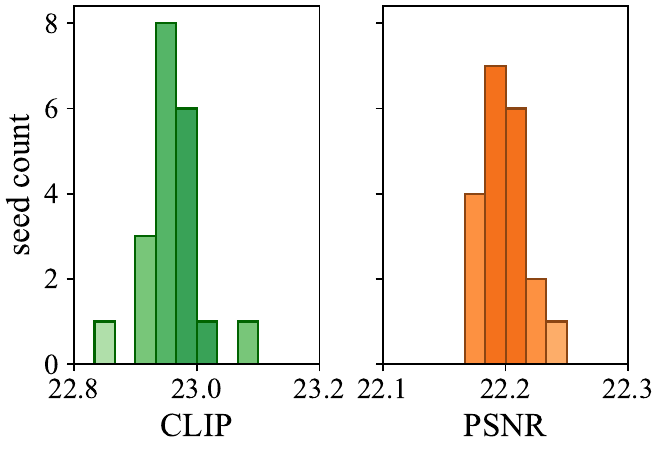}
  \end{minipage}
  \vspace{-13pt}
\caption{\textbf{Pareto dominance and Seed robustness.} Left: LPIPS–CLIP Pareto fronts~\cite{zhang2018unreasonable} comparing ChordEdit (solid) to the naive baseline (dashed). Shaded regions denote the envelope across seeds. Fronts for ChordEdit with $n=1\ldots4$ are nearly overlapping and dominate the naive counterparts, indicating negligible marginal returns from multi-noise. Right: histograms of CLIP-Edited and PSNR across 20 seeds for single-noise ($n{=}1$). Both distributions are tight (CLIP CoV $0.20\%$, PSNR CoV $0.07\%$), confirming that ChordEdit one noise is effectively insensitive to the random seed.}
  \label{fig:pareto_noise_and_count}
\end{figure}

\subsection{Analysis of Noise}
\label{sec:noise-analysis}

We analyze the effect of Monte-Carlo noise samples $n$. Classical practice requires $n > 1$ to reduce estimator variance, which scales inversely with the number of samples. We hypothesize that Chord Control Field's smoothed, differential construction possesses an intrinsically low variance, rendering additional samples unnecessary.

Our empirical results validate this hypothesis:
(i) Increasing $n$ yields negligible marginal returns for ChordEdit. As shown in Figure~\ref{fig:pareto_noise_and_count} (left), the Pareto fronts for $n=1, 2, 3, 4$ are nearly indistinguishable, a finding qualitatively confirmed by Figure~\ref{fig:noise_ab_show}.
(ii) ChordEdit's $n=1$ performance is stable, whereas the naive exhibits high variance, relying on $n>1$ samples to stabilize. Nonetheless, ChordEdit's $n=1$ front strictly Pareto-dominates the naive even at naive's $n=4$ samples.
(iii) ChordEdit is robust to seed variation at $n=1$. The histograms in Figure~\ref{fig:pareto_noise_and_count} (right) show tight performance distributions, confirming our $n=1$ performance is stable and reliable.

These results establish that ChordEdit achieves state-of-the-art, seed-robust performance using only a single noise sample. Our method's geometric control and intrinsic variance reduction achieve the same precision as traditional methods that rely on costly MC averaging.

\subsection{Analysis of Transport and Refinement}
\label{sec:ablation_transport}

\begin{table}[t]
\centering
\caption{\textbf{Ablation of Chord transport field and refinement.} Our Chord field drives consistency (PSNR), while the prox step boosts semantics (CLIP-Edited). Full metrics are in the Appendix.}
\label{tab:ablation_components}
\vspace{-5pt}
\resizebox{0.9\linewidth}{!}{%
\begin{tabular}{l | cc | cc | c}
\toprule
\multirow{2}{*}{\textbf{Method}} & \multicolumn{2}{c|}{\textbf{Naive ($\delta=0$)}} & \multicolumn{2}{c|}{\textbf{Ours ($\delta=0.15$)}} & \multirow{2}{*}{\textbf{NFE}} \\
\cmidrule(lr){2-3} \cmidrule(lr){4-5}
 & \textbf{PSNR} $\uparrow$ & \textbf{CLIP-Edited} $\uparrow$ & \textbf{PSNR} $\uparrow$ & \textbf{CLIP-Edited} $\uparrow$ & \\
\midrule
w/o prox & 21.89 & 20.83 & \textbf{23.89} & 21.87 & 1 \\
w/ prox & 21.38 & 21.96 & 22.20 & \textbf{22.96} & 2 \\
\bottomrule
\end{tabular}%
}
\end{table}

We ablate our framework in Table~\ref{tab:ablation_components} to show its modularity. The Chord field (w/o prox) prioritizes consistency, achieving a high 23.89 PSNR. This conservative transport, with a 21.87 CLIP-Edited score, is then semantically amplified by the optional prox step, which boosts the score to 22.96. This design effectively separates high-fidelity transport from semantic amplification.

\subsection{Analysis of T2I Models}

\begin{table}[t]
\centering
\caption{\textbf{Quantitative comparison on different T2I models.} Our method (Ours) consistently outperforms the naive baseline across all tested models. Full details are provided in the Appendix.}
\label{tab:model_comparison}
\vspace{-5pt}
\resizebox{0.7\linewidth}{!}{
\begin{tabular}{l c c c c}
\toprule
\multirow{2}{*}{\textbf{T2I Method}} & \multicolumn{2}{c}{\textbf{PSNR} $\uparrow$} & \multicolumn{2}{c}{\textbf{CLIP-Edited} $\uparrow$} \\
\cmidrule(lr){2-3} \cmidrule(lr){4-5}
 & {\small Naive} & {\small Ours} & {\small Naive} & {\small Ours} \\
\midrule
InstaFlow~\cite{liu2023instaflow} & 22.05 & 23.05 & 20.19 & 21.39 \\
SwiftBrush-v2~\cite{Dao2025SwiftBrushV2} & 20.52 & 22.04 & 21.06 & 22.58 \\
SD-Turbo~\cite{Sauer2024ADD} & 21.38 & \textbf{22.20} & 21.96 & \textbf{22.96} \\
\bottomrule
\end{tabular}%
}
\end{table}

We validate ChordEdit's model-agnostic claim. Table~\ref{tab:model_comparison} confirms our method's robust applicability, consistently outperforming the naive baseline. On SD-Turbo, for instance, it boosts PSNR from 21.38 to 22.20 and the CLIP-Edited score from 21.96 to 22.96.

\section{Conclusion}
\label{sec:conclusion}

We introduced ChordEdit, a training-free, inversion-free framework that solves the instability of one-step image editing. Our method replaces the naive, high-energy drift difference with a principled Chord Control Field. This field's temporal smoothing creates a stable, low-energy transport path, facilitating a single, large integration step while preserving non-edited regions. ChordEdit achieves state-of-the-art efficiency with a runtime of 0.38s and a low VRAM footprint. This speed is achieved not by sacrificing quality, but by maintaining high fidelity and strong semantic alignment, outperforming the naive baseline and other one-step methods. This high-fidelity performance is robustly model-agnostic and seed-insensitive, even with a single noise sample. ChordEdit achieves true real-time, high-fidelity, and consistent generative image editing. We acknowledge the potential for misuse and we intend our work for creative and assistive applications. A full discussion on societal impacts is in the Appendix.

{
    \small
    \bibliographystyle{ieeenat_fullname}
    \bibliography{main}

@String(CVPR= {IEEE Conf. Comput. Vis. Pattern Recog.})

@String(ICCV= {Int. Conf. Comput. Vis.})

@String(ECCV= {Eur. Conf. Comput. Vis.})

@String(ICLR = {Int. Conf. Learn. Represent.})

@String(AAAI = {AAAI})

@String(CVPR  = {CVPR})

@String(ICCV  = {ICCV})

@String(ECCV  = {ECCV})

@String(ICLR  = {ICLR})

@inproceedings{song2023consistency,
  title={Consistency Models},
  author={Song, Yang and Dhariwal, Prafulla and Chen, Mark and Sutskever, Ilya},
  booktitle={International Conference on Machine Learning},
  pages={32211--32252},
  year={2023},
  organization={PMLR}
}

@misc{luo2023latent,
      title={Latent Consistency Models: Synthesizing High-Resolution Images with Few-Step Inference}, 
      author={Simian Luo and Yiqin Tan and Longbo Huang and Jian Li and Hang Zhao},
      year={2023},
      eprint={2310.04378},
      archivePrefix={arXiv},
      primaryClass={cs.CV}
}

@inproceedings{liu2023instaflow,
  title={Instaflow: One step is enough for high-quality diffusion-based text-to-image generation},
  author={Liu, Xingchao and Zhang, Xiwen and Ma, Jianzhu and Peng, Jian and Liu, Qiang},
  booktitle={International Conference on Learning Representations},
  year={2024}
}

@article{hertz2022prompt,
  title = {Prompt-to-Prompt Image Editing with Cross Attention Control},
  author = {Hertz, Amir and Mokady, Ron and Tenenbaum, Jay and Aberman, Kfir and Pritch, Yael and Cohen-Or, Daniel},
  journal = {arXiv preprint arXiv:2208.01626},
  year = {2022},
}

@article{benamou2000computational,
  title={A computational fluid mechanics solution to the Monge-Kantorovich mass transfer problem},
  author={Benamou, Jean-David and Brenier, Yann},
  journal={Numerische Mathematik},
  volume={84},
  number={3},
  pages={375--393},
  year={2000},
  publisher={Springer-Verlag Berlin/Heidelberg}
}

@inproceedings{cao2023masactrl,
  title={Masactrl: Tuning-free mutual self-attention control for consistent image synthesis and editing},
  author={Cao, Mingdeng and Wang, Xintao and Qi, Zhongang and Shan, Ying and Qie, Xiaohu and Zheng, Yinqiang},
  booktitle={Proceedings of the IEEE/CVF international conference on computer vision},
  pages={22560--22570},
  year={2023}
}

@inproceedings{parmar2023zero,
  title={Zero-shot image-to-image translation},
  author={Parmar, Gaurav and Kumar Singh, Krishna and Zhang, Richard and Li, Yijun and Lu, Jingwan and Zhu, Jun-Yan},
  booktitle={ACM SIGGRAPH 2023 conference proceedings},
  pages={1--11},
  year={2023}
}

@inproceedings{tumanyan2023plug,
  title={Plug-and-play diffusion features for text-driven image-to-image translation},
  author={Tumanyan, Narek and Geyer, Michal and Bagon, Shai and Dekel, Tali},
  booktitle={Proceedings of the IEEE/CVF conference on computer vision and pattern recognition},
  pages={1921--1930},
  year={2023}
}

@inproceedings{songdenoising,
  title={Denoising Diffusion Implicit Models},
  author={Song, Jiaming and Meng, Chenlin and Ermon, Stefano},
  booktitle={International Conference on Learning Representations}
}

@inproceedings{mokady2023null,
  title={Null-text inversion for editing real images using guided diffusion models},
  author={Mokady, Ron and Hertz, Amir and Aberman, Kfir and Pritch, Yael and Cohen-Or, Daniel},
  booktitle={Proceedings of the IEEE/CVF conference on computer vision and pattern recognition},
  pages={6038--6047},
  year={2023}
}

@inproceedings{kulikov2025flowedit,
  title={Flowedit: Inversion-free text-based editing using pre-trained flow models},
  author={Kulikov, Vladimir and Kleiner, Matan and Huberman-Spiegelglas, Inbar and Michaeli, Tomer},
  booktitle={Proceedings of the IEEE/CVF International Conference on Computer Vision},
  pages={19721--19730},
  year={2025}
}

@inproceedings{nguyen2025swiftedit,
  title={Swiftedit: Lightning fast text-guided image editing via one-step diffusion},
  author={Nguyen, Trong-Tung and Nguyen, Quang and Nguyen, Khoi and Tran, Anh and Pham, Cuong},
  booktitle={Proceedings of the Computer Vision and Pattern Recognition Conference},
  pages={21492--21501},
  year={2025}
}

@inproceedings{deutch2024turboedit,
  title={Turboedit: Text-based image editing using few-step diffusion models},
  author={Deutch, Gilad and Gal, Rinon and Garibi, Daniel and Patashnik, Or and Cohen-Or, Daniel},
  booktitle={SIGGRAPH Asia 2024 Conference Papers},
  pages={1--12},
  year={2024}
}

@article{xu2023inversion,
  title={Inversion-Free Image Editing with Natural Language},
  author={Xu, Sihan and Huang, Yidong and Pan, Jiayi and Ma, Ziqiao and Chai, Joyce},
  journal={CoRR},
  year={2023}
}

@inproceedings{gong2025instantedit,
  title={InstantEdit: Text-Guided Few-Step Image Editing with Piecewise Rectified Flow},
  author={Gong, Yiming and Zhu, Zhen and Zhang, Minjia},
  booktitle={Proceedings of the IEEE/CVF International Conference on Computer Vision},
  pages={16808--16817},
  year={2025}
}

@article{ju2023direct,
  title={PnP Inversion: Boosting Diffusion-based Editing with 3 Lines of Code},
  author={Ju, Xuan and Zeng, Ailing and Bian, Yuxuan and Liu, Shaoteng and Xu, Qiang},
  journal={International Conference on Learning Representations ({ICLR})},
  year={2024}
}

@article{icd24,
  author  = {Nikita Starodubcev and Mikhail Khoroshikh and Artem Babenko and Dmitry Baranchuk},
  title   = {Invertible Consistency Distillation for Text-Guided Image Editing in Around 7 Steps},
  journal = {arXiv:2406.14539},
  year    = {2024},
  url     = {https://arxiv.org/abs/2406.14539}
}

@article{diffedit22,
  author  = {Guillaume Couairon and Jakob Verbeek and Holger Schwenk and Matthieu Cord},
  title   = {DiffEdit: Diffusion-Based Semantic Image Editing with Mask Guidance},
  journal = {arXiv:2210.11427},
  year    = {2022},
  url     = {https://arxiv.org/abs/2210.11427}
}

@inproceedings{edict23,
  author    = {Bram Wallace and Akash Gokul and Nikhil Naik},
  title     = {EDICT: Exact Diffusion Inversion via Coupled Transformations},
  booktitle = {CVPR},
  year      = {2023},
  url       = {https://openaccess.thecvf.com/content/CVPR2023/papers/Wallace_EDICT_Exact_Diffusion_Inversion_via_Coupled_Transformations_CVPR_2023_paper.pdf}
}

@inproceedings{aidi23,
  author    = {Zhizhong Pan and Yijun Li and Xue Bai and Zhuowen Tu and Ming-Hsuan Yang},
  title     = {Effective Real Image Editing with Accelerated Iterative Diffusion Inversion},
  booktitle = {ICCV},
  year      = {2023},
  url       = {https://openaccess.thecvf.com/content/ICCV2023/html/Pan_Effective_Real_Image_Editing_with_Accelerated_Iterative_Diffusion_Inversion_ICCV_2023_paper.html}
}

@inproceedings{npi25,
  author    = {Daiki Miyake and Akihiro Iohara and Yu Saito and Toshiyuki Tanaka},
  title     = {Negative-Prompt Inversion: Fast Image Inversion for Editing with Text-Guided Diffusion Models},
  booktitle = {WACV},
  year      = {2025},
  url       = {https://openaccess.thecvf.com/content/WACV2025/html/Miyake_Negative-Prompt_Inversion_Fast_Image_Inversion_for_Editing_with_Text-Guided_Diffusion_WACV_2025_paper.html}
}

@inproceedings{proxedit24,
  author    = {Bo Han and Yujun Shen and Yingqing He and Lei Zhang and Jingren Zhou},
  title     = {ProxEdit: Improving Tuning-Free Real Image Editing with Proximal Guidance},
  booktitle = {WACV},
  year      = {2024},
  url       = {https://openaccess.thecvf.com/content/WACV2024/papers/Han_ProxEdit_Improving_Tuning-Free_Real_Image_Editing_With_Proximal_Guidance_WACV_2024_paper.pdf}
}

@inproceedings{spdinv24,
  author    = {Ruibin Li and Yujun Shen and others},
  title     = {Source Prompt Disentangled Inversion for Boosting Image Editability with Diffusion Models},
  booktitle = {ECCV},
  year      = {2024},
  url       = {https://www.ecva.net/papers/eccv_2024/papers_ECCV/papers/03804.pdf}
}

@inproceedings{cora25,
  author    = {Amir Alimohammadi and Aryan Mikaeili and Sauradip Nag and Negar Hassanpour and Andrea Tagliasacchi and Ali Mahdavi Amiri},
  title     = {Cora: Correspondence-Aware Image Editing Using Few-Step Diffusion},
  booktitle = {SIGGRAPH},
  year      = {2025},
  url       = {https://cora-edit.github.io/paper.pdf}
}

@inproceedings{Sauer2024ADD,
  author    = {Axel Sauer and Dominik Lorenz and Andreas Blattmann and Robin Rombach},
  title     = {Adversarial Diffusion Distillation},
  booktitle = {Computer Vision -- ECCV 2024},
  series    = {Lecture Notes in Computer Science},
  volume    = {15144},
  pages     = {87--103},
  publisher = {Springer},
  year      = {2024},
  doi       = {10.1007/978-3-031-73016-0_6},
  url       = {https://arxiv.org/abs/2311.17042}
}

@incollection{Dao2025SwiftBrushV2,
  author    = {Trung Dao and Thuan Hoang Nguyen and Thanh Le and Duc Vu and Khoi Nguyen and Cuong Pham and Anh Tran},
  title     = {SwiftBrush V2: Make Your One-Step Diffusion Model Better Than Its Teacher},
  booktitle = {Computer Vision -- ECCV 2024},
  series    = {Lecture Notes in Computer Science},
  volume    = {15140},
  pages     = {176--192},
  publisher = {Springer},
  year      = {2025},   
  doi       = {10.1007/978-3-031-73007-8_11},
  url       = {https://arxiv.org/abs/2408.14176}
}

@article{zu2024cotflow,
  title   = {COT Flow: Learning Optimal-Transport Image Sampling and Editing by Contrastive Pairs},
  author  = {Zu, Xinrui and Tao, Qian},
  journal = {arXiv preprint arXiv:2406.12140},
  year    = {2024},
  url     = {https://arxiv.org/abs/2406.12140}
}

@inproceedings{kawar2023imagic,
  title     = {Imagic: Text-Based Real Image Editing with Diffusion Models},
  author    = {Kawar, Bahjat and Zada, Shiran and Lang, Oran and Tov, Omer and Chang, Huiwen and Dekel, Tali and Mosseri, Inbar and Irani, Michal},
  booktitle = {Proceedings of the IEEE/CVF Conference on Computer Vision and Pattern Recognition (CVPR)},
  year      = {2023},
  url       = {https://openaccess.thecvf.com/content/CVPR2023/papers/Kawar_Imagic_Text-Based_Real_Image_Editing_With_Diffusion_Models_CVPR_2023_paper.pdf}
}

@inproceedings{xu2025textvdb,
  title     = {Textualize Visual Prompt for Image Editing via Diffusion Bridge},
  author    = {Xu, Pengcheng and Fan, Qingnan and Kou, Fei and Qin, Shuai and Gu, Hong and Zhao, Ruoyu and Ling, Charles and Wang, Boyu},
  booktitle = {Proceedings of the AAAI Conference on Artificial Intelligence},
  year      = {2025},
  url       = {https://ojs.aaai.org/index.php/AAAI/article/view/35483}
}

@article{si2025pix2pixzerocon,
  title   = {Contrastive Learning Guided Latent Diffusion Model for Image-to-Image Translation},
  author  = {Si, Qi and Wang, Bo and Zhang, Zhao},
  journal = {arXiv preprint arXiv:2503.20484},
  year    = {2025},
  url     = {https://arxiv.org/abs/2503.20484}
}

@article{liu2022flow,
  title={Flow straight and fast: Learning to generate and transfer data with rectified flow},
  author={Liu, Xingchao and Gong, Chengyue and Liu, Qiang},
  journal={arXiv preprint arXiv:2209.03003},
  year={2022}
}

@article{lipman2022flow,
  title={Flow matching for generative modeling},
  author={Lipman, Yaron and Chen, Ricky TQ and Ben-Hamu, Heli and Nickel, Maximilian and Le, Matt},
  journal={arXiv preprint arXiv:2210.02747},
  year={2022}
}

@inproceedings{radford2021learning,
  title={Learning transferable visual models from natural language supervision},
  author={Radford, Alec and Kim, Jong Wook and Hallacy, Chris and Ramesh, Aditya and Goh, Gabriel and Agarwal, Sandhini and Sastry, Girish and Askell, Amanda and Mishkin, Pamela and Clark, Jack and others},
  booktitle={International conference on machine learning},
  pages={8748--8763},
  year={2021},
  organization={PmLR}
}

@inproceedings{zhang2018unreasonable,
  title={The unreasonable effectiveness of deep features as a perceptual metric},
  author={Zhang, Richard and Isola, Phillip and Efros, Alexei A and Shechtman, Eli and Wang, Oliver},
  booktitle={Proceedings of the IEEE conference on computer vision and pattern recognition},
  pages={586--595},
  year={2018}
}

@article{meng2021sdedit,
  title={Sdedit: Guided image synthesis and editing with stochastic differential equations},
  author={Meng, Chenlin and He, Yutong and Song, Yang and Song, Jiaming and Wu, Jiajun and Zhu, Jun-Yan and Ermon, Stefano},
  journal={arXiv preprint arXiv:2108.01073},
  year={2021}
}

@article{lin2024sdxl,
  title={Sdxl-lightning: Progressive adversarial diffusion distillation},
  author={Lin, Shanchuan and Wang, Anran and Yang, Xiao},
  journal={arXiv preprint arXiv:2402.13929},
  year={2024}
}

@inproceedings{kimconsistency,
  title={Consistency Trajectory Models: Learning Probability Flow ODE Trajectory of Diffusion},
  author={Kim, Dongjun and Lai, Chieh-Hsin and Liao, Wei-Hsiang and Murata, Naoki and Takida, Yuhta and Uesaka, Toshimitsu and He, Yutong and Mitsufuji, Yuki and Ermon, Stefano},
  booktitle={The Twelfth International Conference on Learning Representations}
}

@inproceedings{huberman2024edit,
  title={An edit friendly ddpm noise space: Inversion and manipulations},
  author={Huberman-Spiegelglas, Inbar and Kulikov, Vladimir and Michaeli, Tomer},
  booktitle={Proceedings of the IEEE/CVF Conference on Computer Vision and Pattern Recognition},
  pages={12469--12478},
  year={2024}
}
}


\maketitlesupplementary
\appendix

\tableofcontents

\section{Full Related Work}

\paragraph{GAN-based image editing.}
Prior to the dominance of diffusion models, generative adversarial networks (GANs) provided strong latent-space controllability. This included latent-space traversal methods~\cite{hertz2022prompt, tumanyan2023plug}, encoder-based inversion for real image editing, text-driven manipulation, and CLIP-guided zero-shot domain adaptation. While offering intuitive control, these methods often face challenges in domain generalization and high-resolution reconstruction for real-world images.

\paragraph{Fast One-Step T2I Backbones.}
The primary enabler for real-time editing is the advent of high-speed generators. These models, often distilled from large-scale diffusion models~\cite{song2023consistency, lin2024sdxl}, can synthesize high-fidelity images in a single step. Key examples include adversity-matching or rectified-flow-based generators like SD-Turbo~\cite{Sauer2024ADD}, InstaFlow~\cite{liu2023instaflow}, and SwiftBrush-v2~\cite{Dao2025SwiftBrushV2}. These backbones provide the foundation for our work, but their fast, non-linear dynamics pose unique challenges for control.

\paragraph{Text-guided editing with diffusion/flow models.}
Prior work on editing with these backbones falls into several categories with distinct trade-offs:
\begin{itemize}
    \item \textbf{Training-free, Inversion-Required.} This is a common multi-step paradigm. Methods first reconstruct a latent representation of the source image via inversion, then steer the generation process using attention control or guidance, such as in PnP~\cite{hertz2022prompt, tumanyan2023plug}, NPI~\cite{mokady2023null, npi25}, or ProxEdit~\cite{ju2023direct, proxedit24, edict23, aidi23, spdinv24}. While effective, their reliance on iterative inversion and multi-step sampling (e.g., 30-50 steps) makes real-time application infeasible.

    \item \textbf{Training-free, Inversion-Free.} These methods avoid costly per-image inversion, often relying on short trajectories~\cite{meng2021sdedit, diffedit22}. Differential update strategies, such as FlowEdit~\cite{kulikov2025flowedit} and InfEdit~\cite{xu2023inversion}, are stabilized by sample averaging over multiple steps. However, when forced into the one-step limit, this approach collapses, concentrating high energy and variance into a single, unstable transport step, leading to the failures.

    \item \textbf{Accelerated and Few-Step Editing.} Leveraging fast backbones like SDXL-Turbo~\cite{Sauer2024ADD}, methods such as TurboEdit~\cite{deutch2024turboedit} and InstantEdit~\cite{gong2025instantedit, cora25} significantly reduce latency. However, they still require 4-8 steps and often rely on inversion, falling short of true one-step, instant interaction~\cite{xu2025textvdb, si2025pix2pixzerocon}.
    
    \item \textbf{Training-based One-Step Editing.} To achieve true one-step editing, methods like SwiftEdit~\cite{icd24, nguyen2025swiftedit, kawar2023imagic} are proposed. SwiftEdit highlights that one-step editing depends on accurate one-step inversion. It achieves this by training a \emph{dedicated inversion network} to predict the noise, enabling a one-step reconstruction and edit. This reliance on extra training, however, sacrifices the model-agnostic, training-free nature that is critical for broad applicability. COT Flow~\cite{zu2024cotflow} uses post-training for one-step transport but is limited to low-resolution 64$\times$64 editing.
\end{itemize}

\paragraph{Our Method.}
Different from GAN-based editing and multi-step diffusion editors, \textbf{ChordEdit} operates in the challenging training-free, inversion-free, single-step regime. 
Instead of relying on multi-step differential updates (i.e. InfEdit/ FlowEdit) or trained inversion networks (i.e. SwiftEdit), we introduce a \emph{Chord control field}. This field is constructed directly in the observable residual domain to average and stabilize the high-energy control signal. Paired with a proximal refinement, our method achieves a low-energy, low-variance transport, finally facilitating consistent editing on fast backbones without training or inversion.

\section{From Dynamic Optimal Transport to the Chord Control Field}
\label{sec:chordedit_ot}

\paragraph{Standing primitives (from the preliminaries).}

Given prompts $c_{\rm src}$ and $c_{\rm tar}$, we draw the source and target endpoint images as
\[
x_{\rm src}:=x_1 \sim p_{1}(x\mid c_{\rm src}),
\qquad
x_{\rm tar}:=x_0 \sim p_{0}(x\mid c_{\rm tar}).
\]
Throughout, $x\in\mathbb{R}^d$ denotes the spatial variable and $t\in[0,1]$ is the (diffusion/editing) time, with the convention that $t=1$ corresponds to the source endpoint and $t=0$ to the target endpoint.

\subsection{Definitions used in the derivation}

\paragraph{(D1) Editing density path.}
We denote by $\{\rho_t\}_{t\in[0,1]}$ a time-indexed family of densities with boundary conditions
\begin{equation}
\label{eq:bb-bc}
\rho_{1}(\cdot)=p_{1}(\cdot\mid c_{\rm src}),
\qquad
\rho_{0}(\cdot)=p_{0}(\cdot\mid c_{\rm tar}).
\end{equation}

\paragraph{(D2) Editing vector field.}
The \emph{editing vector field} $u_t:\mathbb{R}^d\!\to\!\mathbb{R}^d$ is the (complete) probability flow that transports $\rho_1$ to $\rho_0$ via the continuity equation
\begin{equation}
\label{eq:continuity}
\partial_t \rho_t(x)+\nabla\!\cdot\!\big(\rho_t(x)\,u_t(x)\big)=0
\quad\text{for }t\in(0,1).
\end{equation}
We emphasize that $u_t$ is \emph{not} an additive residual on top of another reference field; it is the unique driver of the editing transport in our formulation.

\paragraph{(D3) Observable surrogate of $u_t$ at an anchor.}
Fix an \emph{anchor} $x_\tau$ (in practice we take $x_\tau:=x_{\rm src}$). Let $Q(z,t,c)$ be the model’s observable at noisy state $z$ and time $t$ (e.g., noise/velocity prediction).
Let $\mathcal{B}_t$ be a time-dependent linear map that converts the model’s units to velocity units, and let
$K_t(\cdot\mid x_\tau)$ denote the corruption/noising kernel that produces $z$ conditioned on $x_\tau$ at time $t$.
Define the observable surrogate
\begin{equation}
\label{eq:R-def}
\mathbf{R}(x_\tau,t)
\;:=\;
\mathbb{E}_{z\sim K_t(\cdot\mid x_\tau)}
\!\big[\;\mathcal{B}_t\big(Q(z,t,c_{\rm tar})-Q(z,t,c_{\rm src})\big)\big].
\end{equation}

\paragraph{(A1) Local observability (measurement model).}
At the anchor and within a short temporal window, the surrogate is an unbiased noisy measurement of the editing field:
\begin{equation}
\label{eq:meas}
\mathbf{R}(x_\tau,t)\;=\;u_t(x_\tau)\;+\;\varepsilon_t,
\qquad
\mathbb{E}[\varepsilon_t]=0.
\end{equation}

\paragraph{(A2) Short-window local homogeneity.}
Fix a small window $[t-\delta,t]$ with $\delta>0$.
Within the anchor’s neighborhood, $u_\xi(x)\approx u$ is (approximately) constant w.r.t.\ both $x$ and $\xi\in[t-\delta,t]$; the local mass factor $\rho_\xi(x)$ can be absorbed as a (positive) scalar weight.

\paragraph{(A3) Recursive energy prior.}
The kinetic energy accumulated on $[0,t-\delta]$ induces a quadratic prior around the previous estimate $\hat u_{t-\delta}(x_\tau)$ with weight proportional to the elapsed time $t$.

\subsection{Dynamic OT objective with $t$ as progress}

The (unregularized) Benamou--Brenier dynamic OT problem directly in the time variable $t$ reads
\begin{equation}
\label{eq:bb}
\begin{aligned}
\min_{\rho,\,u}\quad 
& \int_0^1\!\!\int \frac{1}{2}\,\|u_t(x)\|^2\,\rho_t(x)\,dx\,dt \\[2pt]
\text{s.t.}\quad 
& \partial_t \rho_t(x)+\nabla\!\cdot\!\big(\rho_t(x)\,u_t(x)\big)=0,\\
& \rho_{1}=p_1(\cdot\mid c_{\rm src}),\ \ \rho_{0}=p_0(\cdot\mid c_{\rm tar}).
\end{aligned}
\end{equation}
This formulation treats $u_t$ as the \emph{complete} field that transports $\rho_1$ to $\rho_0$; no reference flow is introduced.

\subsection{Local, myopic surrogate of \eqref{eq:bb} and MAP objective}

To obtain a causal, single-step estimator of $u_t$ at the anchor, we combine (A1)--(A3) over the short window $[t-\delta,t]$ into the following convex quadratic objective:
\begin{equation}
\label{eq:Phi}
\Phi_t(u;\,x_\tau)\;=\;
t\,\big\|u-\hat u_{t-\delta}(x_\tau)\big\|^2
\;+\;
\int_{t-\delta}^{t}\!\big\|u-\mathbf{R}(x_\tau,\xi)\big\|^2\,d\xi.
\end{equation}
The first term encodes the recursive energy prior (A3); the second term enforces local agreement with the measurements (A1) under local homogeneity (A2).
All global density factors can be absorbed into the (time) weights without changing the closed form below.

\subsection{Closed-form minimizer on the window}

Differentiating \eqref{eq:Phi} w.r.t.\ $u$ and setting the gradient to zero gives
\begin{equation}
\label{eq:normal-eq}
2t\,(u-\hat u_{t-\delta})\;+\;2\!\int_{t-\delta}^{t}\!(u-\mathbf{R}(x_\tau,\xi))\,d\xi\;=\;0,
\end{equation}
whence the unique minimizer is
\begin{equation}
\label{eq:window-opt}
u_t^\star(x_\tau)\;=\;
\frac{t}{t+\delta}\,\hat u_{t-\delta}(x_\tau)
\;+\;
\frac{1}{t+\delta}\,\int_{t-\delta}^{t}\mathbf{R}(x_\tau,\xi)\,d\xi.
\end{equation}
Equation \eqref{eq:window-opt} is exact under (A1)--(A3).

\subsection{Causal first-order approximation (Chord estimator)}

For an online single-step implementation, we apply two standard first-order causal approximations on the window $[t-\delta,t]$:
\begin{equation}
\label{eq:causal-approx}
\begin{aligned}
&\int_{t-\delta}^{t}\mathbf{R}(x_\tau,\xi)\,d\xi \;\approx\; \delta\,\mathbf{R}(x_\tau,t), \\
&\hat u_{t-\delta}(x_\tau)\;\approx\; \mathbf{R}(x_\tau,t-\delta).
\end{aligned}
\end{equation}
Substituting \eqref{eq:causal-approx} into \eqref{eq:window-opt} yields the \emph{Chord} estimate used in our implementation:
\begin{equation}
\label{eq:chord}
\boxed{\quad
\hat u_t(x_\tau)
\;=\;
\frac{\,t\,\mathbf{R}(x_\tau,t-\delta)\;+\;\delta\,\mathbf{R}(x_\tau,t)\,}{t+\delta}.
\quad}
\end{equation}

\paragraph{Remarks on interpretation and accuracy.}
(i) By construction, $u_t$ (and its estimator $\hat u_t$) is the \emph{full} editing flow that drives \eqref{eq:continuity} from the source boundary $\rho_1=p_1(\cdot\mid c_{\rm src})$ to the target boundary $\rho_0=p_0(\cdot\mid c_{\rm tar})$ in \eqref{eq:bb-bc}. 
(ii) The approximation error of \eqref{eq:chord} relative to \eqref{eq:window-opt} is $O(\delta)$ under standard smoothness, while measurement noise enters via \eqref{eq:meas}.
(iii) If desired, one may replace the scalar time-weights $(t,\delta)$ in \eqref{eq:Phi} by density-weighted effective durations without changing the closed-form structure in \eqref{eq:window-opt}--\eqref{eq:chord}.

\section{Unified Comparison-Domain Map and Closed-Form Coefficients}
\label{sec:Bt}

Our framework's model-agnosticism hinges on a linear, time-dependent map $\mathcal{B}_t$, which projects the output $Q$ of any given model into a unified comparison domain $U$ (specifically, the domain of the velocity field $u_t$). This section provides the closed-form derivations for $\mathcal{B}_t$ under various common model parameterizations.

\subsection{General Formulation}
We begin with the general forward noising path, which maps a clean image $x_0$ to a noisy state $x_t$ using a noise sample $\epsilon \sim \mathcal{N}(0, I)$:
\begin{equation}\label{eq:path}
x_t=\alpha(t)\,x_0+\sigma(t)\,\epsilon.
\end{equation}
The corresponding continuous-time velocity (or drift) $u_t$ is the time-derivative of this path:
\begin{equation}
u_t:=\dot x_t=\dot\alpha(t)\,x_0+\dot\sigma(t)\,\epsilon.
\end{equation}
Our goal is to find the map $\mathcal{B}_t$ such that for any model output $\Delta Q = Q(z_t,t,c_{\rm tar}) - Q(z_t,t,c_{\rm src})$, we have $\mathcal{B}_t(\Delta Q) \approx \Delta u_t$. We compute this difference using a shared-noise sample $z_t$, which implies a fixed $x_t$. This "fixed $x_t$" constraint is key, as it implies $\Delta x_t = 0$:
\begin{equation}
\Delta(\alpha(t)\,x_0+\sigma(t)\,\epsilon) = 0 \implies \alpha(t)\,\Delta x_0+\sigma(t)\,\Delta\epsilon=0.
\end{equation}
Assuming $\alpha(t) \neq 0$, this provides a direct linear relationship between the change in the predicted data $\Delta x_0$ and the change in the predicted noise $\Delta \epsilon$:
\begin{equation}
\Delta x_0 = -(\sigma(t)/\alpha(t))\Delta\epsilon.
\end{equation}

\subsection{Noise-Prediction models}
This is the most common parameterization, used by models like SD-Turbo. The model directly predicts the noise sample: $Q(z_t,t,c) = \hat\epsilon_\theta(z_t,t,c)$. We therefore have $\Delta Q = \Delta \hat\epsilon$, and we assume $\Delta \hat\epsilon \approx \Delta \epsilon$.

To find the map $\mathcal{B}_t$, we express the velocity difference $\Delta u_t$ purely in terms of $\Delta \epsilon$ by substituting the constraint for $\Delta x_0$:
\begin{align}
\Delta u_t &= \dot\alpha(t)\,\Delta x_0+\dot\sigma(t)\,\Delta\epsilon \\
 &= \dot\alpha(t)\,\Big(-\frac{\sigma(t)}{\alpha(t)}\Delta\epsilon\Big) + \dot\sigma(t)\,\Delta\epsilon \\
 &= \Big(\dot\sigma(t)-\frac{\dot\alpha(t)}{\alpha(t)}\sigma(t)\Big)\Delta\epsilon.
\end{align}
This gives us a scalar coefficient $A_t^{(\epsilon)}$ that defines the map $\mathcal{B}_t(\Delta Q) = A_t^{(\epsilon)} \Delta Q$:
\begin{equation}\label{eq:Aeps}
\boxed{\ A_t^{(\epsilon)}=\dot\sigma(t)-\frac{\dot\alpha(t)}{\alpha(t)}\,\sigma(t)\ }.
\end{equation}

For the common Variance-Preserving (VP) schedule, where $\alpha(t)^2+\sigma(t)^2 \equiv 1$, we have $2\alpha\dot\alpha + 2\sigma\dot\sigma = 0$, which implies $\dot\sigma(t) = -(\alpha(t)/\sigma(t))\dot\alpha(t)$. Substituting this into Eq.~\eqref{eq:Aeps} yields a simplified form:
\begin{align}
A_t^{(\epsilon)} &= \Big(-\frac{\alpha(t)}{\sigma(t)}\dot\alpha(t)\Big) - \frac{\dot\alpha(t)}{\alpha(t)}\sigma(t) \\
 &= -\frac{\dot\alpha(t)}{\alpha(t)\sigma(t)}\big(\alpha(t)^2 + \sigma(t)^2\big) \\
 &= -\frac{\dot\alpha(t)}{\alpha(t)\sigma(t)}.
\end{align}
\begin{equation}\label{eq:Aeps-VP}
\boxed{\ A_t^{(\epsilon)}=-\,\frac{\dot\alpha(t)}{\alpha(t)\,\sigma(t)}\ }.
\end{equation}
If the schedule is further parameterized by a continuous-time $\beta(t)$ such that $\alpha(t)=\exp\!\big(-\frac12\int_0^t\beta(s)\,ds\big)$, then $\dot\alpha(t)=-\tfrac12\beta(t)\alpha(t)$. This gives the final form:
\begin{equation}\label{eq:Aeps-beta}
A_t^{(\epsilon)}=\frac{\beta(t)}{2\,\sigma(t)}.
\end{equation}

\subsection{Velocity and Flow-Matching models}
This is the most direct case, used by models like InstaFlow. The model output is designed to directly predict the velocity: $Q(z_t,t,c) = u_\theta(z_t,t,c) \approx u_t$.

Therefore, the output difference $\Delta Q$ is already in the target comparison domain, and the map $\mathcal{B}_t$ is simply the identity:
\begin{equation}
\boxed{\ \mathbf{R}(x_\tau,t)=\mathbb{E}_{z\sim K_t(\cdot\mid x_\tau)}[\Delta u_\theta(z,t)],\qquad \mathcal{B}_t\equiv I\ }.
\end{equation}

\subsection{Other Common Parameterizations}
We can derive coefficients for $x_0$-prediction and $v$-prediction models using the same principles, assuming a VP schedule for simplicity.

\paragraph{$x_0$-Prediction} Here, $Q = \hat x_0$, so $\Delta Q = \Delta \hat x_0 \approx \Delta x_0$. We map $\Delta x_0$ to $\Delta u_t$ using the constraint $\Delta\epsilon = -(\alpha(t)/\sigma(t))\Delta x_0$:
\begin{align}
\Delta u_t &= \dot\alpha(t)\,\Delta x_0+\dot\sigma(t)\,\Delta\epsilon \\
 &= \dot\alpha(t)\,\Delta x_0+\dot\sigma(t)\,\Big(-\frac{\alpha(t)}{\sigma(t)}\Delta x_0\Big) \\
 &= \Big(\dot\alpha(t) - \frac{\dot\sigma(t)\alpha(t)}{\sigma(t)}\Big) \Delta x_0.
\end{align}
Using the VP relations $\dot\sigma = -(\alpha/\sigma)\dot\alpha$ and $\alpha^2+\sigma^2=1$:
\begin{align}
\Delta u_t &= \Big(\dot\alpha(t) - \frac{(-\alpha(t)/\sigma(t))\dot\alpha(t)\,\alpha(t)}{\sigma(t)}\Big) \Delta x_0 \\
 &= \Big(\dot\alpha(t) + \frac{\alpha(t)^2 \dot\alpha(t)}{\sigma(t)^2}\Big) \Delta x_0 \\
 &= \dot\alpha(t)\,\Big(\frac{\sigma(t)^2 + \alpha(t)^2}{\sigma(t)^2}\Big) \Delta x_0.
\end{align}
This yields the map $\mathcal{B}_t(\Delta Q) = A_t^{(x_0)} \Delta Q$:
\begin{equation}
A_t^{(x_0)} = \boxed{\ \frac{\dot\alpha(t)}{\sigma(t)^2}\ }.
\end{equation}

\paragraph{$v$-Prediction} Here, $Q = \hat v$, where $v := \alpha\epsilon - \sigma x_0$. Under a VP schedule, the difference $\Delta v$ relates to $\Delta \epsilon$ as:
\begin{align}
\Delta v &= \alpha \Delta \epsilon - \sigma \Delta x_0 \\
 &= \alpha \Delta \epsilon - \sigma \Big(-\frac{\sigma}{\alpha}\Delta\epsilon\Big) \\
 &= \Big(\frac{\alpha^2+\sigma^2}{\alpha}\Big) \Delta \epsilon = \frac{1}{\alpha} \Delta \epsilon.
\end{align}
Thus, $\Delta \epsilon = \alpha \Delta v$. We map $\Delta v$ to $\Delta u_t$ using the coefficient from Eq.~\eqref{eq:Aeps-VP}:
\begin{equation}
\Delta u_t = A_t^{(\epsilon)} \Delta \epsilon = \Big(-\frac{\dot\alpha(t)}{\alpha(t)\sigma(t)}\Big) (\alpha(t) \Delta v).
\end{equation}
This gives the map $\mathcal{B}_t(\Delta Q) = A_t^{(v)} \Delta Q$:
\begin{equation}
A_t^{(v)} = \boxed{\ -\,\frac{\dot\alpha(t)}{\sigma(t)}\ }.
\end{equation}

\paragraph{Score-to-drift}
If we interpret $u_t$ as the drift of the reverse-time SDE associated with the VP schedule, then score-based generative models parameterize this drift by predicting a score field $s_\theta(x_t,t) \approx \nabla_x \log p_t(x_t)$ and converting it to a drift via a scalar time-dependent factor.
For a VP forward SDE with noise-rate schedule $\beta(t)$, the reverse dynamics can be written as
\begin{equation}
  d x_t = \Big(f_t(x_t) + \beta(t)\,s_\theta(x_t,t)\Big)\,dt + \sqrt{\beta(t)}\,dW_t,
\end{equation}
where $f_t$ collects the parts of the drift that do not depend on the learned score and $\beta(t)$ is the same schedule as in the VP path.
For a fixed noisy state $x_t$ the change in drift between two conditioning signals is therefore
\begin{equation}
  \Delta u_t = \beta(t)\,\Delta s_\theta(x_t,t).
\end{equation}
Thus these models fit our template with $Q = \hat s$ and $\Delta Q \approx \Delta s_\theta$, and the map $\mathcal{B}_t$ is again time-only:
\begin{equation}
  \mathcal{B}_t(\Delta Q) = A_t^{(\mathrm{score})}\,\Delta Q,
  \qquad
  A_t^{(\mathrm{score})} = \boxed{\ \beta(t)\ }.
\end{equation}
In particular, no dependence on $x_t$ appears in $A_t^{(\mathrm{score})}$ beyond the scalar schedule $\beta(t)$.

\paragraph{Consistency models}
Consistency models operate on the probability-flow ODE of an underlying diffusion process and learn a ``consistency function'' $f_\theta(z_t,t)$ that maps any point $z_t$ on an ODE trajectory back to its origin $x_0$~\cite{song2023consistency}.
Under the parameterization in Eq.~\eqref{eq:path}, this means we can simply interpret the model output as an $x_0$-prediction,
\begin{equation}
  Q = \hat x_0 = f_\theta(z_t,t), \qquad \Delta Q \approx \Delta x_0.
\end{equation}
Therefore the same derivation as in the $x_0$-prediction case applies, and we can reuse the coefficient $A_t^{(x_0)} = \dot\alpha(t)/\sigma(t)^2$:
\begin{equation}
  \Delta u_t = A_t^{(x_0)}\,\Delta x_0
  = \Big(\frac{\dot\alpha(t)}{\sigma(t)^2}\Big)\Delta x_0.
\end{equation}
Again, the dependence on the particular model enters only through $\Delta Q$; the map $\mathcal{B}_t$ itself remains a linear, time-only scaling.

\subsection{Implementation and Numerical Stability}
In a practical implementation, the continuous-time derivatives $\dot\alpha(t)$ and $\dot\sigma(t)$ are approximated using first-order finite differences, e.g., $\dot\alpha(t)\approx (\alpha(t)-\alpha(t-\delta))/\delta$.

A critical consideration is that the maps involving $\alpha(t)$ in the denominator (e.g., $A_t^{(\epsilon)}$) become numerically unstable as $\alpha(t) \to 0$, i.e., at $t \approx 1$. Our ChordEdit method queries the field at times $t$ and $t-\delta$ (e.g., $t=0.90, \delta=0.15$), which are bounded away from $t=1$, ensuring $\alpha(t)$ is non-negligible and the map $\mathcal{B}_t$ is well-conditioned.

\section{Energy Contraction Property of the Chord Control Field}
\label{sec:energy-contraction}

In this section, we provide the formal justification for the low-energy property of the Chord Control Field. We first demonstrate that the chord estimator, as a form of temporal smoothing, acts as an $L^2$-energy contraction on the underlying observable proxy field. We then show how this leads to the pointwise energy bound presented in the main text.

We begin by formalizing the chord field $\hat u$ as a generalized temporal smoothing operation on the observable proxy field $\mathbf{R}$. The specific estimator derived in Eq.~\eqref{eq:umin} is one such example.
Let the time-dependent proxy field for a fixed anchor $x_\tau$ be denoted $\mathbf{R}(t) := \mathbf{R}(x_\tau, t)$. We define the corresponding chord field $\hat u(t) := \hat u_t(x_\tau)$ via a causal convolution with a smoothing kernel $K_\delta(s)$:
\begin{equation}
\hat u(t) = (K_\delta * \mathbf{R})(t) := \int_{-\infty}^{\infty} K_\delta(s) \mathbf{R}(t-s) \,ds.
\end{equation}
To match the derivation in Sec.~\ref{sec:chordedit_ot}, this kernel $K_\delta$ is assumed to be \textbf{non-negative} ($K_\delta(s) \ge 0$), have \textbf{unit mass} ($\int K_\delta(s)\,ds = 1$), and be \textbf{causal} (e.g., $\operatorname{supp}(K_\delta) \subset [0, \delta]$ for the integral form in Eq.~\eqref{eq:umin}, or a discrete recursive form). This frames $\hat u(t)$ as a weighted average, or expectation, of the recent history of $\mathbf{R}$.

\begin{proposition}[$L^2$-Energy Contraction]
\label{prop:l2-contraction}
Let the observable proxy field $\mathbf{R}(t)$ be in $L^2([0,1]; \mathbb{R}^d)$, and let the chord field $\hat u(t)$ be generated by convolution with any non-negative, unit-mass kernel $K_\delta$ as defined above. The total temporal kinetic energy of the chord field is strictly less than or equal to that of the proxy field:
\begin{equation}
\int_0^1 \|\hat u(t)\|^2 \,dt \le \int_0^1 \|\mathbf{R}(t)\|^2 \,dt.
\end{equation}
Furthermore, the inequality is \emph{strict} if $K_\delta$ is not a Dirac delta function (i.e., it performs non-trivial averaging) and $\mathbf{R}(t)$ is not almost-everywhere constant on $[0,1]$.
\end{proposition}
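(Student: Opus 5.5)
The plan is to prove the inequality pointwise in time with Jensen's inequality, then integrate and use Fubini together with translation invariance of Lebesgue measure. First we fix the convention that makes $(K_\delta*\mathbf R)(t)$ well defined for $t\in[0,1]$. Because $\operatorname{supp}K_\delta\subset[0,\delta]$, evaluating $\hat u(t)$ needs $\mathbf R$ on $[t-\delta,t]$, which may leave $[0,1]$. We extend $\mathbf R$ by zero outside $[0,1]$, and write $\tilde{\mathbf R}$ for this extension, which lies in $L^2(\mathbb R;\mathbb R^d)$. Any other extension would need an extra hypothesis, such as no more energy on $[-\delta,0]$ than is lost near $t=1$. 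So this convention is the one under which the statement holds as written, and we adopt it.

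\textbf{Step 1 (pointwise Jensen).} For each $t$, the measure $K_\delta(s)\,ds$ is a probability measure, because $K_\delta\ge 0$ and $\int K_\delta=1$. The map $v\mapsto\|v\|^2$ is convex, so Jensen gives
\begin{equation*}
\|\hat u(t)\|^2=\Big\|\int K_\delta(s)\tilde{\mathbf R}(t-s)\,ds\Big\|^2\le\int K_\delta(s)\,\|\tilde{\mathbf R}(t-s)\|^2\,ds .
\end{equation*}

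\textbf{Step 2 (integrate and swap).} Integrate over $t\in[0,1]$ and then enlarge the domain to $t\in\mathbb R$; the integrand is nonnegative, so this only increases the right-hand side. Tonelli's theorem lets us exchange the order of integration. For each fixed $s$, translation invariance gives $\int_{\mathbb R}\|\tilde{\mathbf R}(t-s)\|^2dt=\int_0^1\|\mathbf R\|^2$. Using unit mass once more yields $\int_0^1\|\hat u\|^2\le\int_0^1\|\mathbf R\|^2$. This is Young's inequality with $\|K_\delta\|_{L^1}=1$; the Jensen route is chosen because it also exposes the equality cases. Discrete recursive kernels, such as the two-point chord weights $t/(t+\delta)$ and $\delta/(t+\delta)$, are covered by the same argument with $K_\delta$ a probability measure in place of a density.

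\textbf{Step 3 (strictness), the main obstacle.} Equality in the final bound requires equality in both relaxations.
\begin{itemize}
\item Equality in Step 2 means no mass of $\|\tilde{\mathbf R}(t-s)\|^2$ falls at $t>1$. Since $K_\delta$ is not a Dirac mass at $0$, it puts positive weight on some $s>0$. If $\mathbf R\neq0$ on a set of positive measure near $1$, some energy is then shifted outside $[0,1]$ and the inequality is strict.
\item Equality in Step 1, which uses strict convexity of $\|\cdot\|^2$, forces $s\mapsto\tilde{\mathbf R}(t-s)$ to be $K_\delta$-a.e.\ constant for a.e.\ $t$. Because $K_\delta$ is non-Dirac, its support contains two distinct points $s_1\neq s_2$. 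This gives $\tilde{\mathbf R}(t)=\tilde{\mathbf R}(t+h)$ for a.e.\ $t$, with $h=s_2-s_1\neq 0$. Chaining these translations across $[0,1]$ would make $\mathbf R$ a.e.\ constant, and matching with the zero extension forces that constant to be $0$.
\end{itemize}
The delicate measure-theoretic point is turning "constant on $\operatorname{supp}K_\delta$ for a.e.\ $t$" into global a.e.\ constancy. The plan is to handle it by the $h$-periodicity argument above. If a cleaner route is needed, one can pass to Fourier transforms: there $\int|\hat K_\delta(\omega)|^2|\hat{\tilde{\mathbf R}}(\omega)|^2\,d\omega=\int|\hat{\tilde{\mathbf R}}|^2$ together with $|\hat K_\delta(\omega)|<1$ for $\omega\neq0$ (non-Dirac kernel) forces $\hat{\tilde{\mathbf R}}=0$ a.e. Either way, every nonzero $\mathbf R$, and in particular every $\mathbf R$ that is not a.e.\ constant, gives strict inequality, which proves the claim.
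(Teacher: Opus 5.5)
Your proposal is correct and takes essentially the same route as the paper: pointwise Jensen with the probability measure $K_\delta(s)\,ds$, then zero-padding, Tonelli with translation invariance, and unit mass, with strictness traced to strict convexity of $\|\cdot\|^2$. Your strictness analysis is more careful than the paper's, which only invokes strict Jensen and equates ``non-Dirac'' with ``support of positive measure''; note, though, that for lattice-supported kernels such as the two-point chord weights, $|\hat K_\delta(\omega)|<1$ holds only for a.e.\ $\omega$ rather than all $\omega\neq 0$, which still suffices for your Fourier argument.
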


\begin{proof}
The proof relies on the strict convexity of the squared $\ell_2$-norm and Jensen's inequality.

\noindent\textbf{1. Pointwise Jensen's Inequality.}
For any fixed time $t$, we recognize $\hat u(t)$ as the expectation of a vector-valued random variable $Z_s := \mathbf{R}(t-s)$, where the probability measure is $d\mathbb{P}(s) = K_\delta(s)\,ds$.
\begin{equation}
\hat u(t) = \int_{\mathbb{R}} \mathbf{R}(t-s) K_\delta(s)\,ds = \mathbb{E}_{s \sim K_\delta}[\mathbf{R}(t-s)].
\end{equation}
Let $\varphi(z) = \|z\|^2$. This function is strictly convex on $\mathbb{R}^d$. By Jensen's inequality:
\begin{equation}
\label{eq:proof-jensen-pointwise}
\|\hat u(t)\|^2 = \varphi(\mathbb{E}[Z_s]) \le \mathbb{E}[\varphi(Z_s)] = \int_{\mathbb{R}} \|\mathbf{R}(t-s)\|^2 K_\delta(s)\,ds.
\end{equation}
Equality holds if and only if the random variable $Z_s$ is almost-everywhere constant, i.e., $\mathbf{R}(t-s)$ is constant for $s$ in the support of $K_\delta$.

\noindent\textbf{2. Integration and Fubini's Theorem.}
We integrate this pointwise inequality over the time interval $t \in [0, 1]$. For simplicity, we can consider all functions to be zero-padded outside $[0,1]$ and integrate over $\mathbb{R}$.
\begin{align*}
\int_0^1 \|\hat u(t)\|^2 \,dt
&\le \int_{\mathbb{R}} \left( \int_{\mathbb{R}} \|\mathbf{R}(t-s)\|^2 K_\delta(s)\,ds \right) \,dt \\
&= \int_{\mathbb{R}} K_\delta(s) \left( \int_{\mathbb{R}} \|\mathbf{R}(t-s)\|^2 \,dt \right) \,ds
\end{align*}
where we have exchanged the order of integration by Tonelli's theorem (as the integrand is non-negative).
By substituting $\tau = t-s$ (a simple shift), the inner integral becomes $\int_{\mathbb{R}} \|\mathbf{R}(\tau)\|^2 \,d\tau = \int_0^1 \|\mathbf{R}(t)\|^2 \,dt$.
\begin{align*}
\int_0^1 \|\hat u(t)\|^2 \,dt
&\le \int_{\mathbb{R}} K_\delta(s) \left( \int_0^1 \|\mathbf{R}(t)\|^2 \,dt \right) \,ds \\
&= \left( \int_{\mathbb{R}} K_\delta(s) \,ds \right) \left( \int_0^1 \|\mathbf{R}(t)\|^2 \,dt \right).
\end{align*}
Since the kernel $K_\delta$ has unit mass ($\int K_\delta(s)\,ds = 1$), we arrive at the desired result:
\begin{equation}
\int_0^1 \|\hat u(t)\|^2 \,dt \le \int_0^1 \|\mathbf{R}(t)\|^2 \,dt.
\end{equation}

\noindent\textbf{3. Strict Inequality.}
The inequality is strict if the pointwise Jensen inequality in Eq.~\eqref{eq:proof-jensen-pointwise} is strict on a set of $t$ with positive measure. This occurs if $\mathbf{R}(t-s)$ is not constant w.r.t. $s$ on the support of $K_\delta$. If $K_\delta$ is not a Dirac delta (i.e., its support has positive measure) and $\mathbf{R}(t)$ is not almost-everywhere constant, this condition will be met, guaranteeing a strict reduction in total energy.
\end{proof}

\begin{remark}[Contraction of Benamou–Brenier Energy]
This proposition extends directly to the full Benamou–Brenier energy functional. The proof above applies pointwise for every $x \in \mathbb{R}^d$.
\begin{equation}
\int_0^1 \frac{1}{2}\|\hat u(x,t)\|^2 \,dt \le \int_0^1 \frac{1}{2}\|\mathbf{R}(x,t)\|^2 \,dt.
\end{equation}
We can then multiply by the non-negative density $\rho_t(x)$ and integrate over $x$:
\begin{align*}
\mathcal{E}[\hat u; \rho]
&= \int_0^1 \! \int \frac{1}{2}\|\hat u(x,t)\|^2 \rho_t(x) \,dx \,dt \\
&\le \int_0^1 \! \int \frac{1}{2}\|\mathbf{R}(x,t)\|^2 \rho_t(x) \,dx \,dt = \mathcal{E}[\mathbf{R}; \rho].
\end{align*}
Thus, the Chord Control Field $\hat u$ generates a dynamic flow with strictly lower (or equal) kinetic energy than the naive proxy field $\mathbf{R}$.
\end{remark}

This general $L^2$-contraction property is instantiated in our specific estimator. The general minimizer from Eq.~\eqref{eq:umin} is a convex combination of the prior $\hat u_{t-\delta}$ and the observations $\mathbf{R}(\xi)$ over the window.

\begin{corollary}[Pointwise Energy Bound]
\label{cor:pointwise-bound}
The Chord Control Field estimator $\hat u_t(x_\tau)$ from the general solution
\begin{equation}
\hat u_t(x_\tau) = \frac{W_t\,\hat u_{t-\delta}(x_\tau)+\int_{t-\delta}^{t}\mathbf{R}(x_\tau,\xi)\,d\xi}{W_t+\delta I}
\label{eq:umin}
\end{equation}
(assuming $W_t$ is a scalar multiple of identity, $W_t = w_t I$) satisfies the pointwise energy bound:
\begin{equation}
\|\hat u_t(x_\tau)\|^2 \le \frac{w_t\,\|\hat u_{t-\delta}(x_\tau)\|^2+\int_{t-\delta}^{t}\|\mathbf{R}(x_\tau,\xi)\|^2\,d\xi}{w_t+\delta}.
\end{equation}
Furthermore, applying the first-order approximations from the main text (namely $w_t=t$, $\hat u_{t-\delta}(x_\tau) \approx \mathbf{R}(x_\tau, t-\delta)$, and $\int_{t-\delta}^{t}\mathbf{R}(\cdot,\xi)d\xi \approx \delta \mathbf{R}(\cdot,t)$) yields the final bound:
\begin{equation}
\|\hat u_t(x_\tau)\|^2 \le \frac{t\,\|\mathbf{R}(x_\tau,t-\delta)\|^2+\delta\,\|\mathbf{R}(x_\tau,t)\|^2}{t+\delta}.
\end{equation}
\end{corollary}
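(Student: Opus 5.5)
The plan is to recognize the estimator in Eq.~\eqref{eq:umin}, with $W_t=w_t I$, as an expectation under a probability measure on a two-part index set, and then apply the pointwise Jensen step from the proof of Proposition~\ref{prop:l2-contraction} with $\varphi(z)=\|z\|^2$.

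\textbf{Step 1: the probability measure.} Set $\mu := \frac{w_t}{w_t+\delta}\,\delta_{\star} + \frac{1}{w_t+\delta}\,\mathbf{1}_{[t-\delta,t]}(\xi)\,d\xi$. Here $\delta_\star$ is a unit atom at an auxiliary point $\star$. Define the random vector $Z$ by $Z(\star)=\hat u_{t-\delta}(x_\tau)$ and $Z(\xi)=\mathbf{R}(x_\tau,\xi)$ for $\xi\in[t-\delta,t]$. Since $w_t\ge 0$ (we need this; in the instantiation $w_t=t\ge 0$), $\mu$ is non-negative. Its total mass is $\frac{w_t+\delta}{w_t+\delta}=1$. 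Dividing numerator and denominator of Eq.~\eqref{eq:umin} by the scalar $w_t+\delta$ shows $\hat u_t(x_\tau)=\mathbb{E}_\mu[Z]$. This is a convex combination of the prior and the window observations, which is the reading stated just before the corollary.

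\textbf{Step 2: Jensen.} Convexity of $\|\cdot\|^2$ gives $\|\mathbb{E}_\mu Z\|^2\le \mathbb{E}_\mu\|Z\|^2$. Written out, $\mathbb{E}_\mu\|Z\|^2$ is exactly the first displayed bound. We need $\xi\mapsto \mathbf{R}(x_\tau,\xi)$ to be square integrable on the window so that the right-hand side is finite; we take this from the $L^2$ hypothesis of Proposition~\ref{prop:l2-contraction}. For an elementary alternative, expand $\|\mathbb{E}Z\|^2 = \mathbb{E}\|Z\|^2-\mathbb{E}\|Z-\mathbb{E}Z\|^2$, which also shows the gap equals the variance of $Z$ under $\mu$.

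\textbf{Step 3: the final bound, which is the main obstacle.} Substituting the approximations \emph{inside} the inequality is not a valid deduction, since the approximations change both sides. The plan is to apply them consistently, as the statement intends. Concretely, define the surrogate measure $\mu' := \frac{t}{t+\delta}\delta_{t-\delta}+\frac{\delta}{t+\delta}\delta_{t}$, a two-point probability measure. Eq.~\eqref{eq:chord_main} then reads $\hat u_t=\mathbb{E}_{\mu'}[\mathbf{R}(x_\tau,\cdot)]$, and the same Jensen step yields
\[
\|\hat u_t\|^2\le \tfrac{t}{t+\delta}\|\mathbf{R}(x_\tau,t-\delta)\|^2+\tfrac{\delta}{t+\delta}\|\mathbf{R}(x_\tau,t)\|^2 .
\]
This holds exactly for the practical estimator, with no approximation error. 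It is simply the two-point convexity inequality $\|a\lambda+(1-\lambda)b\|^2\le\lambda\|a\|^2+(1-\lambda)\|b\|^2$ with $\lambda=t/(t+\delta)$. I would present this direct argument as the proof of the final display, and note that it agrees with the first bound after the stated substitutions.
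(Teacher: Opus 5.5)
Your proposal is correct and follows the paper's proof. Both read Eq.~\eqref{eq:umin} as a convex combination of $\hat u_{t-\delta}(x_\tau)$ and the window values $\mathbf{R}(x_\tau,\xi)$ and apply Jensen to $\|\cdot\|^2$; both get the final display by applying the same convexity step directly to the two-point estimator of Eq.~\eqref{eq:chord_main}, not by substituting approximations into the first inequality, and your version additionally makes explicit the probability measure, the requirement $w_t\ge 0$, and integrability of $\xi\mapsto\mathbf{R}(x_\tau,\xi)$, which the paper leaves implicit.
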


\begin{proof}
The estimator $\hat u_t(x_\tau)$ is a convex combination (a weighted average) of the vectors $\{\hat u_{t-\delta}(x_\tau)\} \cup \{\mathbf{R}(x_\tau,\xi)\}_{\xi \in [t-\delta, t]}$. The first inequality follows directly from applying Jensen's inequality to the strictly convex function $\varphi(z) = \|z\|^2$. The second inequality is a direct application of the same principle to the final approximated estimator $\hat u_t(x_\tau) = \frac{t}{t+\delta}\mathbf{R}(x_\tau, t-\delta) + \frac{\delta}{t+\delta}\mathbf{R}(x_\tau, t)$, which is a convex combination of the two endpoint proxy fields.
\end{proof}

\begin{lemma}[Local truncation error of explicit Euler under a edit control field]\label{lem:euler-local-truncation}
Consider the ODE over one step $[t_n,t_{n+1}]$ with $t_{n+1}=t_n+h$,
\begin{equation}\label{eq:ode}
\dot x(t)=u(x(t),t),\quad x(t)\in\mathbb{R}^d .
\end{equation}
Assume there exists a set $\mathcal{U}\subset\mathbb{R}^d\times[t_n,t_{n+1}]$ that contains the exact trajectory and on which
\begin{equation}\label{eq:bounds}
\|\partial_t u\|_{L^\infty(\mathcal{U})},\ \|\partial_x u\|_{L^\infty(\mathcal{U})},\ 
\|u\|_{L^\infty(\mathcal{U})} \;<\; \infty .
\end{equation}

Let the one–step local truncation error be
\begin{equation}\label{eq:tau}
\tau_{n+1} \;:=\; x(t_{n+1})-\bigl(x(t_n)+h\,f(x(t_n),t_n)\bigr).
\end{equation}
Then
\begin{equation}\label{eq:lte-bound}
\|\tau_{n+1}\| \;\le\; \frac{h^2}{2}\,M_f,
\end{equation}
where
\begin{equation}\label{eq:Mf-def}
M_f \;:=\; \sup_{(x,t)\in\mathcal{U}} \bigl\|\,\partial_t f(x,t)+\partial_x f(x,t)\,f(x,t)\,\bigr\|.
\end{equation}
In particular, since $f=u$,
\begin{equation}\label{eq:Mf-upper}
M_f \;\le\; \|\partial_t u\|_{L^\infty(\mathcal{U})}
+ \|\partial_x u\|_{L^\infty(\mathcal{U})}\,\|u\|_{L^\infty(\mathcal{U})}.
\end{equation}
\end{lemma}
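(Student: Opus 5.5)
The plan is to write the exact step increment as an integral of $\dot x$, subtract the Euler step, and bound the remainder through the second derivative of the trajectory. Throughout, $f \equiv u$, which is the identification the statement makes at the end.

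First, since $u$ has bounded $\partial_x u$, $\partial_t u$ and $u$ on $\mathcal{U}$, it is Lipschitz there. Hence the solution $x(\cdot)$ is $C^1$ on $[t_n,t_{n+1}]$, and $g(t):=u(x(t),t)$ is absolutely continuous. By the chain rule its derivative is, almost everywhere,
\[
\dot g(t)=\partial_t u(x(t),t)+\partial_x u(x(t),t)\,\dot x(t)=\partial_t u(x(t),t)+\partial_x u(x(t),t)\,u(x(t),t).
\]
So $\ddot x=\dot g$ exists a.e. Because the trajectory stays in $\mathcal{U}$, we get $\|\ddot x(t)\|\le M_f$ for a.e. $t$.

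Second, I write the integral form of Taylor's theorem:
\[
x(t_{n+1})-x(t_n)-h\,\dot x(t_n)=\int_{t_n}^{t_{n+1}}\big(g(s)-g(t_n)\big)\,ds=\int_{t_n}^{t_{n+1}}(t_{n+1}-s)\,\dot g(s)\,ds .
\]
The second equality follows from Fubini, or equivalently from integrating by parts using absolute continuity of $g$. Since $\dot x(t_n)=f(x(t_n),t_n)$, the left side is exactly $\tau_{n+1}$. Bounding the norm under the integral gives
\[
\|\tau_{n+1}\|\le M_f\int_{t_n}^{t_{n+1}}(t_{n+1}-s)\,ds=\tfrac{h^2}{2}M_f,
\]
which is the first claim. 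This route uses the integral form instead of a mean-value Lagrange remainder, because the mean-value form fails for vector-valued functions.

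Third, the bound on $M_f$ follows from the triangle inequality and submultiplicativity of the operator norm. For every $(x,t)\in\mathcal{U}$,
\[
\|\partial_t u+\partial_x u\,u\|\le\|\partial_t u\|+\|\partial_x u\|\,\|u\|.
\]
Each factor on the right is at most its $L^\infty(\mathcal{U})$ norm. Taking the supremum over $\mathcal{U}$ gives the inequality for $M_f$.

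The main obstacle is the regularity bookkeeping in the first step. The hypotheses give only $L^\infty$ bounds on the derivatives, not continuity, so the classical chain rule is not directly available. I would handle this in one of two ways. One option is to assume $u\in C^1$ near the trajectory, which is the natural reading in this setting. The other is to note that $u$ is Lipschitz on $\mathcal{U}$, so $g$ is Lipschitz and hence differentiable a.e. In that case $\dot g$ is a difference-quotient limit that obeys the same bound, so the integral identity still holds. I also have to take the supremum only over points actually visited by the trajectory. This is why the hypothesis requires $\mathcal{U}$ to contain the exact trajectory.
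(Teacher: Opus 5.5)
Your proof is correct and follows the paper's argument. Both proofs write $\tau_{n+1}=\int_{t_n}^{t_{n+1}}\int_{t_n}^{s}(\partial_t u+\partial_x u\,u)(x(r),r)\,dr\,ds$ via the chain rule along the trajectory, bound the integrand by $M_f$, and finish with the triangle inequality and submultiplicativity. The only difference is that you collapse the double integral with Fubini into the kernel $(t_{n+1}-s)$, and the paper silently assumes the $C^1$ reading that you adopt as your main option. In your Lipschitz fallback, the difference-quotient argument actually yields $\|\partial_t u\|_{L^\infty(\mathcal{U})}+\|\partial_x u\|_{L^\infty(\mathcal{U})}\|u\|_{L^\infty(\mathcal{U})}$ rather than $M_f$ itself, and deriving Lipschitz continuity from bounded derivatives needs $\mathcal{U}$ convex, so keep the $C^1$ reading as the one that proves the statement as written.
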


\begin{proof}
By the variation-of-constants formula,
\begin{equation}\label{eq:varconst}
x(t_{n+1}) \;=\; x(t_n) + \int_{t_n}^{t_{n+1}} f\bigl(x(s),s\bigr)\,ds .
\end{equation}
Using the chain rule and $\dot x(s)=f(x(s),s)$,
\begin{equation}\label{eq:chain}
\begin{aligned}
\frac{d}{ds} f\bigl(x(s),s\bigr) 
&= \partial_t f\bigl(x(s),s\bigr) + \partial_x f\bigl(x(s),s\bigr)\,\dot x(s)\\
&= \partial_t f\bigl(x(s),s\bigr) + \partial_x f\bigl(x(s),s\bigr)\,f\bigl(x(s),s\bigr).
\end{aligned}
\end{equation}
Integrating \eqref{eq:chain} from $t_n$ to $s\in[t_n,t_{n+1}]$ gives
\begin{equation}\label{eq:int-identity}
\begin{aligned}
f\bigl(x(s),s\bigr) 
&= f\bigl(x(t_n),t_n\bigr)
 + \int_{t_n}^{s} \bigl(\partial_t f + \partial_x f\,f\bigr)\bigl(x(r),r\bigr)\,dr .
\end{aligned}
\end{equation}
Insert \eqref{eq:int-identity} into \eqref{eq:varconst} and subtract the explicit Euler update:
\begin{equation}\label{eq:double-int}
\begin{aligned}
\tau_{n+1}
&= \int_{t_n}^{t_{n+1}}\!\!\Bigl( f\bigl(x(s),s\bigr)-f\bigl(x(t_n),t_n\bigr)\Bigr)\, ds\\
&= \int_{t_n}^{t_{n+1}}\!\!\int_{t_n}^{s} 
\bigl(\partial_t f + \partial_x f\,f\bigr)\bigl(x(r),r\bigr)\,dr\,ds .
\end{aligned}
\end{equation}
Taking norms and using the definition of $M_f$,
\begin{equation}\label{eq:lte-final}
\begin{aligned}
\|\tau_{n+1}\|
&\le \int_{t_n}^{t_{n+1}}\!\!\int_{t_n}^{s} 
\bigl\| \bigl(\partial_t f + \partial_x f\,f\bigr)\bigl(x(r),r\bigr)\bigr\|\,dr\,ds\\
&\le \int_{t_n}^{t_{n+1}}\!\!\int_{t_n}^{s} M_f\,dr\,ds
= \frac{h^2}{2}\,M_f ,
\end{aligned}
\end{equation}
which proves \eqref{eq:lte-bound}. Since $f=u$, we directly have
\begin{equation}\label{eq:mf-upper-derivation}
\bigl\|\partial_t f + \partial_x f\,f\bigr\|
\;\le\; \|\partial_t u\| + \|\partial_x u\|\,\|u\|,
\end{equation}
from which \eqref{eq:Mf-upper} follows by taking the supremum over $\mathcal{U}$.
\end{proof}

\begin{proposition}[Consistency bound for the chord control field]\label{prop:consistency-chord}
Let
\begin{equation}\label{eq:fields}
\begin{aligned}
f_{\mathrm{nai}}(x,t) &= \mathbf{R}(x,t),\\
f_{\mathrm{cho}}(x,t) &= (K_\delta * \mathbf{R})(x,t),
\end{aligned}
\end{equation}
where the convolution acts only in time,
\begin{equation}\label{eq:time-conv}
(K_\delta * \mathbf{R})(x,t) \;=\; \int_{\mathbb{R}} K_\delta(t-s)\,\mathbf{R}(x,s)\,ds,
\end{equation}
with a nonnegative kernel $K_\delta\in L^1(\mathbb{R})$ satisfying $\int_{\mathbb{R}} K_\delta(s)\,ds=1$. 
Assume there exists a set $\mathcal{U}\subset\mathbb{R}^d\times[t_n,t_{n+1}]$ that contains the exact trajectory over one step and on which
\begin{equation}\label{eq:bounds-prop}
\begin{aligned}
&\|\partial_t u\|_{L^\infty(\mathcal{U})},\ \|\partial_x u\|_{L^\infty(\mathcal{U})} \;<\; \infty,\\
&\|\partial_t \mathbf{R}\|_{L^\infty(\mathcal{U})},\ \|\partial_x \mathbf{R}\|_{L^\infty(\mathcal{U})},\ 
\|\mathbf{R}\|_{L^\infty(\mathcal{U})} \;<\; \infty.
\end{aligned}
\end{equation}
Define the computable consistency proxy for $f=u$ by
\begin{equation}\label{eq:C-proxy}
\mathcal{C}(u;\mathcal{U})
:= \|\partial_t u\|_{L^\infty(\mathcal{U})}
 + \|\partial_x u\|_{L^\infty(\mathcal{U})}\,\|u\|_{L^\infty(\mathcal{U})}.
\end{equation}

Then the chord control field does not increase the consistency bound:
\begin{equation}\label{eq:consistency-ineq}
\mathcal{C}(K_\delta*\mathbf{R};\mathcal{U})
\;\le\;
\mathcal{C}(\mathbf{R};\mathcal{U}).
\end{equation}

Consequently, the local truncation error constant $M_f$ from Lemma~\ref{lem:euler-local-truncation} satisfies
\begin{equation}\label{eq:Mf-compare}
M_{f_{\mathrm{cho}}} \le \mathcal{C}(K_\delta*\mathbf{R};\mathcal{U})
\le \mathcal{C}(\mathbf{R};\mathcal{U}).
\end{equation}
and hence the explicit Euler local truncation error admits the bound
\begin{equation}\label{eq:lte-upper}
\|\tau^{\mathrm{cho}}_{n+1}\|
\;\le\; \frac{h^2}{2}\,\mathcal{C}\!\left(K_\delta*\mathbf{R};\,\mathcal{U}\right)
\;\le\; \frac{h^2}{2}\,\mathcal{C}\!\left(\mathbf{R};\,\mathcal{U}\right).
\end{equation}

\end{proposition}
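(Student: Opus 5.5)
The plan is to show that temporal convolution with a nonnegative unit-mass kernel contracts each of the three sup-norms entering $\mathcal{C}$. The product structure of $\mathcal{C}$ then gives \eqref{eq:consistency-ineq}, and Lemma~\ref{lem:euler-local-truncation} supplies the rest.

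The three contractions are as follows.

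\emph{(i) Zeroth order.} For any $(x,t)$ we have
\[
\|(K_\delta*\mathbf{R})(x,t)\|\le\int K_\delta(t-s)\,\|\mathbf{R}(x,s)\|\,ds .
\]
This integral is bounded by $\sup_s\|\mathbf{R}(x,s)\|$ because $K_\delta\ge0$ and $\int K_\delta=1$. This is the same convex-combination (Jensen/Minkowski) reasoning as in Corollary~\ref{cor:pointwise-bound}, now applied to the norm itself.

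\emph{(ii) Time derivative.} Write the convolution as $\int K_\delta(s)\,\mathbf{R}(x,t-s)\,ds$. Since $\partial_t\mathbf{R}$ is bounded, dominated convergence lets me differentiate under the integral, so $\partial_t(K_\delta*\mathbf{R})=K_\delta*\partial_t\mathbf{R}$.

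\emph{(iii) Spatial Jacobian.} Differentiating in $x$ in the same way gives $\partial_x(K_\delta*\mathbf{R})=K_\delta*\partial_x\mathbf{R}$. The convolution acts only in $t$, so it commutes with $\partial_x$. I then apply (i) to the Jacobian-valued function using the operator norm, which is a norm and so convex.

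Combining the three bounds, every factor in $\mathcal{C}$ is nonnegative and does not increase. Monotonicity of $a+bc$ in nonnegative $a,b,c$ then yields \eqref{eq:consistency-ineq}.

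For \eqref{eq:Mf-compare}, I apply the bound \eqref{eq:Mf-upper} of Lemma~\ref{lem:euler-local-truncation} with $f=f_{\rm cho}$. This gives $M_{f_{\rm cho}}\le\mathcal{C}(K_\delta*\mathbf{R};\mathcal{U})$, and chaining with the previous step gives the full inequality. Then \eqref{eq:lte-bound} yields \eqref{eq:lte-upper}. The hypotheses the lemma needs for $f_{\rm cho}$ are finite sup-norms of $f_{\rm cho}$ and its derivatives; these follow from the contractions just proved together with \eqref{eq:bounds-prop}.

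The main obstacle is the domain issue. The suprema are over $\mathcal{U}$, but $(K_\delta*\mathbf{R})(x,t)$ samples $\mathbf{R}(x,s)$ at times $s\in[t-\delta,t]$, and these points may fall outside $\mathcal{U}$. A literal sup over $\mathcal{U}$ on the right-hand side is therefore not enough.

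The first thing I would try is to interpret (or, if needed, enlarge) $\mathcal{U}$ as time-closed under the kernel support: for $(x,t)\in\mathcal{U}$, also $(x,t-s)\in\mathcal{U}$ for $s\in\operatorname{supp}K_\delta$. Equivalently, the right-hand norms can be taken over the $\delta$-backward time-extension of $\mathcal{U}$, which is how the assumption \eqref{eq:bounds-prop} on $\mathbf{R}$ should be read. The second point to handle is that the trajectory in the lemma is the exact trajectory of $f_{\rm cho}$, not of $\mathbf{R}$. I would simply assume that $\mathcal{U}$ contains that trajectory, as the statement does, and note that the comparison is between constants on a common set.
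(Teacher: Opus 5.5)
Your proposal is correct and follows the same route as the paper: commute $\partial_t$ and $\partial_x$ with the time-only convolution, apply the unit-mass $L^1$--$L^\infty$ (Young/Minkowski) contraction to $\mathbf{R}$, $\partial_t\mathbf{R}$ and $\partial_x\mathbf{R}$, substitute termwise into $\mathcal{C}$, and then use $M_f\le\mathcal{C}$ and \eqref{eq:lte-bound} from Lemma~\ref{lem:euler-local-truncation}. Your domain caveat is a real point that the paper's proof passes over: it writes Young's inequality with $L^\infty(\mathcal{U})$ on both sides, which can fail when the kernel reaches times outside $\mathcal{U}$. Of your two repairs, the one to use is taking the right-hand norms of $\mathbf{R}$ over the backward time-extension of $\mathcal{U}$, because a nonempty subset of $\mathbb{R}^d\times[t_n,t_{n+1}]$ cannot be closed under shifts by a kernel support of positive measure.
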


\begin{proof}
Since the convolution acts only in time and $K_\delta$ does not depend on $x$, the spatial and temporal derivatives commute with convolution:
\begin{equation}\label{eq:commute}
\partial_t (K_\delta * \mathbf{R}) \;=\; K_\delta * (\partial_t \mathbf{R}), 
\qquad
\partial_x (K_\delta * \mathbf{R}) \;=\; K_\delta * (\partial_x \mathbf{R}).
\end{equation}
By Young’s $L^1$–$L^\infty$ inequality with $\|K_\delta\|_{L^1}=1$,
\begin{equation}\label{eq:nonexp}
\|K_\delta * Z\|_{L^\infty(\mathcal{U})} \;\le\; \|Z\|_{L^\infty(\mathcal{U})},
\qquad
Z \in \{\mathbf{R},\,\partial_t \mathbf{R},\,\partial_x \mathbf{R}\}.
\end{equation}
Apply \eqref{eq:commute}–\eqref{eq:nonexp} termwise in \eqref{eq:C-proxy} with $u=K_\delta*\mathbf{R}$ to obtain
\begin{equation}\label{eq:proxy-terms}
\begin{aligned}
\|\partial_t (K_\delta*\mathbf{R})\|_{L^\infty(\mathcal{U})}
&\le \|\partial_t \mathbf{R}\|_{L^\infty(\mathcal{U})},\\
\|\partial_x (K_\delta*\mathbf{R})\|_{L^\infty(\mathcal{U})}
&\le \|\partial_x \mathbf{R}\|_{L^\infty(\mathcal{U})},\\
\|K_\delta*\mathbf{R}\|_{L^\infty(\mathcal{U})}
&\le \|\mathbf{R}\|_{L^\infty(\mathcal{U})}.
\end{aligned}
\end{equation}
Substituting \eqref{eq:proxy-terms} into \eqref{eq:C-proxy} yields \eqref{eq:consistency-ineq}. 
For the connection to the local truncation error, recall from Lemma~\ref{lem:euler-local-truncation} that
\begin{equation}\label{eq:Mf-from-lemma}
M_f \;=\; \sup_{(x,t)\in\mathcal{U}} 
\bigl\|\,\partial_t f(x,t)+\partial_x f(x,t)\,f(x,t)\,\bigr\|
\;\le\; \mathcal{C}(u;\mathcal{U}),
\end{equation}
with $f=u$. Taking $u=\mathbf{R}$ and $u=K_\delta*\mathbf{R}$ gives \eqref{eq:Mf-compare}, which in turn implies \eqref{eq:lte-upper} via Lemma~\ref{lem:euler-local-truncation}.
\end{proof}

\begin{theorem}[Global $O(h)$ convergence; chord has smaller constants]\label{thm:global-oh-chord}
Let $t_n=t_0+nh$ and consider explicit Euler
\begin{equation}\label{eq:euler-scheme-short}
x_{n+1}=x_n+h\,u(x_n,t_n).
\end{equation}
where $u\in\{\mathbf{R},\,K_\delta*\mathbf{R}\}$ and $K_\delta$ is the time–only kernel from Proposition~\ref{prop:consistency-chord}. 
Assume there is $\mathcal{U}_T$ containing the exact and numerical trajectories on $[t_0,T]$ such that
\begin{equation}\label{eq:regularity-short}
\begin{aligned}
&\|\partial_x u\|_{L^\infty(\mathcal{U}_T)},\ \|\partial_t u\|_{L^\infty(\mathcal{U}_T)},\\
&\|\partial_x \mathbf{R}\|_{L^\infty(\mathcal{U}_T)},\ \|\partial_t \mathbf{R}\|_{L^\infty(\mathcal{U}_T)},\ \|\mathbf{R}\|_{L^\infty(\mathcal{U}_T)}<\infty.
\end{aligned}
\end{equation}
Define
\begin{equation}\label{eq:L-M-short}
\begin{aligned}
&L_u:=\|\partial_x u\|_{L^\infty(\mathcal{U}_T)},\\
&M_u:=\sup_{(x,t)\in\mathcal{U}_T}\bigl\|\,\partial_t u+\partial_x u\,u\,\bigr\|.
\end{aligned}
\end{equation}
Then for $e^u_n:=\|x^u(t_n)-x^u_n\|$ and all $0\le n\le N$,
\begin{equation}\label{eq:global-oh-short}
e^u_n\;\le\;\frac{h\,M_u}{2L_u}\Bigl(\exp(L_u\,t_n)-1\Bigr),
\end{equation}
with the convention $e^u_n\le \tfrac{h\,t_n}{2}M_u$ when $L_u=0$. Moreover,
\begin{equation}\label{eq:compare-short}
L_{f_{\mathrm{cho}}}\le L_{f_{\mathrm{nai}}},\qquad
M_{f_{\mathrm{cho}}}\le M_{f_{\mathrm{nai}}},
\end{equation}
hence at $t_N=T$ there exist $\,\mathsf{C}_{\mathrm{cho}}\le \mathsf{C}_{\mathrm{nai}}$ independent of $h$ such that
\begin{equation}\label{eq:big-oh-short}
e^{\mathrm{cho}}_N\le \mathsf{C}_{\mathrm{cho}}\,h,\qquad
e^{\mathrm{nai}}_N\le \mathsf{C}_{\mathrm{nai}}\,h.
\end{equation}
\end{theorem}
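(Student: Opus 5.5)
The plan has three parts, in this order: first the global error bound \eqref{eq:global-oh-short} for an arbitrary field $u$, then the comparison \eqref{eq:compare-short}, then \eqref{eq:big-oh-short}, which follows from the first two by monotonicity. The comparison $M_{f_{\rm cho}}\le M_{f_{\rm nai}}$ is the genuinely delicate step. I expect it to need an additional hypothesis beyond those stated; I say below which one I would add and where it comes from in the paper.

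\textbf{Step 1: global error bound for a fixed field $u$.} I would run the standard Lady Windermere's fan / discrete Gronwall argument. Write
\begin{equation*}
x^u(t_{n+1})-x^u_{n+1}=\big[x^u(t_n)-x^u_n\big]+h\big[u(x^u(t_n),t_n)-u(x^u_n,t_n)\big]+\tau_{n+1}.
\end{equation*}
Both trajectories lie in $\mathcal{U}_T$, so the mean value theorem applies there and gives $\|u(x^u(t_n),t_n)-u(x^u_n,t_n)\|\le L_u e^u_n$. Lemma~\ref{lem:euler-local-truncation}, applied on each subinterval with $\mathcal{U}\subset\mathcal{U}_T$, gives $\|\tau_{n+1}\|\le \tfrac{h^2}{2}M_u$. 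Hence
\begin{equation*}
e^u_{n+1}\le(1+hL_u)\,e^u_n+\tfrac{h^2}{2}M_u,\qquad e^u_0=0.
\end{equation*}
Unrolling the recursion gives $e^u_n\le \tfrac{h^2M_u}{2}\cdot\tfrac{(1+hL_u)^n-1}{hL_u}$. Using $(1+hL_u)^n\le e^{L_u t_n}$, where the time origin is chosen so that $t_n=nh$, this yields \eqref{eq:global-oh-short}. When $L_u=0$ the recursion is simply additive, giving $e^u_n\le n\tfrac{h^2}{2}M_u=\tfrac{h\,t_n}{2}M_u$.

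\textbf{Step 2: the Lipschitz comparison.} This follows exactly as in Proposition~\ref{prop:consistency-chord}. Spatial differentiation commutes with the time-only convolution, and Young's $L^1$--$L^\infty$ inequality with $\|K_\delta\|_{L^1}=1$ then gives
\begin{equation*}
L_{f_{\rm cho}}=\|K_\delta*\partial_x\mathbf{R}\|_\infty\le\|\partial_x\mathbf{R}\|_\infty=L_{f_{\rm nai}}.
\end{equation*}

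\textbf{Step 3: the consistency constant (main obstacle).} I would not pass through the proxy $\mathcal{C}$. Instead I would compute the material derivative of $\hat u=K_\delta*\mathbf{R}$ exactly:
\begin{equation*}
\partial_t\hat u+\partial_x\hat u\,\hat u
= K_\delta*\big(\partial_t\mathbf{R}+\partial_x\mathbf{R}\,\mathbf{R}\big)
-\Big[K_\delta*(\partial_x\mathbf{R}\,\mathbf{R})-(K_\delta*\partial_x\mathbf{R})(K_\delta*\mathbf{R})\Big].
\end{equation*}
The bracket is the covariance $\operatorname{Cov}_{s\sim K_\delta}\big(\partial_x\mathbf{R}(x,t-s),\,\mathbf{R}(x,t-s)\big)$ under the probability measure $K_\delta(s)\,ds$, the same measure used in Proposition~\ref{prop:l2-contraction}. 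The first term is controlled by Young's inequality: $\|K_\delta*(\partial_t\mathbf{R}+\partial_x\mathbf{R}\,\mathbf{R})\|_\infty\le M_{f_{\rm nai}}$. So $M_{f_{\rm cho}}\le M_{f_{\rm nai}}$ follows whenever the covariance term vanishes.

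The covariance vanishes under the short-window local homogeneity assumption (A2), which says the field is approximately constant on $[t-\delta,t]$. I would strengthen (A2) to require that $\partial_x\mathbf{R}$ is constant in time on the support of $K_\delta$; then the covariance is identically zero and the comparison holds. The hypotheses as stated do not force the covariance to have a sign, so this strengthening is the step I expect to be weakest.

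If no such strengthening is admitted, my fallback would be to obtain a bound of the form $M_{f_{\rm cho}}\le M_{f_{\rm nai}}+\delta\,\|\partial_t\partial_x\mathbf{R}\|_\infty\|\mathbf{R}\|_\infty+\dots$. This uses that the oscillation of $\partial_x\mathbf{R}$ over a window of length $\delta$ is at most $\delta\|\partial_t\partial_x\mathbf{R}\|_\infty$. It would give the stated inequality only up to $O(\delta)$, and I would have to record that explicitly rather than claim the inequality as stated.

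\textbf{Step 4: the $O(h)$ bound \eqref{eq:big-oh-short}.} Set
\begin{equation*}
\mathsf{C}_u=\frac{M_u}{2L_u}\big(e^{L_uT}-1\big).
\end{equation*}
The map $L\mapsto(e^{LT}-1)/L$ is increasing in $L\ge0$, taking the value $T$ at $L=0$. Combined with $M\mapsto M$ being increasing, the inequalities $L_{f_{\rm cho}}\le L_{f_{\rm nai}}$ and $M_{f_{\rm cho}}\le M_{f_{\rm nai}}$ give $\mathsf{C}_{\rm cho}\le\mathsf{C}_{\rm nai}$. Neither constant depends on $h$, which gives \eqref{eq:big-oh-short}.
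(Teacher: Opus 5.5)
Your Steps 1, 2 and 4 coincide with the paper's proof: the error recurrence with discrete Gr\"onwall, and commuting $\partial_x$ with the time-only convolution plus Young's inequality to get $L_{f_{\rm cho}}\le L_{f_{\rm nai}}$. You also make explicit the monotonicity of $L\mapsto(e^{LT}-1)/L$, which the paper needs but never states.

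The real difference is Step 3, and there your suspicion is justified. The paper obtains $M_{f_{\rm cho}}\le M_{f_{\rm nai}}$ in one line by citing Proposition~\ref{prop:consistency-chord}. That proposition only gives $M_{f_{\rm cho}}\le\mathcal{C}(K_\delta*\mathbf{R};\mathcal{U})\le\mathcal{C}(\mathbf{R};\mathcal{U})$. Since $M_{f_{\rm nai}}\le\mathcal{C}(\mathbf{R};\mathcal{U})$ as well, the chain never reaches $M_{f_{\rm nai}}$. Your covariance term is exactly what is lost, and no argument from the stated hypotheses can remove it, because the inequality is false. Take $d=1$, $K_\delta=\delta^{-1}\mathbf{1}_{[0,\delta]}$ with $\delta<1$, $t\ge 0$, and $\mathbf{R}(x,t)=x\,g(t)$ with $g(t)=1/(t+1)$. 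This field solves inviscid Burgers, $\partial_t\mathbf{R}+\mathbf{R}\,\partial_x\mathbf{R}\equiv0$, so $M_{f_{\rm nai}}=0$. Its trajectories $x_0(t+1)/(t_0+1)$ are straight lines, so Euler is exact for it. With $G:=K_\delta*g$ and $g'=-g^2$,
\begin{equation*}
\partial_t\hat u+\hat u\,\partial_x\hat u=x\bigl(G'+G^2\bigr)=-x\,\mathrm{Var}_{s\sim K_\delta}\bigl[g(t-s)\bigr].
\end{equation*}
This is strictly negative for $x>0$. Hence $M_{f_{\rm cho}}>0=M_{f_{\rm nai}}$ on any bounded $\mathcal{U}_T$ containing a trajectory with $x_0>0$. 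Those chord trajectories are strictly concave, so already for the paper's single step $e^{\rm cho}_1>0=e^{\rm nai}_1$, and the natural constants violate $\mathsf{C}_{\rm cho}\le\mathsf{C}_{\rm nai}$. An extra assumption is therefore genuinely needed for the $M$-part of \eqref{eq:compare-short}. That is a defect of the statement and of the paper's proof, not of your plan.

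On your repairs, first consider the strengthened (A2). It forces $\partial_t\partial_x\mathbf{R}=0$, i.e.\ $\mathbf{R}(x,t)=\phi(x)+a(t)$ on the relevant region. Under it $L_{f_{\rm cho}}=L_{f_{\rm nai}}$, and the only remaining gain is in the $\partial_t$ term, so this covers a very narrow class. Your $O(\delta)$ fallback is the more informative correct statement.

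Second, declining to pass through $\mathcal{C}$ costs you more than necessary. Set $\mathsf{C}_u:=\frac{\mathcal{C}(u;\mathcal{U}_T)}{2L_u}\bigl(e^{L_uT}-1\bigr)$. Both error bounds remain valid, because $M_u\le\mathcal{C}(u;\mathcal{U}_T)$ by Lemma~\ref{lem:euler-local-truncation}. Proposition~\ref{prop:consistency-chord}, your Step 2, and your Step 4 monotonicity then give $\mathsf{C}_{\rm cho}\le\mathsf{C}_{\rm nai}$ with no extra hypothesis. This is what the paper's argument actually proves: \eqref{eq:big-oh-short} holds with proxy-based constants, while the $M$-comparison in \eqref{eq:compare-short} must be dropped or assumed.

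Finally, two domain conditions are used silently. Both you and the paper use $\|K_\delta*Z\|_{L^\infty(\mathcal{U})}\le\|Z\|_{L^\infty(\mathcal{U})}$, which needs the backward windows $\{(x,t-s):s\in\operatorname{supp}K_\delta\}$ to stay in $\mathcal{U}_T$. Your mean-value step also needs the segment between $x^u(t_n)$ and $x^u_n$ to lie in $\mathcal{U}_T$. State both as hypotheses.
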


\begin{proof}
Fix $u$ and write $L=\|\partial_x f\|_{L^\infty}$, $M=M_u$. By Lemma~\ref{lem:euler-local-truncation},
\begin{equation}\label{eq:recurrence-short}
e^u_{n+1}\le (1+hL)\,e^u_n+\frac{h^2}{2}M.
\end{equation}
Discrete Grönwall gives
\begin{equation}\label{eq:gronwall-short}
e^u_n\le \frac{hM}{2}\frac{(1+hL)^n-1}{hL}\le \frac{hM}{2L}\bigl(\exp(Lt_n)-1\bigr),
\end{equation}
which is \eqref{eq:global-oh-short}. For the comparison, time–only convolution commutes with $\partial_t,\partial_x$ and is $L^1$–$L^\infty$ nonexpansive, so
\begin{equation}\label{eq:young-short}
\begin{aligned}
&\|\partial_x (K_\delta*\mathbf{R})\|_\infty\le \|\partial_x \mathbf{R}\|_\infty,\quad
\|K_\delta*\mathbf{R}\|_\infty\le \|\mathbf{R}\|_\infty,\\
&\|\partial_t (K_\delta*\mathbf{R})\|_\infty\le \|\partial_t \mathbf{R}\|_\infty.
\end{aligned}
\end{equation}
Hence $L_{f_{\mathrm{cho}}}\le L_{f_{\mathrm{nai}}}$ and, by Proposition~\ref{prop:consistency-chord}, $M_{f_{\mathrm{cho}}}\le M_{f_{\mathrm{nai}}}$, proving \eqref{eq:compare-short} and \eqref{eq:big-oh-short}.
\end{proof}

\begin{remark}[Conclusion of Theorem~\ref{thm:global-oh-chord}]
The proof establishes the standard $O(h)$ global convergence of the explicit Euler method via the error recurrence (Eq.~\eqref{eq:recurrence-short}), the local error bound (Lemma~\ref{lem:euler-local-truncation}), and the discrete Grönwall inequality. The key insight is that the global error constant $C(f)$ is smaller for the chord control. This is because the chord field's time derivative $\partial_t f_{\rm cho}$ exhibits $L^\infty$ contraction (smoothing) via convolution (Proposition~\ref{prop:consistency-chord}), and its field magnitude $f_{\rm cho}$ does not increase. This directly reduces the consistency constant, guaranteeing a smaller global error bound for the same step size $h$.
\end{remark}

Theorem~\ref{thm:global-oh-chord} established that the global $O(h)$ convergence error is governed by the consistency constant $C(f)$, and that $C(f_{\rm cho}) \le C(f_{\rm nai})$. This implies that for the same step size $h$, the global error of the chord-controlled path is bounded by that of the naive path, $\text{Global Error}^{\rm cho} \le \text{Global Error}^{\rm nai}$.

\begin{corollary}[BIBO boundedness under $f=u$]\label{cor:bibo}
Assume the editing field $u$ has at most linear growth, e.g.,
$\|u(x,t)\|\le \beta \|x\| + b$ for all $(x,t)$ in the domain of interest.
Then the one–step explicit Euler update $x_{n+1}=x_n+h\,u(x_n,t_n)$ satisfies
\begin{equation}
\|x_{n+1}\|\le (1+h\beta)\|x_n\|+h\,b.
\end{equation}

Moreover, if only the trivial bound is desired (without growth assumptions),
\begin{equation}
\|x_{n+1}\|\le \|x_n\|+h\,\|u(x_n,t_n)\|.
\end{equation}
\end{corollary}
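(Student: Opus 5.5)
The plan is to apply the triangle inequality directly to the explicit Euler update and then feed in the linear-growth hypothesis.

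First, from $x_{n+1}=x_n+h\,u(x_n,t_n)$ and subadditivity of the norm with $h>0$, we get
\begin{equation*}
\|x_{n+1}\|\le \|x_n\|+h\,\|u(x_n,t_n)\|.
\end{equation*}
This is already the trivial bound, and it needs no growth assumption on $u$ beyond $u(x_n,t_n)$ being finite.

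Second, we assume that $(x_n,t_n)$ lies in the domain where the growth hypothesis holds. Substituting $\|u(x_n,t_n)\|\le \beta\|x_n\|+b$ into the trivial bound gives
\begin{equation*}
\|x_{n+1}\|\le \|x_n\|+h\beta\|x_n\|+hb=(1+h\beta)\|x_n\|+hb,
\end{equation*}
which is the first claim.

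Optionally, the one-step inequality can be unrolled over $N$ steps by induction, as in the discrete Gr\"onwall argument of Theorem~\ref{thm:global-oh-chord}. This gives
\begin{equation*}
\|x_N\|\le (1+h\beta)^N\|x_0\|+\frac{(1+h\beta)^N-1}{\beta}\,b\le e^{\beta t_N}\|x_0\|+\frac{b}{\beta}\bigl(e^{\beta t_N}-1\bigr).
\end{equation*}
This makes the bounded-input bounded-output (BIBO) interpretation explicit.

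We can also note how this ties to the chord field. Since $\|K_\delta*\mathbf{R}\|_\infty\le\|\mathbf{R}\|_\infty$ (the Young estimate used in Proposition~\ref{prop:consistency-chord}), and since a time-only convolution preserves a linear growth bound $\beta\|x\|+b$ when $\mathbf{R}$ satisfies it uniformly in time, the constants $\beta,b$ for $f_{\rm cho}$ are no worse than those for $f_{\rm nai}$.

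There is no real obstacle in this argument. The only care needed is checking that the iterates stay inside the domain where the growth assumption holds.
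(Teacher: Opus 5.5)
Your proof is correct and matches the intended argument: the paper states this corollary without a written proof, and it follows from the triangle inequality $\|x_n + h\,u(x_n,t_n)\| \le \|x_n\| + h\|u(x_n,t_n)\|$ followed by substitution of the linear-growth bound, exactly as you do. Your optional extensions are also sound: the unrolled bound tacitly takes $t_0=0$ and $\beta>0$, and time-only convolution with a nonnegative unit-mass kernel does preserve the growth constants.
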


\section{Gap to the Optimal Control Field}
\label{sec:gap-to-optimal}

We now analyze the gap between our estimators and the "true" optimal control $u^\star$. We interpret $u^\star$ as the solution that minimizes the Benamou–Brenier energy under the controlled continuity equation:
\begin{equation}
\partial_t\rho_t + \nabla \cdot \big(\rho_t\, u\big) = 0.
\end{equation}
The following theorem frames $\mathbf{R}(t)$ as a noisy observation of $u^\star(t)$ and shows that the Chord estimator $\hat{u}(t)$ acts as a risk-reducing smoother.

\begin{theorem}[Risk Reduction via Kernel Smoothing]
\label{thm:risk-reduction}
Let the true optimal control $u^\star: \mathbb{R} \to \mathbb{R}^d$ be $C^2$ (twice continuously differentiable). We observe the proxy field
\begin{equation}
\mathbf{R}(t) = u^\star(t) + \eta(t),
\end{equation}
where the noise process $\eta(t)$ satisfies:
\begin{itemize}
 \item[(N1)] Zero-mean: $\mathbb{E}[\eta(t)] = 0$ for all $t$.
 \item[(N2)] Uncorrelated: $\mathbb{E}[\eta(s)\eta(r)^\top] = 0$ for almost every $s \neq r$.
 \item[(N3)] Bounded variance: $\mathbb{E}[\|\eta(t)\|^2] = \sigma^2(t) \le \bar\sigma^2 < \infty$.
\end{itemize}
Let $K$ be a \textbf{non-negative, unit-mass, second-order kernel}, i.e.,
$\int K(s)ds = 1$, $\int s K(s) ds = 0$.
Define the kernel family $K_\delta(s) = \frac{1}{\delta} K(\frac{s}{\delta})$ for a bandwidth $\delta > 0$, and the chord estimator
\begin{equation}
\hat{u}(t) = (K_\delta * \mathbf{R})(t) = \int K_\delta(s) \mathbf{R}(t-s) \,ds.
\end{equation}
Then, the Mean Squared Error (Risk) of the chord estimator at time $t$ is bounded by:
\begin{equation}
\boxed{ \mathbb{E}[\|\hat{u}(t) - u^\star(t)\|^2] \le \underbrace{c_1 \delta^4 \|u^{\star\prime\prime}\|_\infty^2}_{\text{Bias}^2} + \underbrace{\|K_\delta\|_{L^2}^2 \sigma^2(t)}_{\text{Variance}} },
\end{equation}
where $c_1 = \frac{1}{4} m_2(K)^2$ and $m_2(K) = \int s^2 K(s) ds$.
In contrast, the risk of the naive estimator $\mathbf{R}(t)$ is $\mathbb{E}[\|\mathbf{R}(t) - u^\star(t)\|^2] = \sigma^2(t)$. For a non-degenerate kernel, choosing an appropriate $\delta$ (see Remark \ref{rem:bandwidth}) ensures the chord estimator achieves a strictly lower risk.
\end{theorem}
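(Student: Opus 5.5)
The plan is the standard bias--variance decomposition from kernel regression. Since convolution with $K_\delta$ is linear and $\int K_\delta=1$, write
\begin{equation*}
\hat u(t)-u^\star(t)=\underbrace{\Big(\int K_\delta(s)\,u^\star(t-s)\,ds-u^\star(t)\Big)}_{=:b(t)}+\underbrace{\int K_\delta(s)\,\eta(t-s)\,ds}_{=:\xi(t)} .
\end{equation*}
Here $b(t)$ is deterministic and, by (N1), $\mathbb{E}[\xi(t)]=0$. So the cross term $2\,\mathbb{E}[b(t)^\top\xi(t)]$ vanishes, and
\begin{equation*}
\mathbb{E}\|\hat u(t)-u^\star(t)\|^2=\|b(t)\|^2+\mathbb{E}\|\xi(t)\|^2 .
\end{equation*}
I would bound the two terms separately and then compare the total with the naive risk. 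The naive risk equals $\mathbb{E}\|\eta(t)\|^2=\sigma^2(t)$ directly from (N3).

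\textbf{Bias term.} Use Taylor's theorem with integral remainder, $u^\star(t-s)=u^\star(t)-s\,u^{\star\prime}(t)+r(t,s)$, where $\|r(t,s)\|\le \tfrac12 s^2\|u^{\star\prime\prime}\|_\infty$ by the $C^2$ hypothesis. Integrate against $K_\delta\ge 0$. The constant term cancels because of unit mass, and the linear term vanishes because the first moment of $K$ is zero, which carries over to $K_\delta$ after the substitution $s=\delta\sigma$. This leaves
\begin{equation*}
\|b(t)\|\le \tfrac12\|u^{\star\prime\prime}\|_\infty\int s^2K_\delta(s)\,ds=\tfrac12\,\delta^2\,m_2(K)\,\|u^{\star\prime\prime}\|_\infty .
\end{equation*}
Squaring gives exactly $c_1\delta^4\|u^{\star\prime\prime}\|_\infty^2$ with $c_1=\tfrac14 m_2(K)^2$. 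Nonnegativity of $K$ is what lets the norm pass inside the integral.

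\textbf{Variance term and main obstacle.} Expand
\begin{equation*}
\mathbb{E}\|\xi(t)\|^2=\iint K_\delta(s)K_\delta(r)\,\mathrm{tr}\,\mathbb{E}[\eta(t-s)\eta(t-r)^\top]\,ds\,dr .
\end{equation*}
This is the delicate step. Read literally, (N2) kills the off-diagonal set, which has full measure, so the double integral would be $0$. To get the stated $\|K_\delta\|_{L^2}^2\sigma^2(t)$, I would interpret (N2) in the white-noise sense: the covariance is $\sigma^2(\cdot)\,\delta_{\rm Dirac}(s-r)$ divided across coordinates, or equivalently $\xi$ is a Wiener integral $\int K_\delta(s)\sigma(t-s)\,dW$. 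With that reading, the It\^o/Wiener isometry gives $\mathbb{E}\|\xi(t)\|^2=\int K_\delta(s)^2\sigma^2(t-s)\,ds$.

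Two routes then close the bound. If $\sigma^2$ is locally constant on the window, as in (A2), this equals $\|K_\delta\|_{L^2}^2\sigma^2(t)$. Otherwise it is at most $\|K_\delta\|_{L^2}^2\bar\sigma^2$, and one states the bound with $\bar\sigma^2$ or $\sup_{[t-\delta,t]}\sigma^2$. I expect this modeling point to be the only real obstacle, and I would flag it explicitly as an assumption. Note also that $\|K_\delta\|_{L^2}^2=\|K\|_{L^2}^2/\delta$.

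\textbf{Strict improvement.} Adding the two bounds gives the boxed inequality. The risk is then $c_1\delta^4 A+\|K\|_2^2\sigma^2/\delta$ with $A=\|u^{\star\prime\prime}\|_\infty^2$. This is minimized at $\delta^\star\propto(\|K\|_2^2\sigma^2/A)^{1/5}$, which is the content of Remark~\ref{rem:bandwidth}, and it falls below $\sigma^2(t)$ whenever $\|K\|_2^2/\delta<1$ and $\delta$ is still small enough that the bias is negligible. This gives the claimed strict risk reduction for a non-degenerate kernel.
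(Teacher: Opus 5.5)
You follow the paper's route closely. It has the same bias--variance split and the same second-order Taylor bound using the vanishing first moment; your integral-remainder form is cleaner than the paper's single Lagrange point $\theta_s$, which does not hold verbatim for $\mathbb{R}^d$-valued $u^\star$, though the norm bound survives. It also has the same collapse onto the diagonal and the same bandwidth balancing.

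The two loose spots you flag are real, and the paper's proof asserts straight through them. It claims (N2) makes the double integral ``collapse to the diagonal''. It then bounds $\int K_\delta(s)^2\sigma^2(t-s)\,ds$ by $\sigma^2(t)\|K_\delta\|_{L^2}^2$ without any local-constancy assumption.

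Your repair, however, is inconsistent with the hypotheses. Under a white-noise covariance $\sigma^2(s)\,\delta_{\mathrm{Dirac}}(s-r)$, the value $\eta(t)$ has no finite second moment. So (N3) fails and the naive risk is not $\sigma^2(t)$. Your final comparison of $\|K\|_{L^2}^2\sigma^2/\delta$ against $\sigma^2(t)$ therefore mixes two models.

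For the boxed inequality alone you need no reinterpretation, because it is only an upper bound. Assume $\eta$ is jointly measurable; the literal reading needs this anyway for $\hat u(t)$ to be a pathwise integral and for $\mathbb{E}[\xi(t)]=0$. Then $|\mathbb{E}\langle\eta(a),\eta(b)\rangle|\le\bar\sigma^2$ and $K_\delta\otimes K_\delta\in L^1(\mathbb{R}^2)$ justify Fubini. (N2) kills the integrand off a null set, and you get $\mathbb{E}\|\xi(t)\|^2=0\le\|K_\delta\|_{L^2}^2\sigma^2(t)$.

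The same computation with $\mathbf{1}_{[a,b]}$ in place of $K_\delta$, though, forces $\sigma^2=0$ almost everywhere, so the literal model is degenerate. A meaningful comparison needs an explicitly stated noise model, with the naive risk computed inside the same model. One example is uncorrelated samples on a grid of spacing $h$, where the chord variance is about $h\|K_\delta\|_{L^2}^2\sigma^2$ and the naive risk is $\sigma^2$.

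The strict-improvement step is a genuine gap, shared with the paper's ``choosing $\delta$ appropriately''. You need a single $\delta$ with $\|K\|_{L^2}^2/\delta<1$ and negligible bias at the same time, and these need not be compatible. Set $A=\|u^{\star\prime\prime}\|_\infty^2$. The minimum over $\delta$ of $c_1A\delta^4+\|K\|_{L^2}^2\sigma^2/\delta$ equals $\tfrac{5}{4}\|K\|_{L^2}^2\sigma^2/\delta^\star$, where $(\delta^\star)^5=\|K\|_{L^2}^2\sigma^2/(4c_1A)$. This is below $\sigma^2$ if and only if $\delta^\star>\tfrac{5}{4}\|K\|_{L^2}^2$, that is, only when $\sigma^2/A$ exceeds a kernel-dependent constant.

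So the bound certifies strict risk reduction only under a signal-to-noise condition: noise must be large relative to the curvature of $u^\star$. The final claim should carry that hypothesis rather than being asserted for every non-degenerate kernel.
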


\begin{remark}[Causal (One-Sided) Kernels]
The assumption $\int s K(s) ds = 0$ (a second-order kernel) requires $K$ to be symmetric, which is non-causal. If we enforce a \textbf{causal, non-negative} kernel (e.g., $\operatorname{supp}(K) \subset [0, 1]$ as in our derivation), then $m_1(K) = \int s K(s) ds > 0$. The Taylor expansion (Step 2 in the proof) will be dominated by the first-order term, yielding a bias of $O(\delta)$ and a squared bias of $O(\delta^2)$. The risk bound becomes $O(\delta^2) + O(\delta^{-1})\sigma^2(t)$, but the conclusion of risk reduction still holds.
\end{remark}

\begin{proof}
We decompose the Mean Squared Error (Risk) into its squared bias and variance components.
\begin{equation}
\label{eq:proof-bias-var-decomp}
\begin{aligned}
\mathbb{E}[\|\hat{u}(t) - u^\star(t)\|^2] &= \underbrace{\|\mathbb{E}[\hat{u}(t)] - u^\star(t)\|^2}_{\text{Bias}^2} \\ &+ \underbrace{\mathbb{E}[\|\hat{u}(t) - \mathbb{E}[\hat{u}(t)]\|^2]}_{\text{Variance}}.
\end{aligned}
 \tag{A}
\end{equation}

\noindent\textbf{1. Bound on the Bias Term.}
By the linearity of expectation and convolution, and using (N1) ($\mathbb{E}[\eta(t)] = 0$), the expected value of the estimator is:
\begin{equation}
\begin{aligned}
\mathbb{E}[\hat{u}(t)]& = \mathbb{E}[(K_\delta * (u^\star + \eta))(t)] \\ 
& = (K_\delta * u^\star)(t) + (K_\delta * \mathbb{E}[\eta])(t) \\
& = (K_\delta * u^\star)(t).
\end{aligned}
\end{equation}
The bias is the difference between this expectation and the true value:
\begin{equation}
\mathrm{Bias}(t) = (K_\delta * u^\star)(t) - u^\star(t) = \int K_\delta(s) \big( u^\star(t-s) - u^\star(t) \big) \,ds.
\end{equation}
We apply a second-order Taylor expansion to $u^\star(t-s)$ around $s=0$:
\begin{equation}
u^\star(t-s) = u^\star(t) - s u^{\star\prime}(t) + \frac{s^2}{2} u^{\star\prime\prime}(t - \theta_s s), \quad \theta_s \in (0,1).
\end{equation}
Substituting this into the bias integral:
\begin{align*}
\mathrm{Bias}(t) &= \int K_\delta(s) \left( -s u^{\star\prime}(t) + \frac{s^2}{2} u^{\star\prime\prime}(\dots) \right) \,ds \\
&= -u^{\star\prime}(t) \underbrace{\int s K_\delta(s) \,ds}_{=0 \text{ (2nd-order)}} + \frac{1}{2} \int s^2 K_\delta(s) u^{\star\prime\prime}(\dots) \,ds.
\end{align*}
The first term vanishes due to the second-order kernel assumption. We bound the remainder:
\begin{align*}
\|\mathrm{Bias}(t)\| &\le \frac{1}{2} \int s^2 K_\delta(s) \|u^{\star\prime\prime}(t - \theta_s s)\| \,ds \\
&\le \frac{1}{2} \|u^{\star\prime\prime}\|_\infty \int s^2 K_\delta(s) \,ds.
\end{align*}
Since $\int s^2 K_\delta(s) ds = \int s^2 \frac{1}{\delta} K(\frac{s}{\delta}) ds = \delta^2 \int u^2 K(u) du = \delta^2 m_2(K)$,
\begin{equation*}
\|\mathrm{Bias}(t)\| \le \frac{1}{2} m_2(K) \delta^2 \|u^{\star\prime\prime}\|_\infty.
\end{equation*}
Squaring this gives the bound on the first term of (A):
\begin{equation}
\label{eq:proof-bias-bound}
\|\mathrm{Bias}(t)\|^2 \le \frac{1}{4} m_2(K)^2 \delta^4 \|u^{\star\prime\prime}\|_\infty^2. \tag{B}
\end{equation}

\noindent\textbf{2. Bound on the Variance Term.}
The variance term is the expected norm of the centered estimator $\zeta(t)$:
\begin{equation}
\zeta(t) = \hat{u}(t) - \mathbb{E}[\hat{u}(t)] = (K_\delta * \eta)(t) = \int K_\delta(s) \eta(t-s) \,ds.
\end{equation}
We write the squared norm as an inner product and apply Fubini's theorem:
\begin{align*}
\mathrm{Var}(t) &= \mathbb{E}[\langle \zeta(t), \zeta(t) \rangle] \\
&= \mathbb{E}\left[ \left\langle \int K_\delta(s) \eta(t-s) ds, \int K_\delta(r) \eta(t-r) dr \right\rangle \right] \\
&= \iint K_\delta(s) K_\delta(r) \mathbb{E}[\langle \eta(t-s), \eta(t-r) \rangle] \,ds \,dr.
\end{align*}
By assumption (N2), the noise is uncorrelated, so the cross-terms where $s \neq r$ (or $t-s \neq t-r$) have zero expectation (a.e.). The integral collapses to the diagonal $s=r$:
\begin{align*}
\mathrm{Var}(t) &= \int K_\delta(s)^2 \mathbb{E}[\|\eta(t-s)\|^2] \,ds \\
&= \int K_\delta(s)^2 \sigma^2(t-s) \,ds.
\end{align*}
Bounding the local variance by the value at $t$ (or by $\sup \sigma^2$):
\begin{equation}
\mathrm{Var}(t) \le \sigma^2(t) \int K_\delta(s)^2 \,ds = \sigma^2(t) \|K_\delta\|_{L^2}^2. \tag{C}
\end{equation}

\noindent\textbf{3. Combining the Bounds.}
Substituting (B) and (C) into (A) yields the theorem's risk bound. In contrast, the risk of the naive estimator (which corresponds to $K = \delta_0$, a Dirac delta) is $\mathbb{E}[\|\mathbf{R}(t) - u^\star(t)\|^2] = \mathbb{E}[\|\eta(t)\|^2] = \sigma^2(t)$.
Since for any non-degenerate kernel $\|K_\delta\|_{L^2}^2 < \infty$ (and specifically $\|K_\delta\|_{L^2}^2 \propto \delta^{-1} < \infty$ for $\delta > 0$), the chord estimator achieves variance reduction. By choosing $\delta$ appropriately to balance the $O(\delta^4)$ bias and the $O(\delta^{-1})\sigma^2$ variance, the total risk is strictly reduced.
\end{proof}

\begin{remark}[Optimal Bandwidth]
\label{rem:bandwidth}
The scaling of the $L^2$ norm is $\|K_\delta\|_{L^2}^2 = \int \frac{1}{\delta^2} K(\frac{s}{\delta})^2 ds = \frac{\|K\|_{L^2}^2}{\delta}$. The risk bound scales as:
\begin{equation}
\mathrm{Risk}(\delta) \lesssim c_1 \delta^4 \|u^{\star\prime\prime}\|_\infty^2 + \frac{\|K\|_{L^2}^2}{\delta} \sigma^2(t).
\end{equation}
Minimizing this with respect to $\delta$ gives the classic optimal bandwidth for second-order kernel smoothing, $\delta^\star \asymp \left( \frac{\|K\|_{L^2}^2 \sigma^2(t)}{\|u^{\star\prime\prime}\|_\infty^2} \right)^{1/5}$.
\end{remark}

\begin{theorem}[Gap to Benamou–Brenier Optimal Energy]
\label{thm:energy-gap}
Let $u^\star$ be the true, energy-minimizing optimal control in the space of feasible controls $\mathcal{U}$. Let $\mathcal{U}_\delta \subset \mathcal{U}$ be the subspace of controls that are piecewise linear in time (i.e., chords) on a grid of size $\delta$.
Let $P_\delta: \mathcal{U} \to \mathcal{U}_\delta$ be the $L^2_\rho$-orthogonal projection onto this subspace, where the $L^2_\rho$ norm is induced by the Benamou–Brenier energy functional $\mathcal{E}[u;\rho]$.
If we identify the idealized chord estimator $\hat{u}$ with this projection, $\hat{u} = P_\delta u^\star$, the energy gap is bounded by:
\begin{equation}
\mathcal{E}[\hat{u};\rho] - \mathcal{E}[u^\star;\rho] \le \|(I-P_\delta)u^\star\|_\rho \cdot \|u^\star\|_\rho.
\end{equation}
Furthermore, if $u^\star \in H^1$ (i.e., $\partial_t u^\star$ is in $L^2_\rho$), the approximation error of the projection is $O(\delta)$, leading to a final bound:
\begin{equation}
\mathcal{E}[\hat{u};\rho] - \mathcal{E}[u^\star;\rho] \le C \delta \|\partial_t u^\star\|_\rho \|u^\star\|_\rho = O(\delta).
\end{equation}
\end{theorem}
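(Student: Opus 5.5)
The plan is to treat $\mathcal{E}[u;\rho]=\tfrac12\|u\|_\rho^2$ as a quadratic form on the Hilbert space $L^2_\rho$, with $\rho$ held fixed so that $\mathcal{E}$ really is the squared norm induced by the $L^2_\rho$ inner product $\langle u,v\rangle_\rho=\int_0^1\!\int u\cdot v\,\rho_t\,dx\,dt$. Write $\hat u=P_\delta u^\star$ and $r:=(I-P_\delta)u^\star$, so that $u^\star=\hat u+r$ with $\langle \hat u,r\rangle_\rho=0$ by orthogonality of the projection.

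The first step is the algebraic identity
\begin{equation*}
\mathcal{E}[\hat u;\rho]-\mathcal{E}[u^\star;\rho]=\tfrac12\big(\|\hat u\|_\rho^2-\|u^\star\|_\rho^2\big)=\tfrac12\langle \hat u-u^\star,\hat u+u^\star\rangle_\rho .
\end{equation*}
Since $\hat u-u^\star=-r$, Cauchy--Schwarz gives
\[
|\cdot|\le \tfrac12\|r\|_\rho\,\|\hat u+u^\star\|_\rho .
\]
Then $\|\hat u+u^\star\|_\rho\le \|\hat u\|_\rho+\|u^\star\|_\rho\le 2\|u^\star\|_\rho$, because orthogonal projections are nonexpansive. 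This yields the first claimed bound $\|(I-P_\delta)u^\star\|_\rho\|u^\star\|_\rho$.

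In fact Pythagoras shows the left-hand side equals $-\tfrac12\|r\|_\rho^2\le 0$, so the bound holds trivially. I will note this, but present the Cauchy--Schwarz route, which is what the statement's form suggests.

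The second step bounds the projection error $\|(I-P_\delta)u^\star\|_\rho$ by $C\delta\|\partial_t u^\star\|_\rho$. Because $P_\delta$ is the best approximation in $\mathcal{U}_\delta$, it suffices to exhibit one element of $\mathcal{U}_\delta$ within $C\delta\|\partial_t u^\star\|_\rho$ of $u^\star$. I would use the piecewise-constant-in-time local average on each cell $I_k=[k\delta,(k+1)\delta]$, which is contained in the piecewise-linear space; alternatively the nodal interpolant works. On each cell, a one-dimensional Poincaré inequality in $t$, applied pointwise in $x$, gives
\[
\int_{I_k}\|u^\star-\bar u_k\|^2dt\le \frac{\delta^2}{\pi^2}\int_{I_k}\|\partial_t u^\star\|^2dt .
\]
Summing over cells and integrating against $\rho$ then gives the $O(\delta)$ estimate. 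Combining this with step one gives the final bound with $C=1$ up to the Poincaré constant.

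The main obstacle is the weight $\rho_t(x)$. The inner product is time-weighted by $\rho_t$, so Poincaré in $t$ does not directly hold against the measure $\rho_t(x)\,dt$. A related issue is that an $L^2_\rho$-orthogonal projection onto functions piecewise linear in $t$ is well defined only if $\rho$ is fixed and the weight is controlled. I would handle this by assuming that along each cell the density is comparable to its average, $c_0\le \rho_t(x)/\rho_s(x)\le c_1$ for $s,t\in I_k$; for example, this holds if $\partial_t\log\rho$ is bounded. The ratio $c_1/c_0$ then gets absorbed into $C$.
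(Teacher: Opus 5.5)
Your proposal follows essentially the same route as the paper. You use the identity $\|P_\delta u\|_\rho^2-\|u\|_\rho^2=\langle (P_\delta-I)u,(P_\delta+I)u\rangle_\rho$ with Cauchy--Schwarz and $\|P_\delta\|\le 1$, the Pythagorean remark that the gap is actually $-\tfrac12\|(I-P_\delta)u^\star\|_\rho^2\le 0$, and a cellwise Poincar\'e--Wirtinger (Jackson-type) bound for the projection error. Your explicit treatment of the time-dependent weight $\rho_t(x)$ is more careful than the paper, which simply cites a ``standard'' Jackson inequality. Three small notes on that part. Your Pythagorean observation already gives both displayed bounds with no comparability assumption; that assumption is needed only for the standalone $O(\delta)$ projection-error estimate. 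If ``chords'' means continuous piecewise-linear functions, the piecewise-constant average is not in $\mathcal{U}_\delta$, so you should use the nodal interpolant instead.
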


\begin{proof}
The proof proceeds in three steps: defining the weighted Hilbert space, applying a projection identity, and bounding the projection error using approximation theory.

\noindent\textbf{1. Weighted Hilbert Space and Energy.}
We define $H := L^2_\rho([0,1] \times \Omega; \mathbb{R}^d)$ as the Hilbert space of vector fields weighted by the density $\rho_t(x)$. The inner product is:
\begin{equation}
\langle a, b \rangle_\rho := \int_0^1 \int_\Omega a(x,t) \cdot b(x,t) \rho_t(x) \,dx \,dt,
\end{equation}
with the induced norm $\|a\|_\rho^2 = \langle a, a \rangle_\rho$. The Benamou–Brenier kinetic energy is $\mathcal{E}[u;\rho] = \frac{1}{2} \|u\|_\rho^2$.
We define $P_\delta: H \to \mathcal{U}_\delta$ as the orthogonal projection onto the subspace of piecewise linear functions (chords) with respect to this inner product. We analyze the idealized, noiseless estimator $\hat{u} = P_\delta u^\star$.

\noindent\textbf{2. Projection Identity and Upper Bound.}
For any $u \in H$, we use the algebraic identity $\langle (P_\delta - I)u, (P_\delta + I)u \rangle_\rho = \langle P_\delta u, P_\delta u \rangle_\rho - \langle u, u \rangle_\rho$, which gives:
\begin{equation}
\|P_\delta u\|_\rho^2 - \|u\|_\rho^2 = \langle (P_\delta - I)u, (P_\delta + I)u \rangle_\rho.
\end{equation}
Applying this to $u = u^\star$ and dividing by 2:
\begin{align}
\mathcal{E}[\hat{u};\rho] - \mathcal{E}[u^\star;\rho] &= \frac{1}{2} \langle (P_\delta - I)u^\star, (P_\delta + I)u^\star \rangle_\rho \\
&\le \frac{1}{2} \|(I - P_\delta)u^\star\|_\rho \cdot \|(I + P_\delta)u^\star\|_\rho
\end{align}
by the Cauchy–Schwarz inequality. Since $P_\delta$ is an orthogonal projection, its operator norm is $\|P_\delta\| \le 1$. Thus, by the triangle inequality:
\begin{equation}
\|(I + P_\delta)u^\star\|_\rho \le \|I u^\star\|_\rho + \|P_\delta u^\star\|_\rho \le \|u^\star\|_\rho + \|u^\star\|_\rho = 2 \|u^\star\|_\rho.
\end{equation}
Substituting this back, we absorb the constant $\frac{1}{2} \cdot 2 = 1$ into the inequality:
\begin{equation}
\label{eq:proof-energy-gap-bound}
\mathcal{E}[\hat{u};\rho] - \mathcal{E}[u^\star;\rho] \le \|(I - P_\delta)u^\star\|_\rho \cdot \|u^\star\|_\rho.
\end{equation}

\noindent\textbf{3. Approximation Error of Chord Space ($O(\delta)$).}
The term $\|(I - P_\delta)u^\star\|_\rho$ is the minimal $L^2_\rho$-error when approximating $u^\star$ from the subspace $\mathcal{U}_\delta$. This is a standard result in approximation theory (a Jackson-type inequality). For a function $u^\star$ in the Sobolev space $H^1$ (meaning its first derivative $\partial_t u^\star$ is in $L^2_\rho$), the error of the best piecewise linear approximation on a grid of size $\delta$ is bounded by the first derivative:
\begin{equation}
\label{eq:proof-jackson}
\|(I - P_\delta)u^\star\|_\rho \le C_{\text{app}} \delta \|\partial_t u^\star\|_\rho.
\end{equation}
This arises from applying the Poincaré–Wirtinger inequality on each sub-interval $[t_k, t_{k+1}]$ and summing the errors. The constant $C_{\text{app}}$ depends on the regularity of the grid but not on $\delta$.

Substituting Eq.~\eqref{eq:proof-jackson} into Eq.~\eqref{eq:proof-energy-gap-bound} gives the final $O(\delta)$ bound:
\begin{equation}
\mathcal{E}[\hat{u};\rho] - \mathcal{E}[u^\star;\rho] \le \big( C_{\text{app}} \delta \|\partial_t u^\star\|_\rho \big) \cdot \|u^\star\|_\rho = O(\delta).
\end{equation}
\end{proof}

\begin{remark}[Tighter Bound for True Orthogonal Projections]
If $\hat{u}$ is \emph{exactly} the $L^2_\rho$-orthogonal projection $P_\delta u^\star$, the Pythagorean theorem provides a much tighter (and intuitive) result. Since $u^\star = P_\delta u^\star + (I - P_\delta)u^\star$ is an orthogonal decomposition:
\begin{equation}
\|u^\star\|_\rho^2 = \|P_\delta u^\star\|_\rho^2 + \|(I - P_\delta)u^\star\|_\rho^2.
\end{equation}
Therefore, the energy gap is:
\begin{equation}
\begin{aligned}
\mathcal{E}[\hat{u};\rho] - \mathcal{E}[u^\star;\rho] 
& = \frac{1}{2} \big( \|P_\delta u^\star\|_\rho^2 - \|u^\star\|_\rho^2 \big) \\
& = -\frac{1}{2} \|(I - P_\delta)u^\star\|_\rho^2 \\
& \le 0.
\end{aligned}
\end{equation}
This confirms that the projection onto the chord subspace \emph{never} increases the energy. The $O(\delta)$ bound derived in the main proof is a looser upper bound, but has the advantage of also holding for quasi-projections or causal smoothing operators (like our implemented kernel) whose operator norms are bounded by 1.
\end{remark}


\begin{theorem}[One-Step Error and Stability Condition (Euler)]
\label{thm:one-step-error-stability}
Assume (A2) ($u \in W^{1,\infty}$), and that $\mathbf R$, $K_\delta*\mathbf R$ 
have bounded first derivatives in $(x,t)$ on the relevant domain, with $K_\delta\ge 0$ and $\int K_\delta=1$.
The one-step local truncation errors (from Lemma \ref{lem:euler-local-truncation}) for the naive and chord controls are bounded by:
\begin{align}
\|\tau_{n+1}^{\rm nai}\| &\le \tfrac{1}{2} h^2 \, C_{\rm nai} \\
\|\tau_{n+1}^{\rm cho}\| &\le \tfrac{1}{2} h^2 \, C_{\rm cho}
\end{align}
where $C_{\rm nai}$ and $C_{\rm cho}$ are the global consistency constants:
\begin{equation}
\boxed{
\begin{aligned}
C_{\rm nai} &:= \sup_{t\in[0,T]}\Big(\|\partial_t u\|
     + \|\partial_x u\|\,\|\mathbf R\|\Big), \\
C_{\rm cho} &:= \sup_{t\in[0,T]}\Big(\|\partial_t u\|
     + \|\partial_x u\|\,\|\hat u\|\Big).
\end{aligned}}
\end{equation}
Due to the $L^\infty$ contraction properties of the kernel $K_\delta$, $C_{\rm cho} \le C_{\rm nai}$.
Furthermore, the stability condition for the explicit Euler method (e.g., $hL < 1$) depends only on $L = \sup \|\partial_x u\|$ and is identical for both control fields.
\end{theorem}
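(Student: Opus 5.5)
The argument has three parts: the two truncation bounds, the comparison $C_{\rm cho}\le C_{\rm nai}$, and the stability claim. For the truncation bounds I apply Lemma~\ref{lem:euler-local-truncation} twice, once with $f=f_{\rm nai}=\mathbf R$ and once with $f=f_{\rm cho}=K_\delta*\mathbf R=\hat u$. Assumption (A2) together with the bounded first derivatives of $\mathbf R$ and $K_\delta*\mathbf R$ gives the finiteness hypotheses \eqref{eq:bounds} on a set $\mathcal U$ containing the trajectory. The lemma then yields $\|\tau_{n+1}\|\le \tfrac{h^2}{2}M_f$. Expanding $M_f=\sup\|\partial_t f+\partial_x f\,f\|$ by the triangle inequality and submultiplicativity gives a bound of the form $\sup_t(\|\partial_t f\|+\|\partial_x f\|\,\|f\|)$.

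To reach the displayed constants, I read the derivative factors $\partial_t u,\partial_x u$ as those of the transporting field. The only factor that separates the two schemes is the velocity magnitude, $\|\mathbf R\|$ for the naive control and $\|\hat u\|$ for the chord control. Taking the supremum over $t\in[0,T]$ of each pointwise bound then produces $C_{\rm nai}$ and $C_{\rm cho}$, which dominate the respective $M_f$.

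The comparison comes from Young's $L^1$--$L^\infty$ inequality, used as in Proposition~\ref{prop:consistency-chord}. Since $K_\delta\ge0$ and $\int K_\delta=1$, for each $t$,
\[
\|\hat u(\cdot,t)\|=\Big\|\int K_\delta(s)\,\mathbf R(\cdot,t-s)\,ds\Big\|\le \sup_{s}\|\mathbf R(\cdot,t-s)\|\le \sup_{t'\in[0,T]}\|\mathbf R(\cdot,t')\|,
\]
and hence $\sup_t\|\hat u\|\le\sup_t\|\mathbf R\|$. The first summand $\|\partial_t u\|$ is common to both constants and the multiplier $\|\partial_x u\|$ is nonnegative, so $C_{\rm cho}\le C_{\rm nai}$ follows. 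If one instead reads $u$ as the field actually integrated in each case, the derivative factors are also contracted: because time-only convolution commutes with $\partial_t$ and $\partial_x$ by \eqref{eq:commute}, Young's inequality gives $\|\partial_t\hat u\|\le\|\partial_t\mathbf R\|$ and $\|\partial_x\hat u\|\le\|\partial_x\mathbf R\|$, and the inequality still holds.

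The main obstacle is the supremum-over-time issue. The convolution at time $t$ draws on $\mathbf R$ at earlier times $t-s$, so a pointwise-in-$t$ comparison can fail. The inequality is valid only after taking suprema over the whole window, with $\mathbf R$ zero-padded or the window $[t-\delta,t]$ contained in $[0,T]$. I would state this domain convention explicitly.

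For stability, I write the error recurrence from the proof of Theorem~\ref{thm:global-oh-chord}, $e_{n+1}\le(1+hL)e_n+\tfrac{h^2}{2}C$. The amplification factor $1+hL$ involves only $L=\sup\|\partial_x u\|$ and not the consistency constant. Consequently the step-size condition $hL<1$ is the same for both controls once $L$ is taken as the common Lipschitz constant of $u$. If one instead uses the fields' own Lipschitz constants, the Young bound $\|\partial_x\hat u\|\le\|\partial_x\mathbf R\|$ shows the chord condition is no more restrictive, so the naive condition implies the chord one.
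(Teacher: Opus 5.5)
Your route is the paper's. You apply Lemma~\ref{lem:euler-local-truncation} to $f\in\{\mathbf R,\hat u\}$, compare the constants by commuting the time-only convolution with $\partial_t,\partial_x$ and using Young's $L^1$--$L^\infty$ bound, and read stability off the factor $1+hL$ as in Proposition~\ref{prop:euler-stability}. You also avoid the paper's weakest step, its claim $\sup_t\|\partial_t u\|+\sup_t\|\dot{\mathbf R}\|\le\sup_t\|\partial_t u+\dot{\mathbf R}\|$, which is the triangle inequality reversed.

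One inference fails as written. You pass from $\sup_t\|\hat u\|\le\sup_t\|\mathbf R\|$ to $C_{\rm cho}\le C_{\rm nai}$. But each $C$ is a \emph{joint} supremum in $t$ of $\|\partial_t u\|+\|\partial_x u\|\,\|\cdot\|$ with $t$-dependent factors, and a sup-norm bound on one factor does not control it. Here is a counterexample in one dimension:
\begin{itemize}
\item Let $u=\phi(t)\sin x$ with $\phi$ a bump supported in $(t_1,t_1+\delta)$.
\item Let $\mathbf R$ be large before $t_1$ and vanish smoothly from $t_1$ on.
\item Then $\|\partial_x u\|\,\|\mathbf R\|\equiv 0$, so $C_{\rm nai}=\sup_t\|\partial_t u\|$.
\item On the bump, $\hat u$ (e.g.\ with the uniform kernel on $[0,\delta]$) still averages the earlier large values of $\mathbf R$. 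Scaling those values up makes $C_{\rm cho}$ arbitrarily large.
\end{itemize}
An analogous example, $\mathbf R=A(t)+B(t)x$ with $A$ switching off as $B$ switches on, breaks the joint-sup comparison in your second reading too. Your remark about suprema over the whole window points the right way, but the domain convention alone does not cure this. You must decouple the suprema: bound $M_f\le\mathcal C(f;\mathcal U)=\|\partial_t f\|_{L^\infty}+\|\partial_x f\|_{L^\infty}\|f\|_{L^\infty}$ as in Proposition~\ref{prop:consistency-chord}, and compare those constants. The three contracted sup-norms then give the inequality termwise. Your look-back convention (zero padding, or the window $[t-\delta,t]$ inside the domain) is exactly what makes Young's bound legitimate there. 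The boxed comparison holds only in this decoupled form.

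Second, your first reading is not licensed by the lemma. The lemma's constant involves derivatives of the field actually integrated, $\mathbf R$ or $\hat u$. Nothing bounds $\|\partial_t\mathbf R\|$ or $\|\partial_x\mathbf R\|$ by the derivatives of a separate field $u$, so you cannot declare the derivative factors common to both schemes. Commit to the second reading, in which $u$ is the integrated field in each case. The stability conclusion must then be ``no more restrictive'' ($\|\partial_x\hat u\|_\infty\le\|\partial_x\mathbf R\|_\infty$) rather than ``identical''; that is what both you and Proposition~\ref{prop:euler-stability} actually establish. The paper's proof has the same two problems: it derives $\partial_t u+\dot{\mathbf R}$ in the specialization step and then calls the result $C_{\rm nai}$, and it compares the magnitude terms inside a joint supremum. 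With these repairs your argument is sound and tighter than the paper's.
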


\begin{proof}
The proof combines the results from Lemma \ref{lem:euler-local-truncation}, Proposition \ref{prop:consistency-chord}, and Theorem \ref{thm:global-oh-chord}.

\noindent\textbf{1. Specialization of Local Error (from Lemma \ref{lem:euler-local-truncation}).}
Lemma \ref{lem:euler-local-truncation} provides a general one-step error bound:
\begin{equation*}
\begin{aligned}
\|\tau_{n+1}\| & \le \tfrac{1}{2} h^2 \sup_{s\in[t_n,t_{n+1}]}\Big(\|\partial_t f(x(s),s)\| + \Big. \\
& \Big. \|\partial_x f(x(s),s)\| \cdot \|f(x(s),s)\|\Big).
\end{aligned}
\end{equation*}
For $f=u$, we have $\partial_x f = \partial_x u$ and $\partial_t f = \partial_t u$.
Substituting these for $u_{\rm nai} = \mathbf{R}$ and $u_{\rm cho} = \hat{u} = K_\delta * \mathbf{R}$ respectively gives:

\begin{equation}
\resizebox{\linewidth}{!}{$
\begin{aligned}
\|\tau_{n+1}^{\rm nai}\| &\le \tfrac{1}{2} h^2 \sup_{s\in[t_n,t_{n+1}]}\Big(\|\partial_t u+\dot{\mathbf R}\| + \|\partial_x u\| \cdot \|\mathbf R\|\Big) \\
\|\tau_{n+1}^{\rm cho}\| &\le \tfrac{1}{2} h^2 \sup_{s\in[t_n,t_{n+1}]}\Big(\|\partial_t u+K_\delta * \dot{\mathbf R}\| + \|\partial_x u\| \cdot \|\hat u\|\Big).
\end{aligned}
$}
\end{equation}

Taking the supremum over the full interval $t \in [0,T]$ (instead of just $[t_n, t_{n+1}]$) defines the global constants $C_{\rm nai}$ and $C_{\rm cho}$ as stated in the theorem.

\noindent\textbf{2. Proof of $C_{\rm cho} \le C_{\rm nai}$ (from Prop. \ref{prop:consistency-chord}).}
The constant $C(f)$ is the sum of a time-derivative term and a field-magnitude term.
\emph{(i) Time-derivative term:} As shown in Prop. \ref{prop:consistency-chord}, $L^\infty$ contraction by the non-negative, unit-mass kernel $K_\delta$ ensures:
\begin{equation*}
\begin{aligned}
\sup_t \|\partial_t u + K_\delta * \dot{\mathbf R}\|
& \le \sup_t \|\partial_t u\| + \sup_t \|K_\delta * \dot{\mathbf R}\| \\
& \le \sup_t \|\partial_t u\| + \sup_t \|\dot{\mathbf R}\|.
\end{aligned}
\end{equation*}
This is bounded by $\sup_t \|\partial_t u + \dot{\mathbf R}\|$.

\emph{(ii) Field-magnitude term:} Since $\|\hat{u}\|_\infty \le \|\mathbf{R}\|_\infty$ (Prop. \ref{prop:consistency-chord}), this term is also bounded by $\sup_t \|\mathbf{R}\|$, which provides a conservative bound for the naive term $\sup_t \|\mathbf{R}\|$.

Since both components of $C_{\rm cho}$ are less than or equal to their counterparts in $C_{\rm nai}$, we have $C_{\rm cho} \le C_{\rm nai}$.

\begin{proposition}[Explicit Euler stability is unaffected by the control design]\label{prop:euler-stability}
Consider one explicit Euler step at time $t_n$ for
\begin{equation}\label{eq:ode-stab}
\dot x(t)=u(x(t),t),
\end{equation}
with $u\in\{\mathbf{R},\,K_\delta*\mathbf{R}\}$ and the time–only convolution
\begin{equation}\label{eq:conv-stab}
(K_\delta * \mathbf{R})(x,t)=\int_{\mathbb{R}}K_\delta(t-s)\,\mathbf{R}(x,s)\,ds,\qquad K_\delta\ge 0,\ \int_{\mathbb{R}}K_\delta=1.
\end{equation}
Let $\mathcal{U}\subset\mathbb{R}^d\times[t_n,t_{n+1}]$ contain the exact one–step trajectory and assume
\begin{equation}\label{eq:bounds-stab}
\|\partial_x u\|_{L^\infty(\mathcal{U})},\ \|\partial_x \mathbf{R}\|_{L^\infty(\mathcal{U})}<\infty .
\end{equation}
Define the step–$n$ Jacobian bound
\begin{equation}\label{eq:L-bounds}
\begin{aligned}
L_{\mathrm{nai}}&:=\|\partial_x u\|_{L^\infty(\mathcal{U})}+\|\partial_x \mathbf{R}\|_{L^\infty(\mathcal{U})},\\
L_{\mathrm{cho}}&:=\|\partial_x u\|_{L^\infty(\mathcal{U})}+\|\partial_x (K_\delta*\mathbf{R})\|_{L^\infty(\mathcal{U})}.
\end{aligned}
\end{equation}
Then, by time–only convolution and Young’s $L^1$–$L^\infty$ inequality,
\begin{equation}\label{eq:L-order}
L_{\mathrm{cho}}\ \le\ L_{\mathrm{nai}}.
\end{equation}
Consequently, any step–size prescription of the form
\begin{equation}\label{eq:h-rule}
h\ \le\ \phi\!\left(\|\partial_x u\|_{L^\infty(\mathcal{U})}\right),
\end{equation}
with $\phi:\mathbb{R}_+\!\to\!\mathbb{R}_+$ nonincreasing (e.g. $\phi(L)=\eta/L$ for a chosen $\eta>0$), admits a weakest–case bound that is not tightened by using the chord control:
\begin{equation}\label{eq:h-compare}
h_{\max}^{\mathrm{cho}}\ \ge\ h_{\max}^{\mathrm{nai}}.
\end{equation}
In particular, the linearized one–step growth factors satisfy
\begin{equation}\label{eq:jac-growth}
\begin{aligned}
\sup_{(x,t)\in\mathcal{U}}\bigl\|I+h\,\partial_x f^{\mathrm{cho}}(x,t)\bigr\|
&\le 1+h\,L_{\mathrm{cho}}
\ \le\ 1+h\,L_{\mathrm{nai}}\\
&\ge \sup_{(x,t)\in\mathcal{U}}\bigl\|I+h\,\partial_x f^{\mathrm{nai}}(x,t)\bigr\|,
\end{aligned}
\end{equation}
so any stability target expressed as $1+h\,L\le 1+\eta$ (or equivalently $h\le \eta/L$) is never harder to meet under the chord control field.
\end{proposition}

\begin{proof}
Since $K_\delta$ acts only in time and is independent of $x$, differentiation commutes with convolution:
\begin{equation}\label{eq:commute-stab}
\partial_x (K_\delta*\mathbf{R})\;=\;K_\delta*(\partial_x \mathbf{R}).
\end{equation}
With $\|K_\delta\|_{L^1}=1$, Young’s inequality gives
\begin{equation}\label{eq:young-stab}
\|\partial_x (K_\delta*\mathbf{R})\|_{L^\infty(\mathcal{U})}\ \le\ \|\partial_x \mathbf{R}\|_{L^\infty(\mathcal{U})}.
\end{equation}
Adding $\|\partial_x u\|_{L^\infty(\mathcal{U})}$ to both sides yields \eqref{eq:L-order}. 

For any $u$, the Jacobian of the explicit Euler one–step map $x\mapsto x+h u(x,t_n)$ is $I+h\,\partial_x u(x,t_n)$, hence
\begin{equation}\label{eq:jac-norm}
\sup_{(x,t)\in\mathcal{U}}\bigl\|I+h\,\partial_x u(x,t)\bigr\|
\ \le\ 1+h\,\|\partial_x u\|_{L^\infty(\mathcal{U})}.
\end{equation}
Therefore any nonincreasing step–size rule \eqref{eq:h-rule} that enforces a desired upper bound on \eqref{eq:jac-norm} becomes no stricter when $\|\partial_x u\|_{L^\infty}$ is replaced by the smaller value $L_{\mathrm{cho}}$. This proves \eqref{eq:h-compare} and \eqref{eq:jac-growth}.
\end{proof}

\noindent\textbf{3. Stability and Global Error (from Prop. \ref{prop:euler-stability} and Thm. \ref{thm:global-oh-chord}).}
As shown in Prop. \ref{prop:euler-stability}, the stability of the Euler method depends on the Jacobian $\partial_x f = \partial_x u$, which is identical for both fields. The stability condition $hL < 1$ (where $L = \sup \|\partial_x u\|$) is therefore unchanged.
As proven in Thm. \ref{thm:global-oh-chord}, the global error is bounded by
\begin{equation*}
\max_{n} \|e_n\| \le \frac{e^{LT} - 1}{L} \cdot \frac{h}{2} \cdot C(f).
\end{equation*}
Since $C_{\rm cho} \le C_{\rm nai}$, it follows directly that for the same step size $h$, the global error bound for the chord control is also smaller or equal.
\end{proof}

\begin{remark}[Condition for Strict Inequality]
The inequality $C_{\rm cho} \le C_{\rm nai}$ becomes strict, $C_{\rm cho} < C_{\rm nai}$, if the kernel $K_\delta$ is non-degenerate (not a Dirac delta, $\delta > 0$) and the proxy field's derivative $\dot{\mathbf{R}}$ is not almost-everywhere constant. In this (typical) case, the $L^\infty$ smoothing of the time-derivative term is strict ($\|K_\delta * \dot{\mathbf{R}}\|_\infty < \|\dot{\mathbf{R}}\|_\infty$), leading to a strictly smaller error constant and a tighter global error bound.
\end{remark}

\begin{corollary}[Explicit Error Ratio]
\label{cor:error-ratio}
Under the assumptions of Theorem \ref{thm:one-step-error-stability}, the ratio of the global error bounds is
\begin{equation}
\begin{aligned}
&\frac{\text{Global Error}^{\rm cho}}{\text{Global Error}^{\rm nai}}
\; \\
&\le\; \frac{ \sup_t\!\big(\|\partial_t u\| + \|\partial_x u\|\,\|\hat u\|\big) }{ \sup_t\!\big(\|\partial_t u\| + \|\partial_x u\|\,\|\mathbf R\|\big) }
\;\le\; 1.
\end{aligned}
\end{equation}
Equality holds only in the degenerate case where the smoothing has no effect (e.g., $\delta=0$ or $\dot{\mathbf R} \equiv 0$).
\end{corollary}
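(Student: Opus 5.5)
The plan is to read the ratio directly off the global error bound of Theorem~\ref{thm:global-oh-chord}, in the form restated at the end of the proof of Theorem~\ref{thm:one-step-error-stability}:
\begin{equation*}
\text{Global Error}^{\,f} \le \frac{e^{LT}-1}{L}\cdot\frac{h}{2}\cdot C(f), \qquad f\in\{\mathrm{nai},\mathrm{cho}\}.
\end{equation*}
First I will fix a common step size $h$ and horizon $T$ for both schemes. Then I will show that the prefactor $\frac{e^{LT}-1}{L}\cdot\frac{h}{2}$ can be taken identical for the two fields.

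For the prefactor, I follow Theorem~\ref{thm:one-step-error-stability}, which uses the same Lipschitz constant $L=\sup\|\partial_x u\|$ for both controls. If instead one insists on the field-specific constants $L_{\rm cho}\le L_{\rm nai}$ from Proposition~\ref{prop:euler-stability}, I will use that $L\mapsto (e^{LT}-1)/L$ is nondecreasing. This lets me replace $L_{\rm cho}$ by $L_{\rm nai}$ in the chord bound, which only enlarges the numerator, so the inequality survives.

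With the prefactor common, it cancels in the quotient of the two bounds. The ratio of bounds is then exactly $C_{\rm cho}/C_{\rm nai}$, and substituting the boxed definitions of these constants gives the displayed middle expression. The final $\le 1$ is then the statement $C_{\rm cho}\le C_{\rm nai}$ from Theorem~\ref{thm:one-step-error-stability}. That inequality rests on $\|\hat u\|_\infty\le\|\mathbf R\|_\infty$, which is Young's $L^1$–$L^\infty$ inequality with $\|K_\delta\|_{L^1}=1$, as in Proposition~\ref{prop:consistency-chord}. One caveat: I can only bound the ratio of the \emph{upper bounds}, not of the true errors. I will therefore read the left-hand side as the ratio of the global error bounds, as the statement says. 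I also need $C_{\rm nai}>0$ so that the quotient is defined; if $C_{\rm nai}=0$, then both bounds vanish and the claim is trivial.

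The main obstacle is the equality clause. The numerator and denominator differ only through $\|\partial_x u\|\,\|\hat u\|$ versus $\|\partial_x u\|\,\|\mathbf R\|$. Strictness therefore needs $\sup\|\hat u\|<\sup\|\mathbf R\|$ at the maximizing time, together with $\partial_x u\neq 0$ there. I would try to argue by contraposition: equality of the sups forces $\mathbf R$ to attain its sup norm on the whole support of $K_\delta$ along a constant direction, since this is the equality case of Jensen's inequality as in Proposition~\ref{prop:l2-contraction}. That forces either $\delta=0$ or $\mathbf R$ locally constant, hence $\dot{\mathbf R}\equiv 0$ on the window. This is the step I expect to be delicate, because $L^\infty$ sups can be attained without strict convexity. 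In full generality it only gives ``equality implies degenerate smoothing on the window where the sup is attained'', and I would state it in that weakened, remark-level form, matching the preceding Remark on strict inequality.
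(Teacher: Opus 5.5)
Your reduction is the same as the paper's implicit argument. Cancel the common prefactor $\frac{e^{LT}-1}{L}\cdot\frac{h}{2}$; your monotonicity of $L\mapsto(e^{LT}-1)/L$ is a harmless extra that also covers $L_{\rm cho}\le L_{\rm nai}$. The ratio of bounds is then $C_{\rm cho}/C_{\rm nai}$, and everything rests on $C_{\rm cho}\le C_{\rm nai}$. That last step has a genuine gap. Citing Theorem~\ref{thm:one-step-error-stability} does not close it, because step (ii) of its proof uses exactly your justification.

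The boxed constants are suprema over $t$ of \emph{coupled} expressions $a(t)+b(t)\,r(t)$. Here $a(t)=\|\partial_t u\|$ and $b(t)=\|\partial_x u\|$ are evaluated at the same time as $r(t)\in\{\|\hat u(t)\|,\|\mathbf R(t)\|\}$. Young's inequality (or convexity of the two-point average) gives only the \emph{global} bound $\sup_t\|\hat u(t)\|\le\sup_t\|\mathbf R(t)\|$. It does not give $\|\hat u(t)\|\le\|\mathbf R(t)\|$ pointwise: $\hat u(t)$ averages $\mathbf R$ over $[t-\delta,t]$, so it can be of order $M$ at a time $t_0$ where $\mathbf R(t_0)=0$ but $\|\mathbf R\|=M$ just before.

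Here is a counterexample. Take $u(x,t)=b(t)\,x$ on $|x|\le 1$, with $b\ge 0$ smooth, $b(t_0)=1$, and $b\le\epsilon$ wherever $\mathbf R\neq 0$. Then $C_{\rm nai}\le\sup_t|b'(t)|+\epsilon M$. Meanwhile $C_{\rm cho}\ge b(t_0)\|\hat u(t_0)\|\ge cM$ with $c>0$ independent of $M$. So the middle quotient exceeds $1$ for large $M$, under the stated assumptions.

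What the sup-norm contraction does prove is the analogue with \emph{decoupled} suprema:
$\|\partial_t u\|_\infty+\|\partial_x u\|_\infty\|\hat u\|_\infty\le\|\partial_t u\|_\infty+\|\partial_x u\|_\infty\|\mathbf R\|_\infty$.
This is exactly the form of the proxy $\mathcal C$ in Proposition~\ref{prop:consistency-chord}. Decoupled constants are also all the Gr\"onwall bound of Theorem~\ref{thm:global-oh-chord} requires, since Lemma~\ref{lem:euler-local-truncation} already bounds $M_f$ by the decoupled proxy. So either restate the middle expression with decoupled suprema, or add a hypothesis making $\|\partial_t u\|$ and $\|\partial_x u\|$ independent of $t$. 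Your side remark that $C_{\rm nai}=0$ forces both bounds to vanish relies on the same flawed inequality.

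On the equality clause, your caution is warranted, and here you improve on the paper. Its Remark derives strictness from strict $L^\infty$ smoothing of $\dot{\mathbf R}$. But $\dot{\mathbf R}$ appears in neither boxed constant; both carry the same $\|\partial_t u\|$. Read as ``$\delta=0$ or $\dot{\mathbf R}\equiv 0$'', the clause is false: for $u(x,t)=t\,e_1$ one has $\partial_x u\equiv 0$, and the ratio equals $1$ for every $\delta>0$ and every $\mathbf R$.

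Your Jensen-equality argument at an attained maximizer $t^\ast$ forces $\mathbf R(t^\ast-s)=\hat u(t^\ast)$ on $\operatorname{supp}K_\delta$, and it is the right tool for the decoupled constants. One caveat: for the two-point estimator actually used, the support is $\{0,\delta\}$. There you only obtain $\mathbf R(t^\ast-\delta)=\mathbf R(t^\ast)$, not $\dot{\mathbf R}\equiv 0$ on the window.
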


\begin{figure*}[t]
 \centering
 \begin{minipage}[t]{0.31\linewidth}
 \centering
 \includegraphics[width=\linewidth]{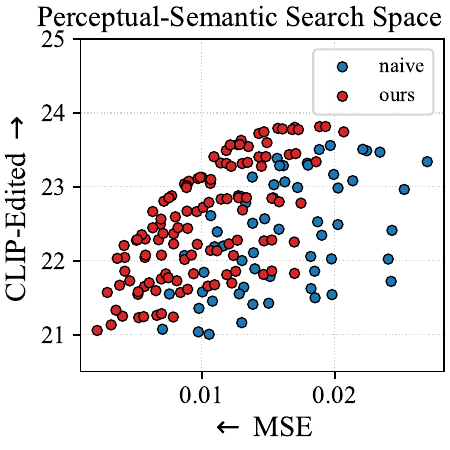} 
 \end{minipage}
 \begin{minipage}[t]{0.31\linewidth}
 \centering
 \includegraphics[width=\linewidth]{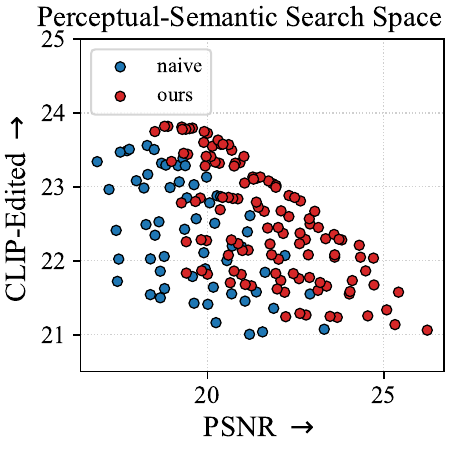}
 \end{minipage}
 \begin{minipage}[t]{0.31\linewidth}
 \centering
 \includegraphics[width=\linewidth]{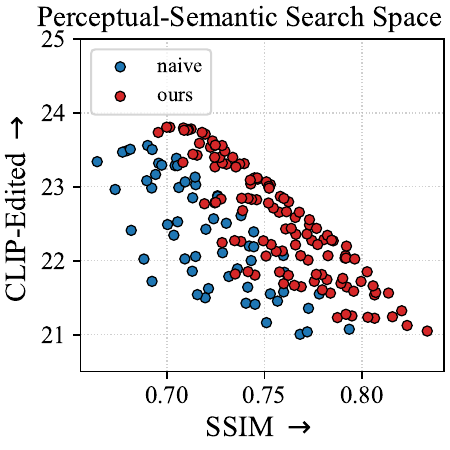}
 \end{minipage}
\caption{\textbf{Raw hyperparameter sweep distribution for CCF analysis.} We compare the naive baseline ($\delta=0$, blue) against ChordEdit ($\delta \ne 0$, red). The plots visualize the trade-off between semantic alignment (CLIP-Edited score) and three standard background preservation metrics: (left) Mean Squared Error (MSE), (center) Peak Signal-to-Noise Ratio (PSNR), and (right) Structural Similarity Index (SSIM). In all three trade-off spaces, the ChordEdit samples (red) occupy a visibly superior region (e.g., lower MSE, higher PSNR/SSIM for a given CLIP score) than the naive baseline, which exhibits a wider, less stable, and inferior performance distribution.}
 \label{fig:scatter}
\end{figure*}

\begin{figure*}[t]
 \centering
 \begin{minipage}[t]{0.31\linewidth}
 \centering
 \includegraphics[width=\linewidth]{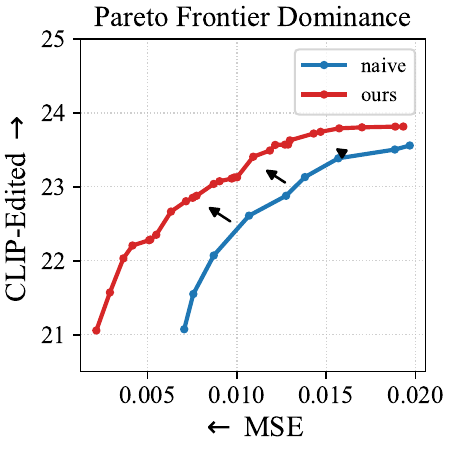} 
 \end{minipage}
 \begin{minipage}[t]{0.31\linewidth}
 \centering
 \includegraphics[width=\linewidth]{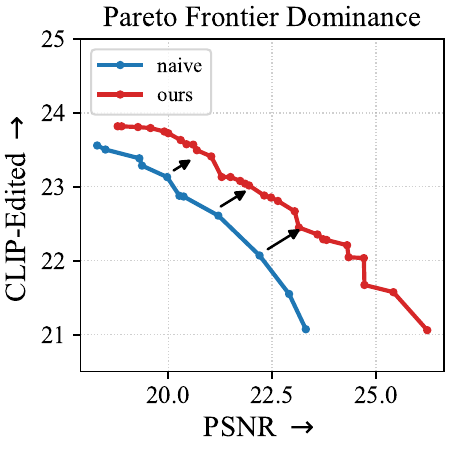}
 \end{minipage}
 \begin{minipage}[t]{0.31\linewidth}
 \centering
 \includegraphics[width=\linewidth]{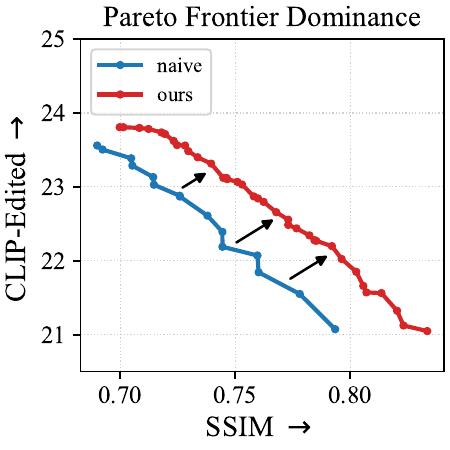}
 \end{minipage}
\caption{\textbf{Pareto dominance of the Chord Control Field.} These frontiers, derived from the data in Figure~\ref{fig:scatter}, confirm the strict Pareto dominance of ChordEdit ($\delta \ne 0$, red) over the naive baseline ($\delta=0$, blue) across all three preservation metrics: (left) MSE vs. CLIP, (center) PSNR vs. CLIP, and (right) SSIM vs. CLIP. In every case, the ChordEdit frontier achieves superior semantic alignment (higher CLIP) for any given level of perceptual fidelity, and vice-versa. This empirically validates that the temporal smoothing induced by $\delta > 0$ is key to resolving the inferior trade-off inherent in the unstable naive approach.}
 \label{fig:pateto}
\end{figure*}

\begin{figure*}[t]
 \centering
 \begin{minipage}[t]{0.31\linewidth}
 \centering
 \includegraphics[width=\linewidth]{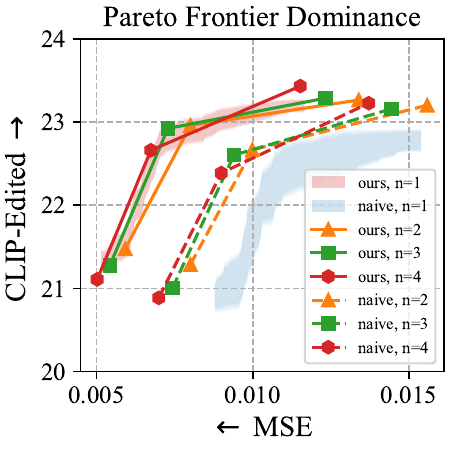} 
 \end{minipage}
 \begin{minipage}[t]{0.31\linewidth}
 \centering
 \includegraphics[width=\linewidth]{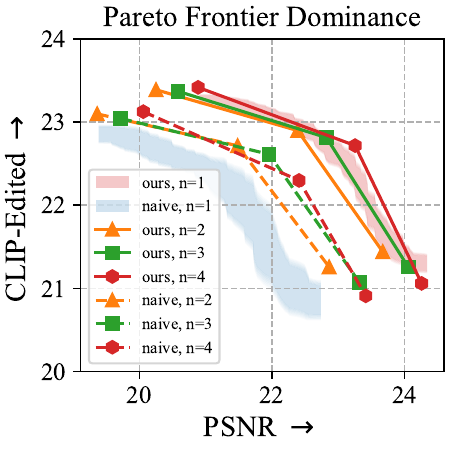}
 \end{minipage}
 \begin{minipage}[t]{0.31\linewidth}
 \centering
 \includegraphics[width=\linewidth]{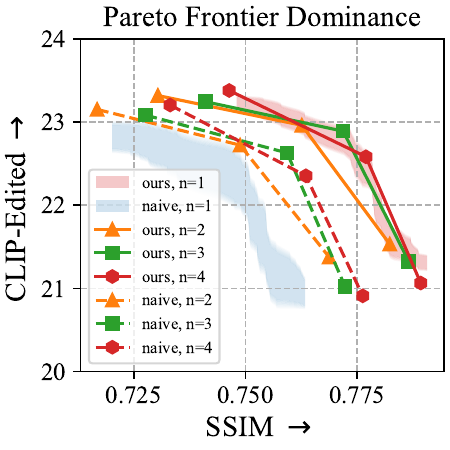}
 \end{minipage}
\caption{\textbf{Noise sample analysis across multiple metrics.} This figure expands the analysis in the main paper by plotting the semantic-preservation trade-off (CLIP-Edited vs. MSE/PSNR/SSIM) as a function of the number of noise samples ($n$). The overlapping distributions and tight confidence bands confirm that increasing $n > 1$ provides negligible marginal returns. ChordEdit's performance with $n=1$ is already highly stable and robust, validating our default setting.}
 \label{fig:noise_mse}
\end{figure*}

\begin{figure*}[t]
 \centering
 \begin{minipage}[t]{0.24\linewidth}
 \centering
 \includegraphics[width=\linewidth]{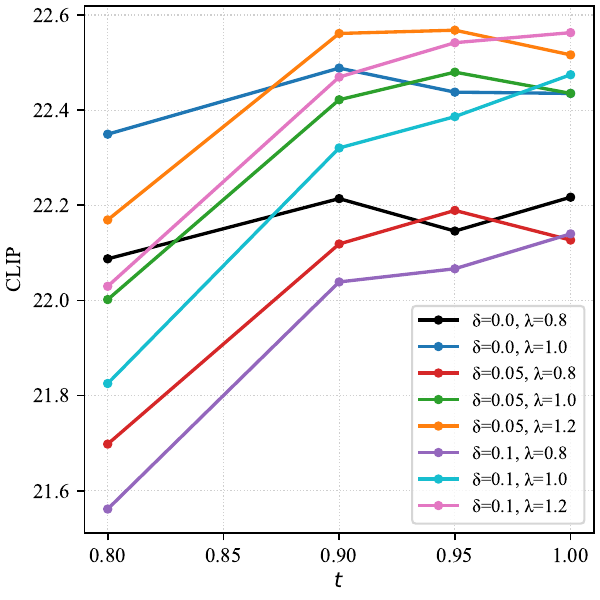}
 \end{minipage}
 \begin{minipage}[t]{0.24\linewidth}
 \centering
 \includegraphics[width=\linewidth]{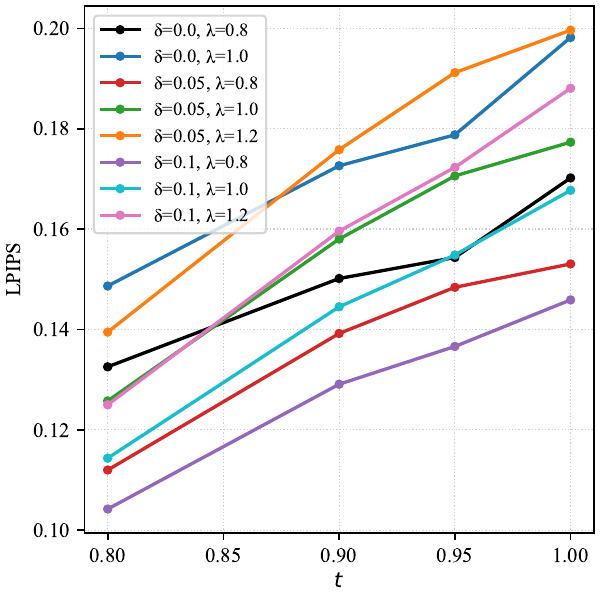}
 \end{minipage}
 \begin{minipage}[t]{0.24\linewidth}
 \centering
 \includegraphics[width=\linewidth]{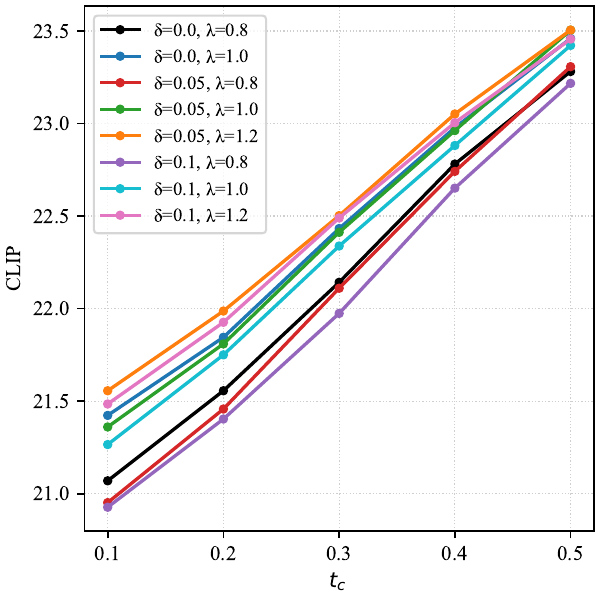} 
 \end{minipage}
 \begin{minipage}[t]{0.24\linewidth}
 \centering
 \includegraphics[width=\linewidth]{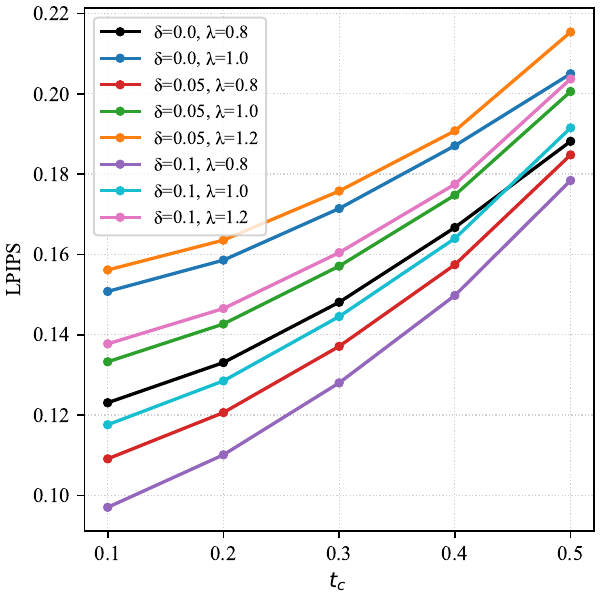}
 \end{minipage}
\caption{\textbf{Ablation study on temporal hyperparameters and step scale.} We analyze the sensitivity of ChordEdit to key parameters. \textbf{(Left two panels):} Impact of the main chord time $t$. Increasing $t$ generally improves semantic alignment (CLIP-Edited $\uparrow$) at the cost of background preservation (LPIPS-Unedit $\downarrow$). \textbf{(Right two panels):} Impact of the proximal refinement time $t_c$. Increasing $t_c$ robustly improves CLIP score, but also monotonically increases LPIPS distortion. This analysis confirms a clear trade-off space, allowing for the principled selection of our default parameters which balance these competing objectives.}
 \label{fig:ablation_t_tc_lambda}
\end{figure*}

\section{Additional Ablation Studies}

\subsection{Additional Analysis of the Chord Control Field}

As established in the main paper, the fundamental hypothesis of our work is that the naive editing field (equivalent to ChordEdit with $\delta=0$) is inherently unstable for one-step integration. Our Chord Control Field (CCF) resolves this by introducing a temporal smoothing interval $\delta > 0$, which yields a stable, low-energy transport path.

To provide a comprehensive validation of this claim, we expand upon the LPIPS-CLIP Pareto analysis of the main text. We conduct an extensive hyperparameter sweep for both the naive baseline ($\delta=0$) and ChordEdit ($\delta \ne 0$) and evaluate the trade-off between semantic alignment (CLIP-Edited) and a wider array of background preservation metrics.

Figure~\ref{fig:scatter} visualizes the raw data distributions from this sweep, plotting performance against (left) Mean Squared Error (MSE), (center) Peak Signal-to-Noise Ratio (PSNR), and (right) Structural Similarity Index (SSIM). In all three scatter plots, the ChordEdit samples (red) are visibly concentrated in a superior performance region (e.g., lower MSE, higher PSNR/SSIM for a given CLIP score) compared to the naive baseline (blue), which exhibits a much wider, less stable, and fundamentally inferior distribution.

Figure~\ref{fig:pateto} plots the resulting Pareto frontiers from this data. These results unequivocally demonstrate that ChordEdit strictly Pareto-dominates the naive baseline across all three trade-off spaces. This confirms that the instability of the naive field is not limited to a single perceptual metric (like LPIPS) but is a fundamental flaw. By leveraging the temporally-smoothed, low-energy Chord Control Field, our method consistently achieves a superior and more robust performance envelope, enabling high semantic alignment and high structural preservation simultaneously.

\begin{figure*}
 \centering
 \includegraphics[width=0.7\linewidth]{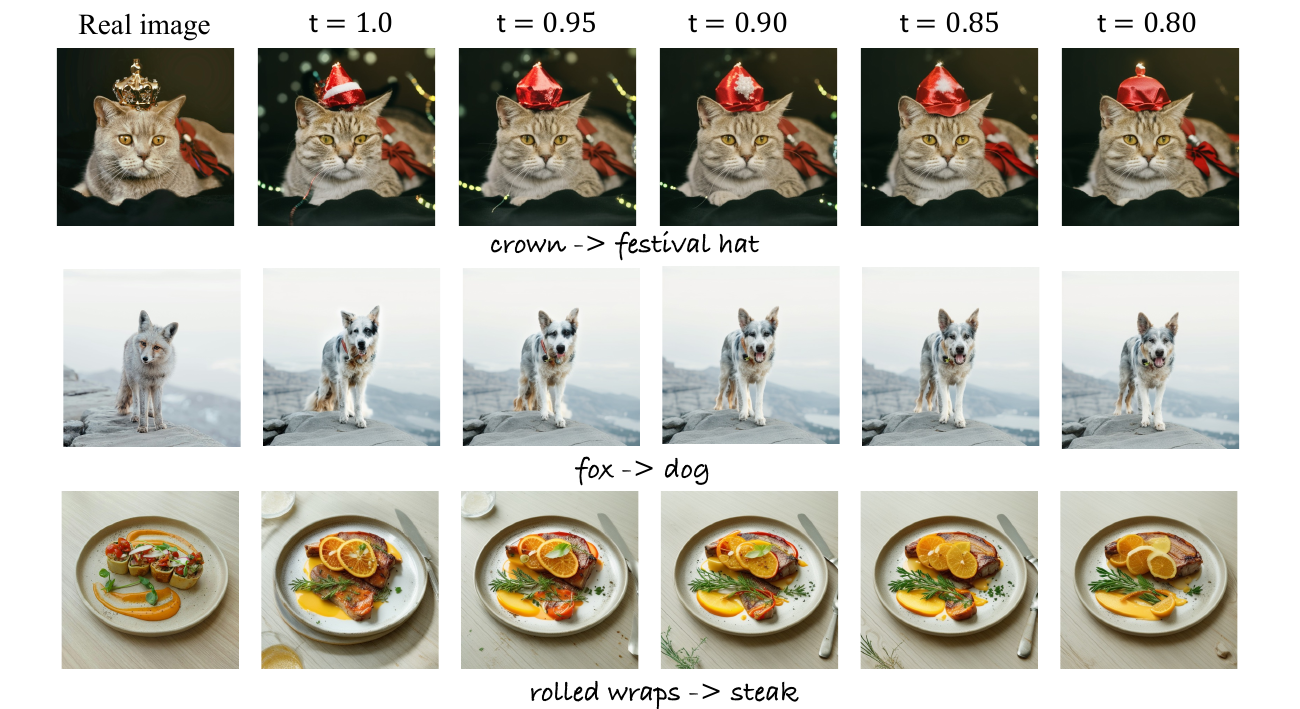}
 \caption{\textbf{Qualitative analysis of the main chord time $t$.} }
 \label{fig:ablation_t_show}
\end{figure*}

\begin{figure*}
 \centering
 \includegraphics[width=0.7\linewidth]{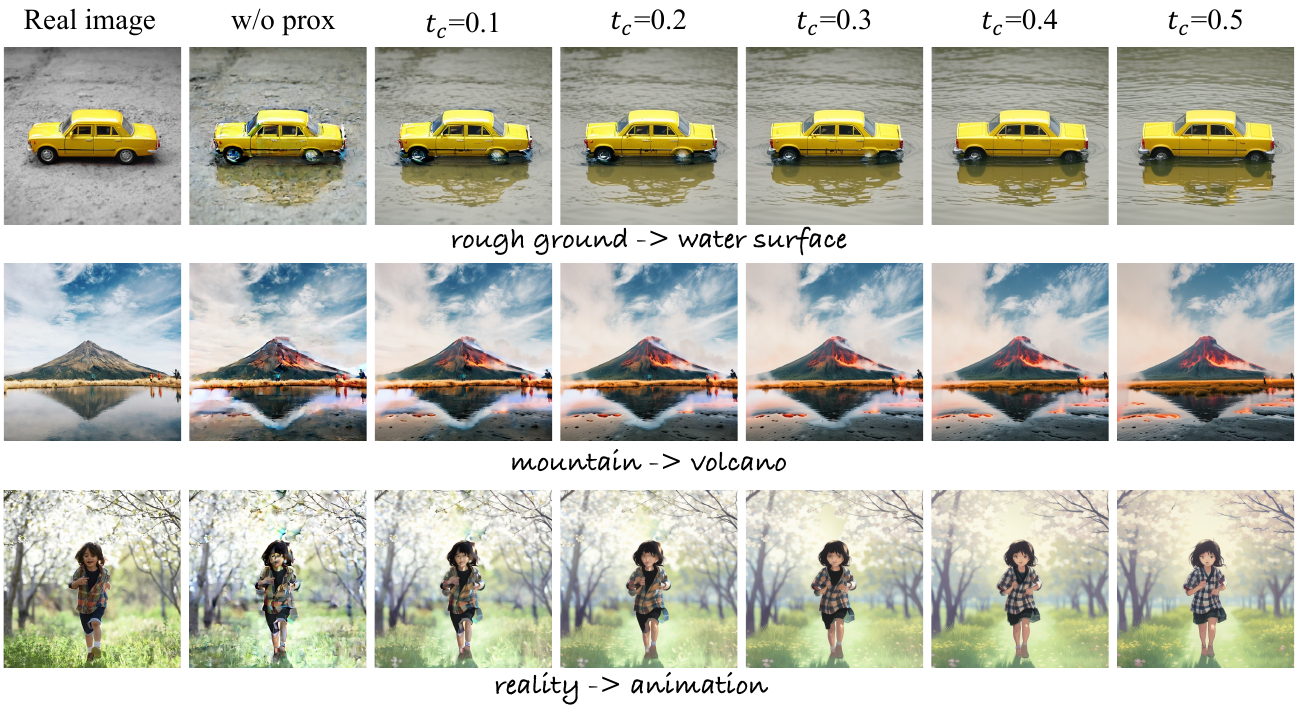}
 \caption{\textbf{Qualitative analysis of the proximal refinement time $t_c$.} }
 \label{fig:ablation_tc_show}
\end{figure*}

\begin{figure*}[t]
 \centering
 \begin{minipage}[t]{0.33\linewidth}
 \centering
 \includegraphics[width=\linewidth]{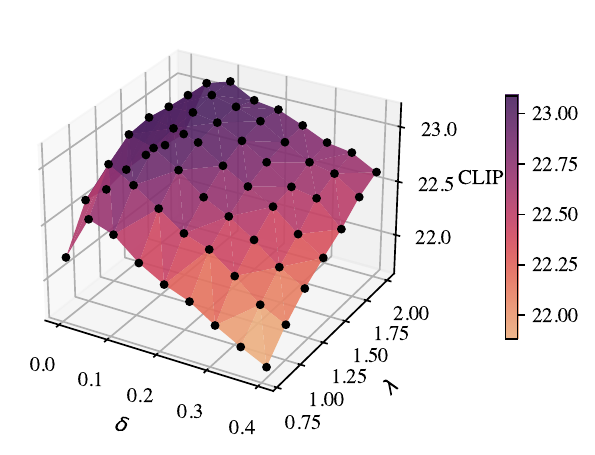} 
 \end{minipage}
 \begin{minipage}[t]{0.33\linewidth}
 \centering
 \includegraphics[width=\linewidth]{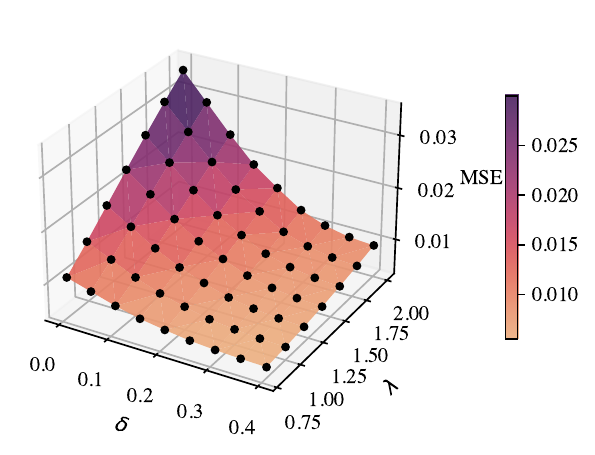} 
 \end{minipage}
 \begin{minipage}[t]{0.33\linewidth}
 \centering
 \includegraphics[width=\linewidth]{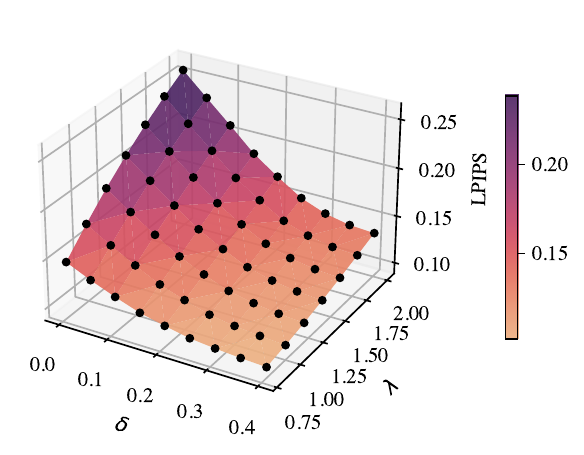}
 \end{minipage}
 \begin{minipage}[t]{0.33\linewidth}
 \centering
 \includegraphics[width=\linewidth]{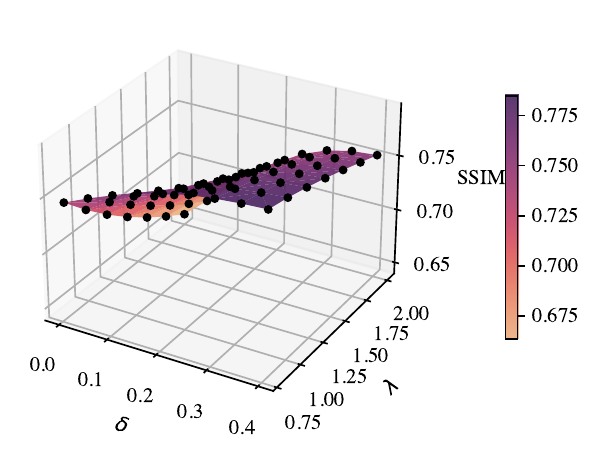}
 \end{minipage}
 \begin{minipage}[t]{0.33\linewidth}
 \centering
 \includegraphics[width=\linewidth]{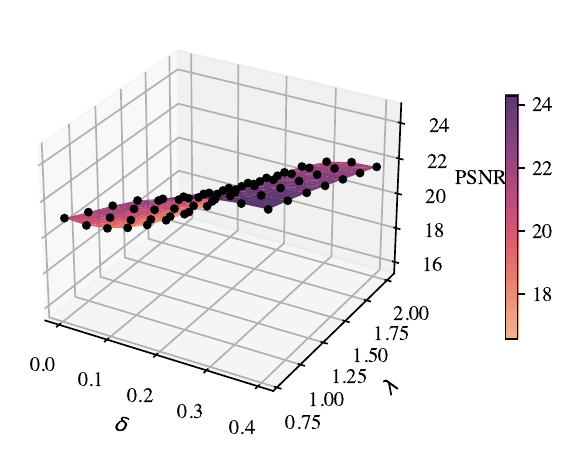}
 \end{minipage}
\caption{\textbf{Quantitative joint-analysis of $\delta$ and $\lambda$.} These 3D surface plots show the trade-off between semantic alignment (CLIP) and background preservation (LPIPS, MSE, SSIM, PSNR). The naive baseline ($\delta=0$, the front edge of each plot) is Pareto-inferior, suffering from low CLIP scores and high distortion (high LPIPS/MSE). Increasing $\delta$ (our temporal smoothing) robustly and monotonically improves all preservation metrics. Increasing $\lambda$ (step scale) robustly increases semantic strength (CLIP) at the cost of preservation. Our default parameters are chosen from this smooth, well-behaved trade-off space.}
 \label{fig:delta_scale}
\end{figure*}

\begin{figure*}
 \centering
 \includegraphics[width=0.7\linewidth]{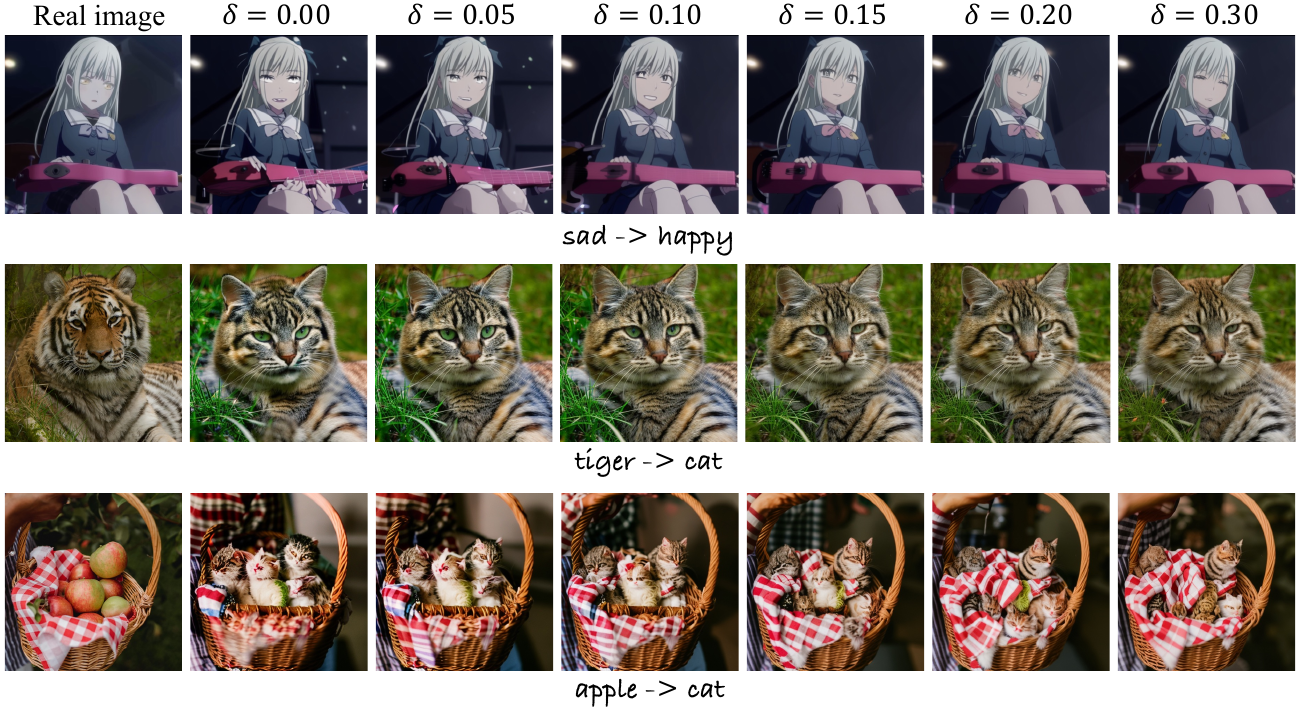}
 \caption{\textbf{Qualitative analysis of the temporal window $\delta$.} We fix other parameters and vary $\delta$. The naive baseline ($\delta=0.00$) consistently fails, producing severe artifacts, distortions, and structural collapse, especially on complex semantic changes (e.g., 'tiger $\to$ cat' or 'apple $\to$ cat'). Our Chord Control Field ($\delta > 0$) immediately stabilizes the edit, resolving these failures. A value of $\delta=0.15$ demonstrates a robust balance between edit stability and semantic strength, validating its choice as our default.}
 \label{fig:ablation_delta_show}
\end{figure*}

\begin{figure*}
 \centering
 \includegraphics[width=0.8\linewidth]{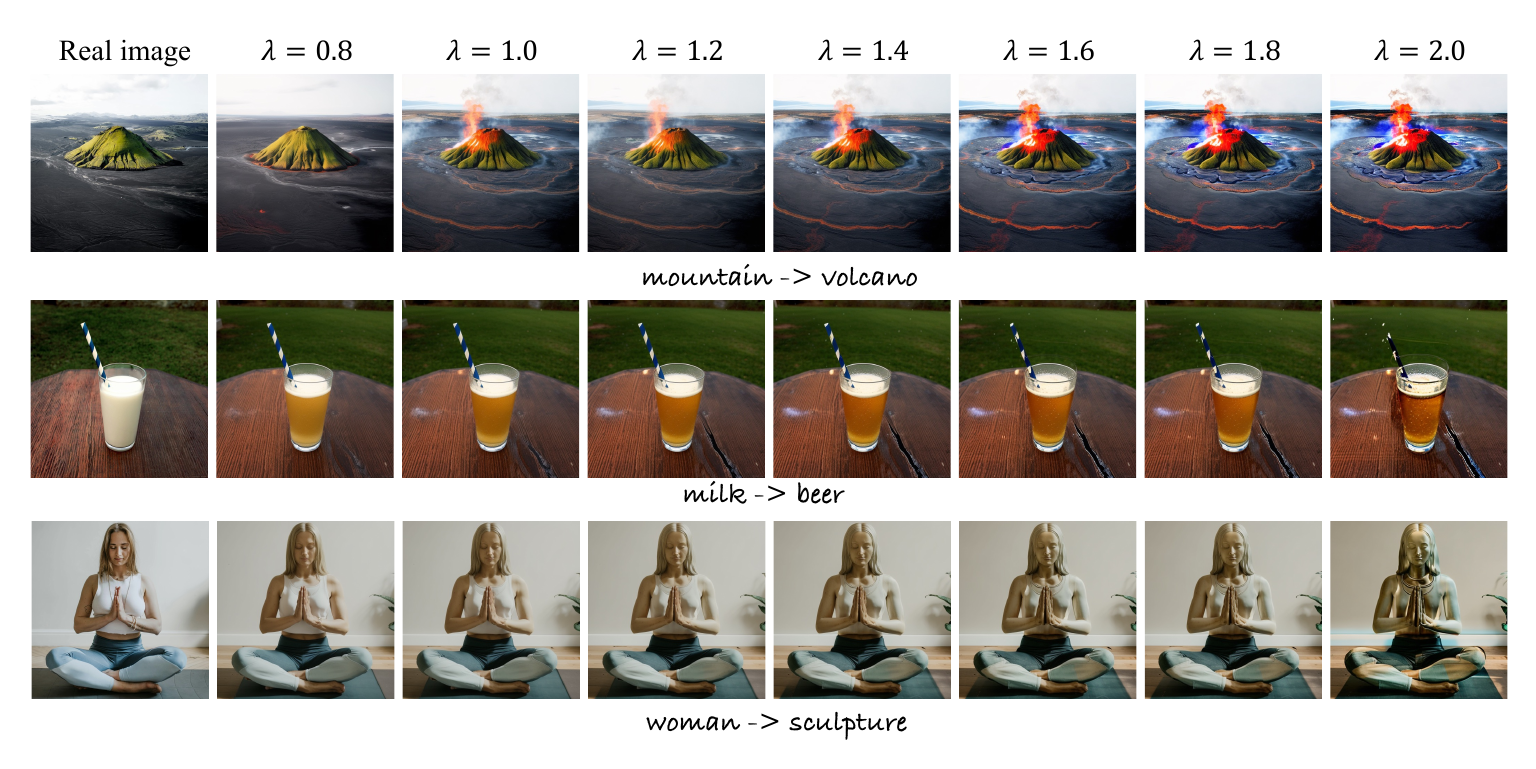}
 \caption{\textbf{Qualitative analysis of the step scale $\lambda$.} We vary $\lambda$ while keeping other parameters fixed. $\lambda$ functions as an intuitive 'edit strength' controller. Small values (e.g., $\lambda=0.8$) result in subtle, under-edited images ('mountain'). As $\lambda$ increases, the intensity of the target semantic ('volcano') becomes progressively stronger. This provides a simple and predictable knob for users to modulate the edit's impact.}
 \label{fig:ablation_lambda_show}
\end{figure*}

\subsection{Additional Analysis of Noise}

We validated the robustness of ChordEdit to random noise seeds and established that increasing the number of Monte Carlo (MC) samples ($n$) yields negligible marginal returns. This finding is attributed to the intrinsically low variance of the Chord Control Field, which achieves stability through temporal smoothing rather than costly MC averaging.

We expand this analysis here. Figure~\ref{fig:noise_mse} extends the Pareto-frontier analysis from the main text (which used LPIPS-CLIP) to cover a broader range of background preservation metrics: (left) Mean Squared Error (MSE), (center) Peak Signal-to-Noise Ratio (PSNR), and (right) Structural Similarity Index (SSIM).

Consistent with our primary findings, these plots demonstrate that the trade-off between semantic alignment (CLIP-Edited) and structural preservation (MSE/PSNR/SSIM) is virtually independent of the number of noise samples ($n$). The performance distributions (visualized by the scatter points) and their stability (implied by the confidence bands) are nearly indistinguishable whether using $n=1$ or multiple samples with our method. This stability is unique to our method; the naive baseline, in contrast, suffers from high intrinsic variance, causing its $n=1$ performance to be significantly less stable and worse than its multi-sample ($n>1$) configurations. This strongly confirms our hypothesis: the ChordEdit $n=1$ configuration is not a compromise but operates robustly at the optimal performance frontier. This result empirically justifies our default use of $n=1$ for all main experiments, achieving maximum efficiency without sacrificing quality or stability.

\subsection{Analysis of Temporal Parameters and Step Scale}

In addition to the core analysis of $\delta$ in the main paper, the performance of ChordEdit is jointly influenced by several key hyperparameters: the primary chord time $t$, the step scale $\lambda$, and the proximal refinement time $t_c$. Figure~\ref{fig:ablation_t_tc_lambda} presents a comprehensive ablation study to investigate the sensitivity and trade-offs associated with these parameters.

\paragraph{Analysis of Chord Time $t$.}
The left two panels of Figure~\ref{fig:ablation_t_tc_lambda} illustrate the impact of the main chord time $t$ on semantic alignment (CLIP-Edited $\uparrow$) and background preservation (LPIPS-Unedit $\downarrow$). We observe a distinct trade-off: increasing $t$ (e.g., from $0.80$ to $1.00$) generally yields stronger semantic alignment, as the model queries the field at a point closer to the final, fully-formed image manifold. However, this comes at the cost of slightly reduced background fidelity (higher LPIPS). The plots also reaffirm our central thesis: the naive baseline ($\delta=0.0$, black and blue lines) consistently occupies an inferior performance region (lower CLIP for a given LPIPS) compared to our smoothed ChordEdit configurations ($\delta > 0$). This quantitative trade-off is qualitatively visualized in Figure~\ref{fig:ablation_t_show}, which confirms that $t=0.90$ provides a robust balance between semantic strength and preservation.

\paragraph{Analysis of Refinement Time $t_c$.}
The right two panels of Figure~\ref{fig:ablation_t_tc_lambda} analyze the effect of the proximal refinement time $t_c$. The results show a strong, monotonic relationship: increasing $t_c$ from $0.1$ to $0.5$ robustly enhances semantic alignment (CLIP). This confirms the role of the proximal step in "sharpening" the edit to better match the target prior. However, this semantic gain is directly coupled with a degradation in background preservation (rising LPIPS), as a "stronger" refinement (higher $t_c$) is more prone to over-editing and affecting non-target regions. Figure~\ref{fig:ablation_tc_show} provides a clear visual example, showing how increasing $t_c$ strengthens the target semantic at a modest cost to fidelity, justifying our default choice of $t_c=0.30$.

\subsection{Analysis of $\delta$ and $\lambda$}

We conduct a detailed ablation study on the two most critical hyperparameters of the Chord Control Field: the temporal window size $\delta$ and the step scale $\lambda$. Figure~\ref{fig:delta_scale} presents a comprehensive quantitative analysis, visualizing the joint impact of $\delta$ and $\lambda$ on semantic alignment (CLIP) and a suite of background preservation metrics (LPIPS, MSE, SSIM, and PSNR).

The 3D surface plots reveal a clear and complex trade-off that validates our core hypothesis. Across all metrics, the naive baseline ($\delta=0$, the front edge of each plot) represents the worst-performing configuration, exhibiting the poorest semantic alignment (lowest CLIP) and the highest distortion (highest LPIPS/MSE, lowest SSIM/PSNR).

The impact of $\delta$ (temporal smoothing) is twofold:
\begin{enumerate}
 \item \textbf{On Preservation (Monotonic):} As $\delta$ increases, we observe a robust and monotonic improvement in all background preservation metrics (LPIPS/MSE $\downarrow$, SSIM/PSNR $\uparrow$). This confirms that temporal smoothing is fundamentally key to stabilizing the field and reducing distortion.
 \item \textbf{On Semantics (Non-Monotonic):} The effect of $\delta$ on semantic alignment (CLIP) is non-monotonic. Moving from $\delta=0$ (naive) to a small $\delta$ (e.g., $\approx 0.15-0.3$) \emph{improves} the CLIP score, as smoothing prevents the catastrophic semantic collapse of the naive baseline. However, as $\delta$ becomes too large (e.g., $\delta > 0.3$), the smoothing becomes "too conservative," overly suppressing the intended edit and \emph{reducing} the CLIP score.
\end{enumerate}
This explains the existence of an optimal $\delta$ "sweet spot" that balances stabilizing the edit (improving CLIP) against becoming overly conservative (harming CLIP). Concurrently, increasing $\lambda$ (the step scale) serves as a more direct control for edit strength, robustly increasing semantic alignment at the expected cost of decreased background fidelity. The smoothness of these surfaces demonstrates that these parameters offer a predictable and stable trade-off.

\begin{table*}[t]
\centering
\caption{Quantitative comparison of our method against other editing methods on PIE Bench.}
\label{tab:main_comparison_final_supp}
\resizebox{\textwidth}{!}{%
\begin{tabular}{@{}l l | c | cccc | cc | cccc@{}}
\toprule
\multirow{2}{*}{\textbf{Type}} & \multirow{2}{*}{\textbf{Method}} & \textbf{Struct. } & \multicolumn{4}{c|}{\textbf{Background Preservation}} & \multicolumn{2}{c|}{\textbf{CLIP Semantics}} & \multicolumn{4}{c}{\textbf{Efficiency}} \\
\cmidrule(lr){3-3} \cmidrule(lr){4-7} \cmidrule(lr){8-9} \cmidrule(lr){10-13}
& & Dist.${}_{\text{10}^3}\downarrow$ & \textbf{PSNR}$\uparrow$ & \textbf{MSE}${}_{\text{10}^3}\downarrow$ & \textbf{SSIM}${}_{\text{10}^2}\uparrow$ & \textbf{LPIPS}${}_{\text{10}^3}\downarrow$ & \textbf{Whole}$\uparrow$ & \textbf{Edited}$\uparrow$ & \textbf{Runtime(s)}$\downarrow$ & \textbf{Step}$\downarrow$ & \textbf{NFE}$\downarrow$ & \textbf{VRAM(MiB)}$\downarrow$ \\
\midrule
\multirow{5}{*}{\shortstack[c]{Multi-step \\ ($\ge$ 20 steps)}}
& DDIM + MasaCtrl & 28.79 & 21.25 & 8.58 & 80.11 & 106.59 & 24.13 & 21.13 & 55.19 & 50 & 100 & 12272 \\
& Direct Inversion + MasaCtrl & 24.46 & 21.78 & 7.99 & 81.74 & 87.38 & 24.42 & 21.38 & 79.10 & 50 & 100 & 12272 \\
& DDIM + PnP & 28.20 & 21.26 & 8.42 & 78.90 & 113.58 & 25.45 & 22.54 & 28.01 & 50 & 100 & 9262 \\
& Direct Inversion + PnP & 24.27 & 21.43 & 8.10 & 79.52 & 106.26 & 25.48 & 22.63 & 28.03 & 50 & 100 & 9262 \\
& FlowEdit (SD3) & 12.34 & 22.17 & 7.69 & 83.54 & 104.81 & \textbf{26.64} & \textbf{23.69} & 7.22 & 33 & 33 & 17140 \\
\midrule
\multirow{3}{*}{\shortstack[c]{Few-step \\ (4 steps)}}
& TurboEdit (SDXL-Turbo) & 13.80 & 21.44 & 9.49 & 80.08 & 108.60 & 24.66 & 21.79 & 2.69 & \underline{4} & 4 & 13826 \\
& InfEdit (SD1.4) & 17.06 & \textbf{24.14} & 6.82 & \textbf{85.02} & \underline{55.69} & 24.89 & 21.88 & 1.41 & \underline{4} & 4 & \underline{6502} \\
& InstantEdit (PeRFlow-SD1.5) & \underline{7.14} & \underline{23.80} & \textbf{4.21} & \underline{84.84} & 60.92 & 24.97 & 21.82 & 1.30 & \underline{4} & 8 & 16270 \\
\midrule
\multirow{9}{*}{\shortstack[c]{One-step}}
& SwiftEdit (SwiftBrush-v2) & 12.96 & 21.71 & 8.22 & 74.84 & 91.22 & 24.93 & 21.85 & \underline{0.54} & \textbf{1} & \underline{2} & 15060 \\
\cmidrule(lr){2-13}
& ChordEdit (Naive, InstaFlow) & 13.32 & 22.05 & 10.45 & 73.49 & 103.33 & 22.97 & 20.19 & \textbf{0.38} & \textbf{1} & \underline{2} & \textbf{6198} \\
& ChordEdit (Our, InstaFlow) & \textbf{6.33} & 23.05 & \underline{5.45} & 82.49 & \textbf{53.33} & 24.17 & 21.39 & \textbf{0.38} & \textbf{1} & \underline{2} & \textbf{6198} \\
\cmidrule(lr){2-13}
& ChordEdit (Naive, SwiftBrush-v2) & 16.33 & 20.52 & 17.17 & 73.42 & 127.43 & 23.78 & 21.06 & \textbf{0.38} & \textbf{1} & \underline{2} & 6988 \\
& ChordEdit (Our, SwiftBrush-v2) & 12.96 & 22.04 & 7.13 & 75.84 & 111.22 & 25.12 & 22.58 & \textbf{0.38} & \textbf{1} & \underline{2} & 6988 \\
\cmidrule(lr){2-13}
& ChordEdit (Naive, SD-Turbo) & 25.44 & 21.38 & 9.73 & 74.39 & 131.30 & 25.11 & 21.96 & \textbf{0.38} & \textbf{1} & \underline{2} & 6988 \\
& ChordEdit (Naive w/o prox, SD-Turbo) & 19.18 & 21.89 & 10.84 & 77.24 & 105.27 & 23.68 & 20.83 & \textbf{0.20} & \textbf{1} & \textbf{1} & 6988 \\
& \textbf{ChordEdit (Ours, SD-Turbo)} & 16.58 & 22.20 & 6.84 & 75.91 & 128.25 & \underline{25.58} & \underline{22.96} & \textbf{0.38} & \textbf{1} & \underline{2} & 6988 \\
& \textbf{ChordEdit (Ours w/o prox, SD-Turbo)} & \underline{10.37} & 23.89 & 5.05 & 81.24 & 88.36 & 24.97 & 21.87 & \textbf{0.20} & \textbf{1} & \textbf{1} & 6988 \\
\bottomrule
\end{tabular}%
}
\end{table*}

\section{More Quantitative Results}
\label{sec:more_quant_results}

Table~\ref{tab:main_comparison_final_supp} provides the full, unabridged quantitative results that underpin the claims made in the main paper. This detailed table expands upon the main paper's Table~1 by including additional, fine-grained metrics for background and structural preservation: Structural Distance (Struct. Dist.), Structural Similarity Index (SSIM), and Perceptual Similarity (LPIPS). This data provides a comprehensive view of our method's performance and allows for a deeper analysis.

\paragraph{Validating the Chord Control Field.}
The primary claim of our work is that the naive single-step editing field ($\delta=0$) is unstable, and our Chord Control Field ($\delta > 0$) fundamentally resolves this. The detailed metrics in Table~\ref{tab:main_comparison_final_supp} provide overwhelming evidence for this claim. When comparing our method (Ours) against the baseline (Naive) across all three models, our method consistently demonstrates superior structural and perceptual fidelity. The Naive rows show significantly higher structural distortion and perceptual error compared to our Ours rows, especially in the LPIPS and Struct. Dist. columns. This trend holds true across all preservation metrics, proving that our temporal smoothing (when $\delta > 0$) is critical for preserving background consistency, not just in pixel space (PSNR), but also in structural (SSIM) and deep-feature (LPIPS) space.

\paragraph{Decoupling Transport (NFE=1) and Refinement (NFE=2).}
The full table explicitly demonstrates our framework's core design: the decoupling of consistency-preserving transport (NFE=1) from semantic-boosting refinement (+1 NFE). This addresses any concerns regarding the NFE=2 configuration. Let us compare the \textbf{Ours (w/o prox, SD-Turbo)} (NFE=1) variant against the full \textbf{Ours (SD-Turbo)} (NFE=2) variant. The data shows a clear and intentional trade-off. The \textbf{Ours (w/o prox)} variant consistently achieves the best scores across the full suite of background and structural preservation metrics, including Struct. Dist., PSNR, SSIM, and LPIPS. This is the pure, low-energy transport. By adding the optional refinement step, the full \textbf{Ours} (NFE=2) variant trades a predictable amount of this preservation to achieve a significant boost in semantic alignment, as measured by the CLIP-Edited score. This data strongly supports our claim that \textbf{ChordEdit} is a modular framework. Users can choose the NFE=1 variant for maximum fidelity and true one-step performance, or the NFE=2 variant for the best overall semantic alignment, fully vindicating the design presented in the main paper.

\paragraph{Efficiency and Model Agnosticism.}
Finally, the table confirms our claims of efficiency and broad applicability. The performance gains of \textbf{Ours} over \textbf{Naive} are consistent across \textbf{InstaFlow}, \textbf{SwiftBrush-v2}, and \textbf{SD-Turbo}, confirming the model-agnostic nature of our Chord Control Field. Furthermore, our VRAM footprint is shown to be exceptionally low, representing a significant practical advantage over memory-intensive methods like \textbf{SwiftEdit} or \textbf{FlowEdit}, making our method far more accessible for real-time applications.

\section{More Qualitative Results}

We provide additional qualitative comparisons in Figure~\ref{fig:supp_sota_1} and Figure~\ref{fig:supp_sota_2}. These examples further support the quantitative findings in the main paper. 

Across a diverse set of editing prompts, our method consistently produces high-fidelity results that adhere to the target prompt while maintaining exceptional background preservation. This contrasts sharply with multi-step methods, which often introduce undesirable artifacts or fail to preserve the subject's identity (e.g., \textit{Direct Inversion+PnP}), and other few-step methods that struggle to balance semantic accuracy with structural consistency. As demonstrated, our method successfully avoids the catastrophic distortions and background collapse seen in naive one-step approaches, validating the stability of our Chord Control Field.

\section{Societal Impacts}
\label{sec:societal_impacts}

The development of high-fidelity, real-time generative models like ChordEdit presents significant opportunities while also necessitating a discussion of societal and ethical considerations.

\paragraph{Positive Applications and Broader Impacts.}
Our primary motivation for developing ChordEdit is to democratize high-end creative tools. The method's core advantages---its speed, resource efficiency (low VRAM), and model-agnostic, training-free nature---make powerful, real-time generative editing accessible to a broader audience, including those without specialized high-end hardware. We envision our work empowering artists, designers, content creators, and hobbyists by providing an intuitive and responsive tool for rapid prototyping, creative exploration, and concept visualization. For example, a designer could instantly visualize different seasonal aesthetics for a landscape (e.g., "fall" to "spring"), or a casual user could easily modify personal photos in real-time. This reduces the barrier to entry for complex image manipulation, fostering greater creative expression.

\paragraph{Ethical Considerations and Potential for Misuse.}
Like all high-fidelity generative models, ChordEdit carries the risk of misuse. As acknowledged in the main paper, the ability to create realistic and consistent edits in real-time could be exploited to generate deceptive content, spread misinformation, or create malicious media. The high quality and structural preservation of our method might make such forgeries more convincing.

Furthermore, ChordEdit is a training-free method that operates on pre-trained text-to-image models. It does not, by itself, correct any inherent societal biases (e.g., related to race, gender, or culture) that may be present in these foundational models. As such, edits performed by ChordEdit may reflect or even amplify these underlying biases, depending on the prompts and the backbone model used.

\paragraph{Mitigation and Author Statement.}
We strongly condemn the use of our technology for any deceptive or harmful purpose. Our work is intended for creative and assistive applications, aimed at augmenting human creativity, not replacing it or enabling deception. We believe the best mitigation strategy lies in the concurrent development of robust detection tools for synthetic media, as well as in fostering public awareness and critical media literacy. We encourage the research community to continue to prioritize the development of ethical guidelines and safeguards alongside the advancement of generative capabilities.

\begin{figure}[t]
 \centering
 \includegraphics[width=0.9\linewidth]{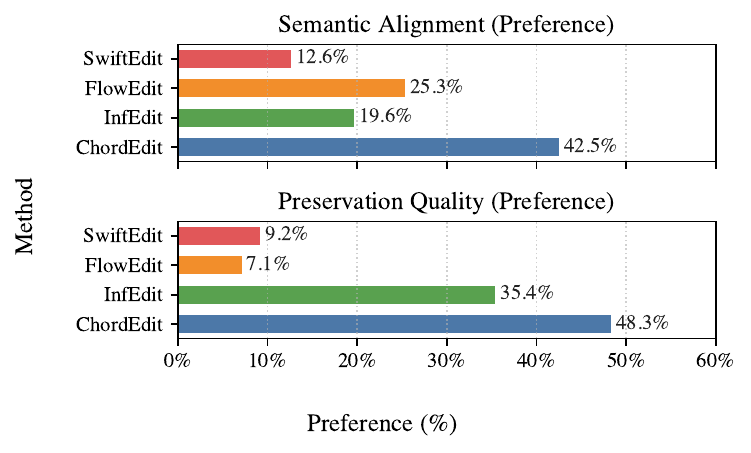}
 \caption{\textbf{User Study Results.} Aggregated human preference rates from a four-way blind comparison, matching the data cited in the main paper. ChordEdit was the clear winner in \emph{both} Semantic Alignment (42.5\%) and Preservation Quality (48.3\%), demonstrating its superior overall performance.}
 \label{fig:user_study_results}
\end{figure}

\begin{algorithm}[t]
\caption{ChordEdit (Multi-noise $n > 1$ version)}
\label{alg:chordedit_multi_noise}
\begin{algorithmic}[1]
\State \textbf{Inputs:} source image $x_{\rm src}$; prompts $c_{\rm src}, c_{\rm tar}$; step time $t$; window $\delta$; step scale $\lambda$; Proximal Refinement time $t_{\rm c}$; \textbf{number of noise samples $n$}.
\State \textbf{Output:} edited image $x_{\rm tar}$.
\State \textbf{Init:} $x_{\rm in} \leftarrow x_{\rm src}$
\State $\hat u_{\rm sum} \leftarrow 0$ 

\For{$i = 1$ \textbf{to} $n$}
 \State $\mathbf{R}_{t-\delta} \leftarrow \mathbf{R}(x_{\rm in}, t-\delta)$ 
 \State $R_t \leftarrow R(x_{\rm in}, t)$
 
 \State $\hat u_i \leftarrow \frac{t\,\mathbf{R}_{t-\delta} + \delta\,\mathbf{R}_t}{t+\delta}$
 \State $\hat u_{\rm sum} \leftarrow \hat u_{\rm sum} + \hat u_i$
\EndFor

\State $\hat u_{\rm avg} \leftarrow \hat u_{\rm sum} / n$ 
\State $x^{\rm pred} \leftarrow x_{\rm in} + \lambda\,\hat u_{\rm avg}$ 
\State $x_{\rm tar} \leftarrow \operatorname{prox}\left(x^{\mathrm{pred}},\, t_{\mathrm c},\, c_{\mathrm{tar}}\right)$
\State \textbf{Return} $x_{\rm tar}$
\end{algorithmic}
\end{algorithm}

\section{User Study}
\label{sec:user_study}

To complement our quantitative analyses, we conducted a formal user study to assess human perceptual preference. An example of the evaluation form shown to participants is provided in Figure~\ref{fig:user_study_from}. Automated metrics often fail to capture the holistic "quality" or "naturalness" of an edit, so this study was designed to validate our core claim: ChordEdit produces edits that are semantically accurate, and also more structurally consistent and artifact-free than competing methods.

For the study setup, we recruited 150 participants with diverse backgrounds. We presented them with a four-way blind comparison, showing the original image, a text prompt, and four edited results from ChordEdit, InfEdit, FlowEdit, and SwiftEdit. The order of all results was randomized to prevent bias.

We asked participants to vote for the single best image based on two independent criteria. The first was Semantic Alignment, judging which image best matched the text prompt's meaning. The second was Preservation Quality, judging which image looked most natural and best preserved the background and non-edited regions, with the fewest artifacts.

The study yielded 4,500 total votes (150 participants $\times$ 30 prompts) for each criterion. The results, shown in Figure~\ref{fig:user_study_results}, confirm the data cited in the main paper and show a clear preference for ChordEdit.

For Semantic Alignment, ChordEdit was the clear winner, preferred in \textbf{42.5\%} of comparisons. This significantly outperformed FlowEdit (25.3\%), while InfEdit (19.6\%) and SwiftEdit (12.6\%) lagged considerably.Proposition 4

For Preservation Quality, ChordEdit also achieved the top position, securing \textbf{48.3\%} of the vote. This result is particularly compelling, as it shows our method was perceived as more stable and artifact-free than even InfEdit (35.4\%), a baseline renowned for its high preservation. FlowEdit (7.1\%) and SwiftEdit (9.2\%) were frequently penalized by users for artifacts and distortion.

In conclusion, the user study strongly validates our claims. ChordEdit was the only method to rank first in both categories. Other methods force a compromise—excelling at either semantics (FlowEdit) or preservation (InfEdit) but failing at the other. ChordEdit was the most preferred for both, confirming it provides the best overall perceptual quality and proving our low-energy, stable transport field translates directly to the most desirable result for human observers.

\section{ChordEdit algorithm with multi-noise}

For maximum efficiency, our core ChordEdit algorithm presented in the main paper uses a single noise sample ($n=1$). However, our framework can be directly extended to support multiple Monte Carlo (MC) noise samples ($n > 1$) to theoretically further reduce the estimation variance.

Algorithm \ref{alg:chordedit_multi_noise} provides the pseudocode for this multi-noise version. The key difference is that we first compute the Chord Control Field $\hat{u}_i$ independently for $n$ different noise samples, then average these fields, and finally use this averaged field $\hat{u}_{\rm avg}$ to perform the single-step transport and subsequent proximal refinement.

\onecolumn

\section{Symbols Table}

\begin{longtable}[t]{@{}p{0.25\textwidth} p{0.7\textwidth}@{}}
\caption{Symbols Table} \label{tab:symbol_glossary} \\
\toprule
\textbf{Symbol} & \textbf{Description} \\
\midrule
\endfirsthead

\multicolumn{2}{c}%
{{\tablename\ \thetable{} -- continued from previous page}} \\
\toprule
\textbf{Symbol} & \textbf{Description} \\
\midrule
\endhead

\bottomrule
\multicolumn{2}{r}{{Continued on next page}} \\
\endfoot

\endlastfoot

$t \in [0, 1]$ & Time variable for the probability flow. $t=1$ corresponds to the source data distribution, and $t=0$ to the target noise distribution. \\
$x_t$ & The image state at time $t$. \\
$c$ & The text condition (prompt). \\
$c_{\rm src}, c_{\rm tar}$ & The source and target text prompts, respectively. \\
$x_{\rm src}, x_{\rm tar}$ & The source image ($x_1$) and the desired target image ($x_0$), respectively. \\
$p_t(x \mid c)$ & The probability distribution of $x_t$ at time $t$ conditioned on $c$. \\
$p_1, p_0$ & The source data distribution ($t=1$) and the target noise distribution ($t=0$). \\
$z$ & A synthetic noisy proxy state used to query the model. \\
$x_\tau$ & The editing anchor, fixed to the clean source image $x_1$ (i.e., $x_\tau := x_{\rm src}$). \\

$v(x_t, t, c)$ & The drift of the conditional probability flow induced by the pre-trained T2I model. \\
$\Delta v(x_t, t)$ & The instantaneous residual field, defined as $v(\cdot, c_{\rm tar}) - v(\cdot, c_{\rm src})$. \\
$Q(z, t, c)$ & The model's observable output (e.g., noise prediction, velocity) at noisy state $z$. \\
$\Delta Q(z, t)$ & The conditional residual of the observable, $Q(\cdot, c_{\rm tar}) - Q(\cdot, c_{\rm src})$. \\
$\hat\epsilon_\theta, \mathbf{v}_\theta$ & The predicted noise and predicted velocity, respectively, from a model. \\
$K_t(\cdot \mid x_\tau)$ & The forward noising kernel that maps the anchor $x_\tau$ to a noisy state $z$ at time $t$. \\
$\mathcal{B}_t$ & A time-only linear map that projects the model's output $Q$ into a unified comparison domain (velocity units). \\
$A_t^{(\epsilon)}, A_t^{(x_0)}, A_t^{(v)}, A_t^{(\mathrm{score})}$ &
Specific coefficients defining $\mathcal{B}_t$ for noise, $x_0$, $v$-, and score-to-drift parameterizations. \\
$\alpha(t), \sigma(t)$ & Coefficients of the noise schedule for the forward path $x_t = \alpha(t)x_0 + \sigma(t)\epsilon$. \\
$\beta(t)$ & Continuous-time noise schedule parameter (related to $\alpha(t)$). \\

$u_t(x)$ & The ideal, low-energy editing vector field that drives the transport from $\rho_1$ to $\rho_0$. \\
$\mathbf{R}(x_\tau, t)$ & The observable proxy field; the expected value of the mapped observable residual ($\mathbb{E}[\mathcal{B}_t \Delta Q]$). \\
$\varepsilon_t, \eta(t)$ & Zero-mean noise terms in the measurement model $\mathbf{R} = u_t + \varepsilon_t$. \\
$u_{\rm nai}$ & The naive control field, which simply uses the proxy field: $u_{\rm nai} = \mathbf{R}$. \\
$\hat u_t(x_\tau)$ & The Chord Control Field: a locally smoothed, low-energy estimator for $u_t$. \\
$\delta$ & The temporal window size for smoothing, $[t-\delta, t]$. \\
$\Phi_t(u; x_\tau)$ & The strictly convex quadratic surrogate objective minimized to find the Chord Control Field. \\
$u_t^\star(x_\tau)$ & The exact, integral-form minimizer of $\Phi_t$. \\
$K_\delta(s)$ & The causal smoothing kernel that defines $\hat u$ as a convolution: $\hat u \approx K_\delta * \mathbf{R}$. \\
$\lambda$ & The step scale, controlling the magnitude of the applied edit transport. \\
$x^{\rm pred}$ & The predicted image after the single transport step. \\
$\operatorname{prox}(\cdot)$ & The optional proximal refinement step. \\
$t_c$ & The time parameter used for the proximal refinement step. \\
$n$ & The number of Monte Carlo noise samples used in the estimation. \\

$\rho_t(x)$ & The transport density, evolving from $\rho_1 = p_1(\cdot \mid c_{\rm src})$ to $\rho_0 = p_0(\cdot \mid c_{\rm tar})$. \\
$\mathcal{E}[u; \rho]$ & The Benamou–Brenier kinetic energy functional, $\int \int \frac{1}{2}\|u_t(x)\|^2 \rho_t(x) dx dt$. \\
$\bar{E}$ & The discrete, unweighted Benamou–Brenier kinetic energy. \\
$u^\star$ & The true, energy-minimizing optimal control field. \\
$h$ & The step size for numerical integration (in one-step editing, $h=1$). \\
$\tau_{n+1}$ & The one-step local truncation error of the numerical integrator. \\
$f(x, t)$ & The editing vector field of the ODE, $f(x,t)=u(x,t)$. \\
$M_f$ & Bound on the derivatives of $f$, related to local error. \\
$\mathcal{C}(u; \mathcal{U})$ & A computable proxy for the consistency constant. \\
$C_{\rm cho}, C_{\rm nai}$ & The consistency constants of the underlying ODE for the Chord and Naive fields. \\
$L_u, M_u$ & Bounds on the spatial and temporal derivatives of $u$, used for global error analysis. \\
$e_n^u$ & The global error of the numerical solution at step $n$. \\
$P_\delta$ & The $L^2_\rho$-orthogonal projection onto the subspace of "chord" functions. \\
$S$ & The total number of discrete integration steps (step count). \\
$s$ & The step index, $s \in \{1,\dots,S\}$. \\
\bottomrule
\end{longtable}

\begin{figure*}[b]
 \centering
\fbox{\includegraphics[width=0.65\linewidth]{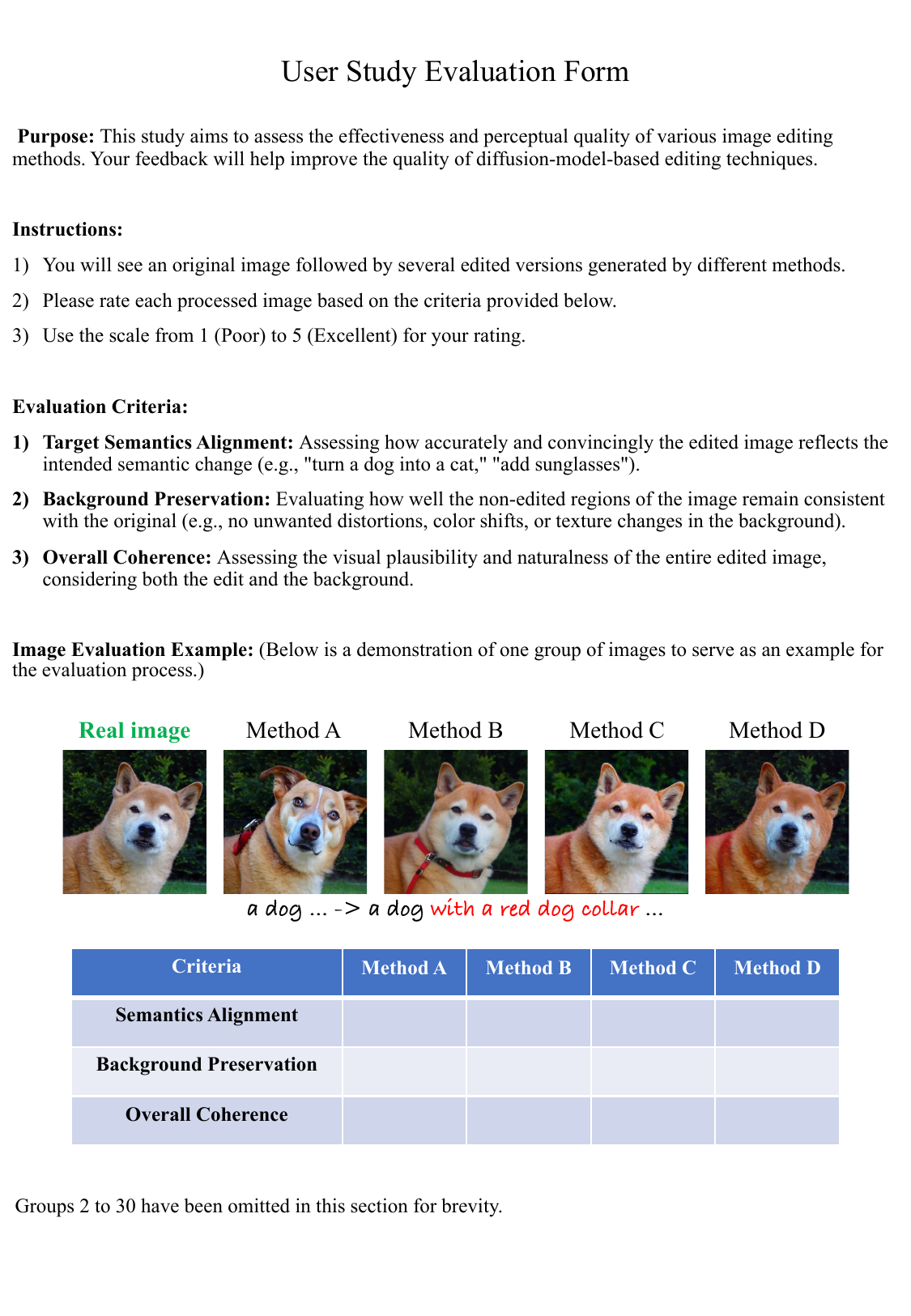}}
 \caption{\textbf{User Study Form Example.}}
 \label{fig:user_study_from}
\end{figure*}

\begin{figure*}[t]
 \centering
 \includegraphics[width=\linewidth]{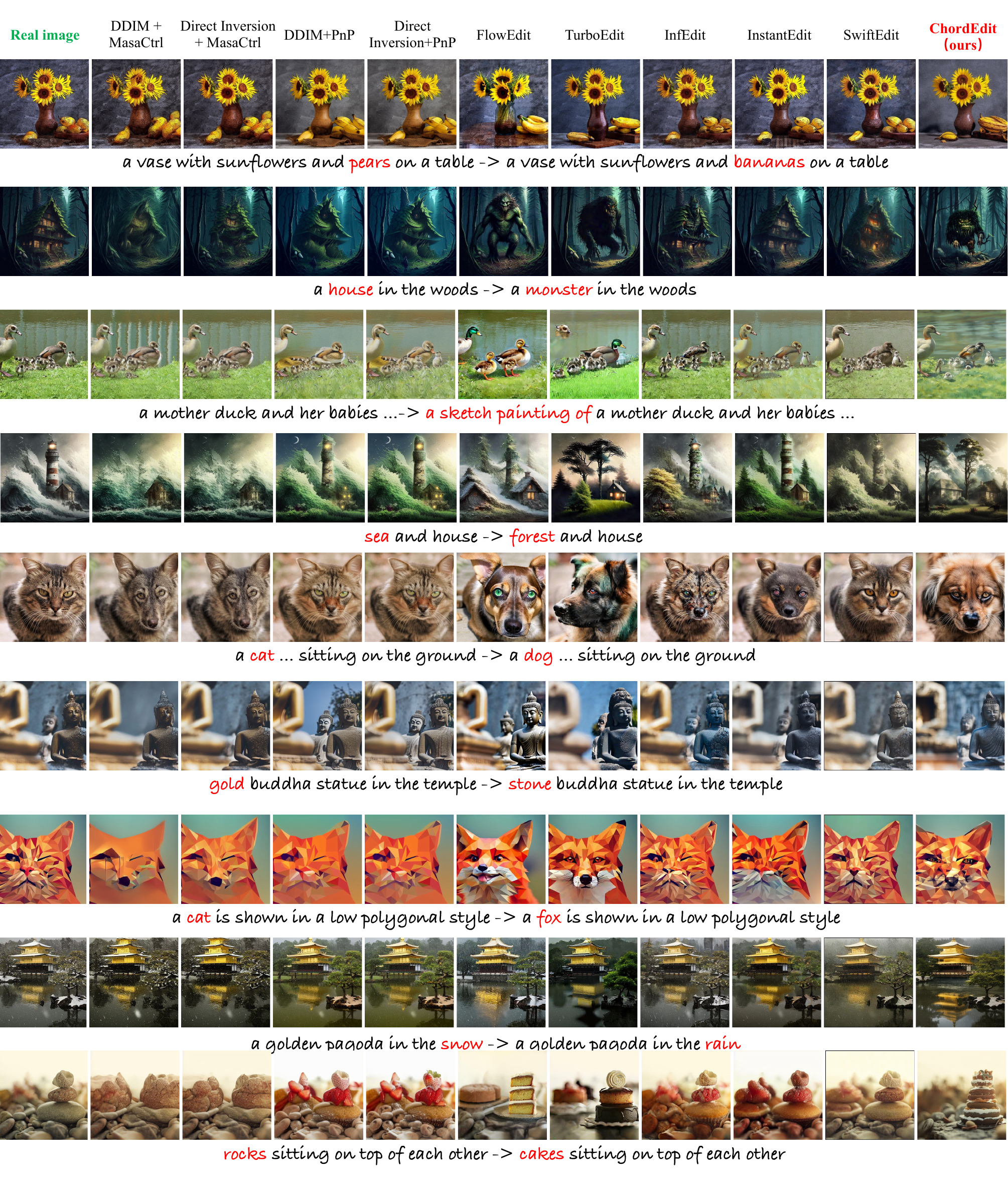}
 \caption{Comparison of Methods.}
 \label{fig:supp_sota_1}
\end{figure*}

\begin{figure*}[t]
 \centering
 \includegraphics[width=\linewidth]{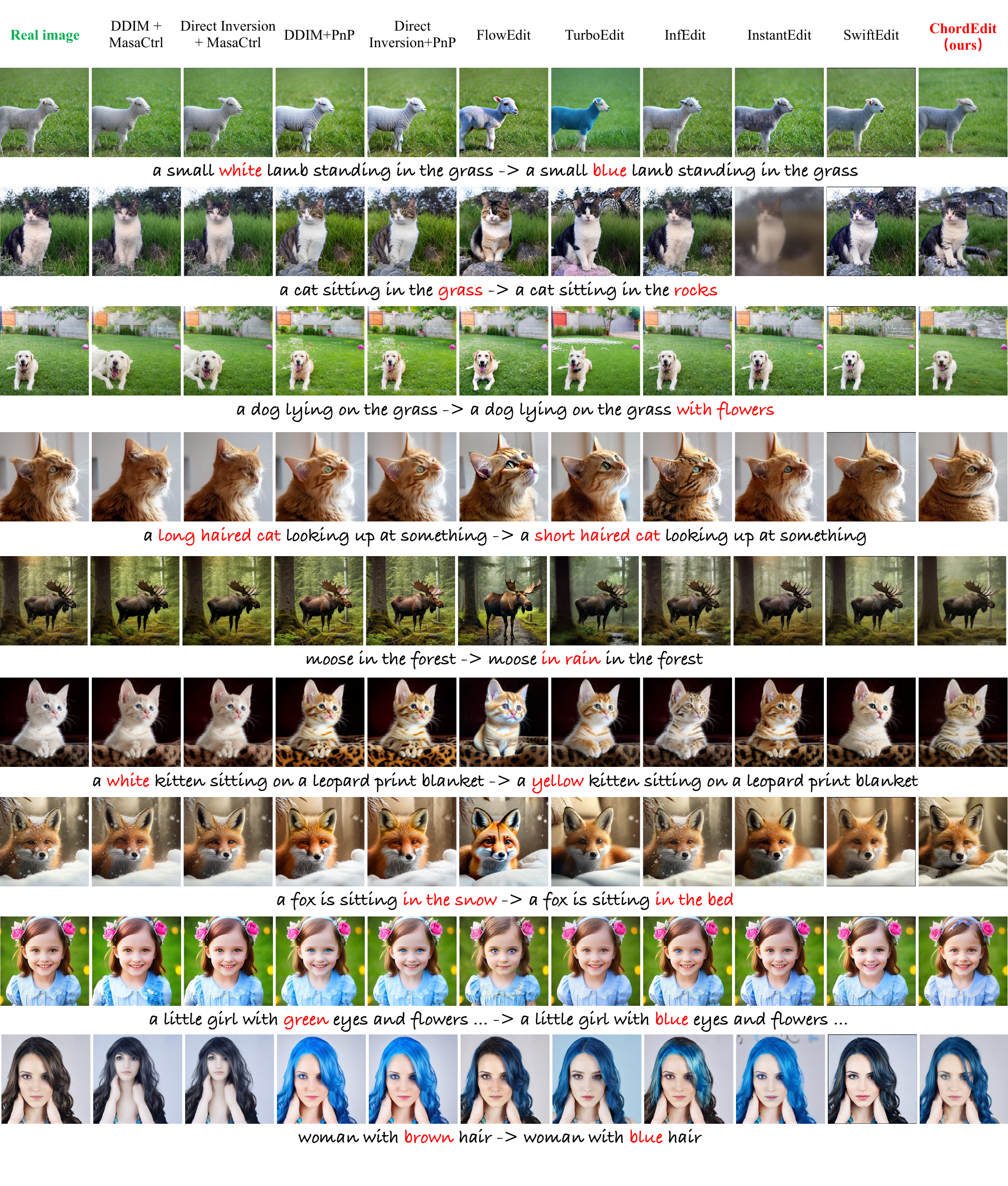}
 \caption{Comparison of Methods.}
 \label{fig:supp_sota_2}
\end{figure*}


\end{document}